\newcommand{\alg}{{IGLU}\xspace}
\newcommand{\revision}[1]{\textcolor{black}{#1}}
\title {\alg: Efficient GCN Training via Lazy Updates}
\newcommand*\samethanks[1][\value{footnote}]{\footnotemark[#1]}
\author{%
  S Deepak Narayanan\thanks{Authors contributed equally}\ \ \thanks{Now at ETH Zurich}\ \ \& Aditya Sinha\samethanks[1]\ \ \thanks{Now at Google Research India}\\
  Microsoft Research India\\
  \texttt{\{sdeepaknarayanan1,adityaasinha28\}@gmail.com}
   \And
   Prateek Jain\samethanks[3]\\
   Microsoft Research India\\
   \texttt{prajain@google.com}
   \And
   Purushottam Kar\\
   IIT Kanpur \& Microsoft Research India\\
   \texttt{purushot@cse.iitk.ac.in} \\
   \And
   Sundararajan Sellamanickam\\
   Microsoft Research India\\
   \texttt{ssrajan@microsoft.com} \\
}
\begin{document}
\maketitle

\begin{abstract}
Training multi-layer Graph Convolution Networks (GCN) using standard SGD techniques scales poorly as each descent step ends up updating node embeddings for a large portion of the graph. Recent attempts to remedy this sub-sample the graph that reduce compute but introduce additional variance and may offer suboptimal performance. This paper develops the \alg method that caches intermediate computations at various GCN layers thus enabling lazy updates that significantly reduce the compute cost of descent. \alg introduces bounded bias into the gradients but nevertheless converges to a first-order saddle point under standard assumptions such as objective smoothness. Benchmark experiments show that \alg offers up to 1.2\% better accuracy despite requiring up to 88\% less compute.
\end{abstract}

\section{Introduction}
\label{sec:intro}
The Graph Convolution Network (GCN) model is an effective graph representation learning technique. Its ability to exploit network topology offers superior performance in several applications such as node classification \citep{gcn}, recommendation systems \citep{recsys} and program repair \citep{drrepair}. However, training multi-layer GCNs on large and dense graphs remains challenging due to the very aggregation operation that enables GCNs to adapt to graph topology -- a node's output layer embedding depends on embeddings of its neighbors in the previous layer which recursively depend on embeddings of their neighbors in the previous layer, and so on. Even in GCNs with 2-3 layers, this prompts back propagation on loss terms for a small mini-batch of nodes to update a large multi-hop neighborhood causing mini-batch SGD techniques to scale poorly.

Efforts to overcome this problem try to limit the number of nodes that receive updates as a result of a back-propagation step \cite{clustergcn,graphsage,graphsaint}. This is done either by sub-sampling the neighborhood or clustering ({it is important to note the distinction between nodes sampled to create a mini-batch and neighborhood sampling done to limit the neighborhood of the mini-batch that receives updates}). Variance reduction techniques \cite{vrgcn} attempt to reduce the additional variance introduced by neighborhood sampling. However, these techniques often require heavy subsampling in large graphs resulting in poor accuracy due to insufficient aggregation. They also do not guarantee unbiased learning or rigorous convergence guarantees. See Section~\ref{sec:related} for a more detailed discussion on the state-of-the-art in GCN training. 

\textbf{Our Contributions}: This paper presents \alg, an efficient technique for training GCNs based on lazy updates. An analysis of the gradient structure in GCNs reveals the most expensive component of the back-propagation step initiated at a node to be (re-)computation of forward-pass embeddings for its vast multi-hop neighborhood. Based on this observation, \alg performs back-propagation with significantly reduced complexity using intermediate computations that are cached at regular intervals. This completely avoids neighborhood sampling and is a stark departure from the state-of-the-art. \alg is architecture-agnostic and can be readily implemented on a wide range of GCN architectures. Avoiding neighborhood sampling also allows \alg to completely avoid variance artifacts and offer provable convergence to a first-order stationary point under standard assumptions. In experiments, \alg offered superior accuracies and accelerated convergence on a range of benchmark datasets.

\section{Related Works}
\label{sec:related}
\citep{orig,defferrard2016convolutional, gcn} introduced the GCN architecture for transductive learning on graphs. Later works extended to inductive settings and explored architectural variants such as the GIN \citep{gin}. Much effort has focused on speeding-up GCN training.

\textbf{Sampling Based Approaches}: The \emph{neighborhood sampling} strategy e.g. GraphSAGE \citep{graphsage} limits compute by restricting back-propagation updates to a sub-sampled neighborhood of a node. \emph{Layer sampling} strategies such as FastGCN \citep{fastgcn}, LADIES \citep{ladies} and ASGCN \citep{asgcn} instead sample nodes at each GCN layer using importance sampling to reduce variance and improve connectivity among sampled nodes. FastGCN uses the same sampling distribution for all layers and struggles to maintain connectivity unless large batch-sizes are used. LADIES uses a per-layer distribution conditioned on nodes sampled for the succeeding layer. ASGCN uses a linear model to jointly infer node importance weights. Recent works such as Cluster-GCN \citep{clustergcn} and GraphSAINT \citep{graphsaint} propose \emph{subgraph sampling} creating mini-batches out of subgraphs and restricting back-propagation to nodes within the subgraph. To avoid losing too many edges, large mini-batch sizes are used. Cluster-GCN performs graph clustering and chooses multiple clusters per mini-batch (reinserting any edges cutting across clusters in a mini-batch) whereas GraphSAINT samples large subgraphs directly using random walks. 

\textbf{Bias and Variance}: Sampling techniques introduce bias as non-linear activations in the GCN architecture make it difficult to offer unbiased estimates of the loss function. \cite{graphsaint} offer unbiased estimates if non-linearities are discarded. Sampling techniques also face increased variance for which variance-reduction techniques have been proposed such as VR-GCN \citep{vrgcn}, MVS-GNN \citep{mvsgcn} and AS-GCN \citep{asgcn}. VR-GCN samples nodes at each layer whose embeddings are updated and uses stale embeddings for the rest, offering variance elimination in the limit under suitable conditions. MVS-GNN handles variance due to mini-batch creation by performing importance weighted sampling to construct mini-batches. The Bandit Sampler \citep{liu2020bandit} formulates variance reduction as an adversarial bandit problem.

\textbf{Other Approaches:} Recent approaches decouple propagation from prediction as a pre-processing step e.g. PPRGo \citep{pprgo}, APPNP \citep{appnp} and SIGN \citep{frasca2020sign}. APPNP makes use of the relationship between the GCNs and PageRank to construct improved propagation schemes via personalized PageRank. PPRGo extends APPNP by approximating the dense propagation matrix via the push-flow algorithm. SIGN proposes inception style pre-computation of graph convolutional filters to speed up training and inference. GNNAutoScale \citep{fey2021gnnautoscale} builds on VR-GCN and makes use of historical embeddings for scaling GNN training to large graphs.

\textbf{\alg in Context of Related Work}: \alg avoids neighborhood sampling entirely and instead speeds-up learning using stale computations. Intermediate computations are cached and lazily updated at regular intervals e.g. once per epoch. {We note that \alg's \textit{caching} is distinct and much more aggressive (e.g. lasting an entire epoch) than the internal caching performed by popular frameworks such as TensorFlow and PyTorch (where caches last only a single iteration)}. Refreshing these caches in bulk offers \alg economies of scale. \alg incurs no sampling variance but incurs bias due to the use of stale computations. Fortunately, this bias is provably bounded, and can be made arbitrarily small by adjusting the step length and refresh frequency of the stale computations.

\section{\alg: effIcient Gcn training via Lazy Updates}
\label{sec:method}

\textbf{Problem Statement}: Consider the problem of learning a GCN architecture on an undirected graph $\cG(\cV, \cE)$ with each of the $N$ nodes endowed with an \emph{initial} feature vector $\vx^0_i \in \bR^{d_0}, i \in \cV$. $X^0 \in \bR^{n \times d_0}$ denotes the matrix of these initial features stacked together. $\cN(i) \subset \cV$ denotes the set of neighbors of node $i$. $A$ denotes the (normalized) adjacency matrix of the graph. A multi-layer GCN architecture uses a parameterized function at each layer to construct a node's embedding for the next layer using embeddings of that node as well as those of its neighbors. Specifically
\[
\vx_i^k = f(\vx_j^{k-1}, j \in \bc i \cup \cN(i); E^k),
\]
where $E^k$ denotes the parameters of $k$-th layer. For example, a classical GCN layer is given by
\[
\vx_i^k = \sigma\br{\sum_{j \in \cV}A_{ij}(W^k)^\top\vx_j^{k-1}},
\]
where $E^k$ is simply the matrix $W^k \in \bR^{d_{k-1} \times d_k}$ and $d_k$ is the embedding dimensionality at the $k\nth$ layer. \alg supports more involved architectures including residual connections, virtual nodes, layer normalization, batch normalization, etc (see Appendix~\ref{app:arch}). We will use $E^k$ to collectively refer to all parameters of the $k\nth$ layer e.g. offset and scale parameters in a layer norm operation, etc. $X^k \in \bR^{n \times d_k}$ will denote the matrix of $k\nth$ layer embeddings stacked together, giving us the handy shorthand $X^k = f(X^{k-1};E^k)$. Given a $K$-layer GCN and a multi-label/multi-class task with $C$ labels/classes, a fully-connected layer $W^{K+1} \in \bR^{d_K \times C}$ and activation functions such as sigmoid or softmax are used to get predictions that are fed into the task loss. \alg does not require the task loss to decompose over the classes. The convergence proofs only require a smooth training objective.

\textbf{Neighborhood Explosion}: To understand the reasons behind neighborhood explosion and the high cost of mini-batch based SGD training, consider a toy univariate regression problem with unidimensional features and a 2-layer GCN with sigmoidal activation i.e. $K = 2$ and $C = 1 = d_0 = d_1 = d_2$. This GCN is parameterized by $w^1, w^2, w^3 \in \bR$ and offers the output $\hat y_i = w^3 \sigma\br{z^2_i}$ where $z^2_i = \sum_{j \in \cV}A_{ij}w^2x_j^1 \in \bR$. In turn, we have $x_j^1 = \sigma\br{z^1_i}$ where $z^1_i = \sum_{j' \in \cV}A_{jj'}w^1x_{j'}^0 \in \bR$ and $x_{j'}^0 \in \bR$ are the initial features of the nodes. Given a task loss $\ell: \bR \times \bR \rightarrow \bR_+$ e.g. least squares, denoting $\ell'_i = \ell'(\hat y_i, y_i)$ gives us
\begin{align*}
	\frac{\partial\ell(\hat y_i, y_i)}{\partial w^1} &= \ell'_i\cdot\frac{\partial\hat y_i}{\partial z^2_i}\cdot\frac{\partial z^2_i}{\partial w^1} = \ell'_i\cdot w^3\sigma'(z^2_i)\cdot\sum_{j \in \cV}A_{ij}w^2\frac{\partial x_j^1}{\partial w^1}\\
	&= \ell'_i\cdot w^3\sigma'(z^2_i)\cdot\sum_{j \in \cV}A_{ij}w^2\sigma'(z^1_j)\cdot\sum_{j' \in \cV}A_{jj'}x_{j'}^0.
\end{align*}
The nesting of the summations is conspicuous and indicates the neighborhood explosion: when seeking gradients in a $K$-layer GCN on a graph with average degree $m$, up to an $m^{K-1}$-sized neighborhood of a node may be involved in the back-propagation update initiated at that node. Note that the above expression involves terms such as $\sigma'(z^2_i), \sigma'(z^1_j)$. Since the values of $z^2_i, z^1_j$ etc change whenever the model i.e. $\bc{w^1,w^2,w^3}$ receives updates, for a fresh mini-batch of nodes, terms such as $\sigma'(z^2_i), \sigma'(z^1_j)$ need to be computed afresh if the gradient is to be computed exactly. Performing these computations amounts to doing {\em forward pass operations} that frequently involve a large neighborhood of the nodes of the mini-batch. Sampling strategies try to limit this cost by directly restricting the neighborhood over which such forward passes are computed. However, this introduces both bias and variance into the gradient updates as discussed in Section~\ref{sec:related}. \alg instead lazily updates various \emph{incomplete gradient} (defined below) and node embedding terms that participate in the gradient expression. This completely eliminates sampling variance but introduces a bias due to the use of stale terms. However, this bias provably bounded and can be made arbitrarily small by adjusting the step length and frequency of refreshing these terms.

\begin{figure}[t]%
\includegraphics[width=\columnwidth]{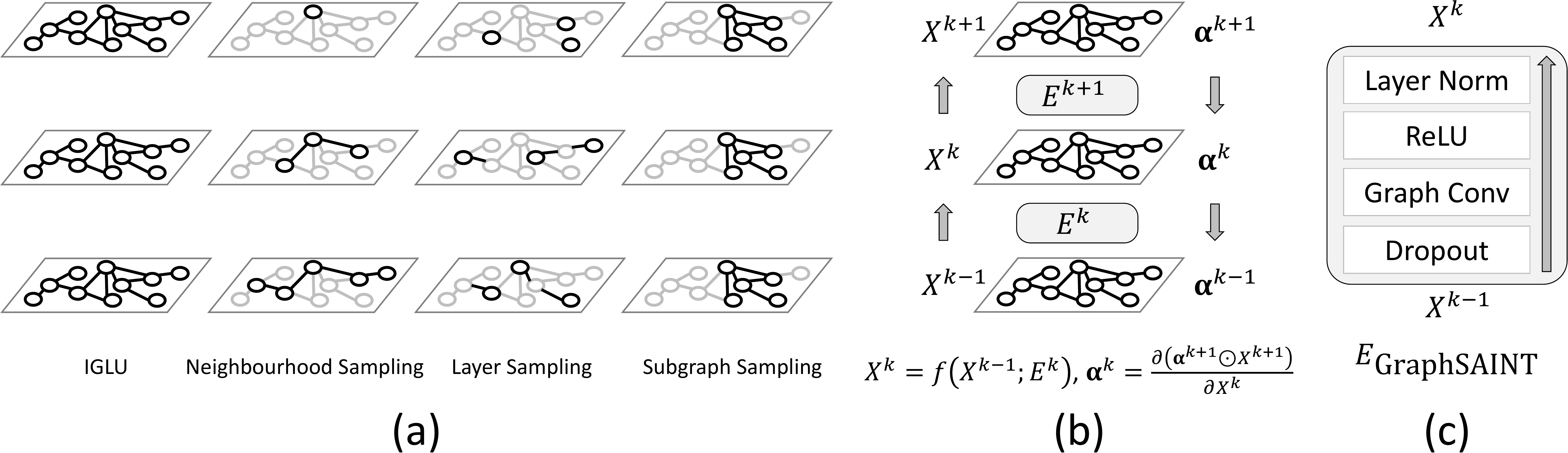}%
\caption{\small Fig~\ref{fig:method}(a) highlights the distinction between existing sampling-based approaches that may introduce bias and variance. \alg completely sidesteps these issues and is able to execute GCN back-propagation steps on the full graph owing to its use of lazy updates which offer no sampling variance and provably bounded bias. Fig~\ref{fig:method}(b) summarizes the quantities useful for \alg's updates. \alg is architecture-agnostic and can be readily used with wide range of architectures. Fig~\ref{fig:method}(c) gives an example layer architecture used by GraphSAINT.}%
\label{fig:method}%
\end{figure}

\textbf{Lazy Updates for GCN Training}: Consider an arbitrary GCN architecture with the following structure: for some parameterized layer functions we have $X^k = f(X^{k-1};E^k)$ where $E^k$ denotes the collection of all parameters of the $k\nth$ layer e.g. weight matrices, offset and scale parameters used in layer norm operations, etc. $X^k \in \bR^{N \times d_k}$ denotes the matrix of $k\nth$ layer embeddings stacked together and $X^0 \in \bR^{N \times d_0}$ are the initial features. For a $K$-layer GCN on a multi-label/multi-class task with $C$ labels/classes, a fully-connected layer $W^{K+1} \in \bR^{d_K \times C}$ is used to offer predictions $\hat\vy_i = (W^{K+1})^\top\vx^K_i \in \bR^C$. We use the shorthand $\hat\vY \in \bR^{N \times C}$ to denote the matrix where the predicted outputs $\hat\vy_i$ for all the nodes are stacked. We assume a task loss function $\ell: \bR^C \times \bR^C \times \bR_+$ and use the abbreviation $\ell_i := \ell(\hat y_i, y_i)$. The loss function need not decompose over the classes and can thus be assumed to include activations such as softmax that are applied over the predictions $\hat\vy_i$. Let $\cL = \sum_{i \in \cV}\ell_i$ denote the training objective. The convergence proofs assume that $\cL$ is smooth.

{\textbf{Motivation}: We define the the \emph{loss derivative} matrix $\vG = [g_{ic}] \in \bR^{N \times C}$ with $g_{ic} := \frac{\partial \ell_i}{\partial \hat y_{ic}}$. As the proof of Lemma~\ref{lem:main} (see Appendix \ref{sec:proofs}) shows, the loss derivative with respect to parameters $E^k$ at any layer has the form $\frac{\partial{\mathcal L}}{\partial E^k} = \sum_{j=1}^N\sum_{p=1}^{d_k} \br{\sum_{i \in \cV}\sum_{c \in [C]} g_{ic}\cdot\frac{\partial \hat y_{ic}}{\partial X^k_{jp}}}\frac{\partial X^k_{jp}}{\partial E^k}$. Note that the partial derivatives $\frac{\partial X^k_{jp}}{\partial E^k}$ can be computed for any node using only embeddings of its neighbors in the $(k-1)\nth$ layer i.e. $X^{k-1}$ thus avoiding any neighborhood explosion. This means that neighborhood explosion must be happening while computing the terms encapsulated in the round brackets. Let us formally recognize these terms as \emph{incomplete gradients}. The notation $\left.\frac{\partial P}{\partial Q}\right|_R$ denotes the partial derivative of $P$ w.r.t $Q$ while keeping $R$ fixed i.e. treated as a constant.}
\begin{definition}
\label{defn:alphak}
For any layer $k \leq K$, define its \emph{incomplete task gradient} to be $\valpha^k = [\alpha^k_{jp}] \in \bR^{N \times d_k}$, 
\[
    \alpha^k_{jp} := \left.\frac{\partial(\vG\odot\hat\vY)}{\partial X^k_{jp}}\right|_{\vG} = \sum_{i\in\cV}\sum_{c \in [C]} g_{ic}\cdot\frac{\partial\hat y_{ic}}{\partial X^k_{jp}}
\]
\end{definition}%
The following lemma completely characterizes the loss gradients and also shows that the incomplete gradient terms $\valpha^k, k \in [K]$ can be efficiently computed using a recursive formulation that also does not involve any neighborhood explosion.
\begin{lemma}
\label{lem:main}
The following results hold whenever the task loss $\cL$ is differentiable: 
\begin{enumerate}
	\item For the final fully-connected layer we have $\frac{\partial\cL}{\partial W^{K+1}} = (X^K)^\top\vG$ as well as for any $k \in [K]$ and any parameter $E^k$ in the $k\nth$ layer, $\frac{\partial\cL}{\partial E^k} = \left.\frac{\partial(\valpha^k\odot X^k)}{\partial E^k}\right|_{\valpha^k} = \sum_{i\in\cV}\sum_{p = 1}^{d_k} \alpha^k_{ip}\cdot\frac{\partial X^k_{ip}}{\partial E^k}$.
	\item For the final layer, we have $\valpha^K = \vG(W^{K+1})^\top$ as well as for any $k < K$, we 
	have $\valpha^k = \left.\frac{\partial(\valpha^{k+1}\odot X^{k+1})}{\partial X^k}\right|_{\valpha^{k+1}}$ i.e. $\alpha^k_{jp} = \sum_{i\in\cV}\sum_{q = 1}^{d_{k+1}} \alpha^{k+1}_{iq}\cdot\frac{\partial X^{k+1}_{iq}}{\partial X^k_{jp}}$.
\end{enumerate}
\end{lemma}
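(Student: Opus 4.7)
The plan is to recognize Lemma~\ref{lem:main} as a careful, indexed restatement of the multivariate chain rule, making explicit how the incomplete gradients $\valpha^k$ play the role of back-propagated error signals. Since $\cL$ is assumed smooth and the architecture consists of a finite composition of differentiable maps, all chain-rule manipulations and interchanges of summation are automatically justified, so the work is entirely book-keeping.

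First I would handle Part 1. The identity $\frac{\partial\cL}{\partial W^{K+1}} = (X^K)^\top\vG$ is immediate: since $\hat y_{ic} = \sum_p X^K_{ip} W^{K+1}_{pc}$, differentiating $\cL = \sum_i \ell_i$ and collecting yields $\frac{\partial\cL}{\partial W^{K+1}_{pc}} = \sum_i g_{ic} X^K_{ip}$, which is the $(p,c)$ entry of $(X^K)^\top\vG$. For the second assertion, I would apply the chain rule through the intermediate tensor $X^k$: for any parameter $E^k$ appearing in layer $k$,
\[
\frac{\partial\cL}{\partial E^k} \;=\; \sum_{j \in \cV}\sum_{p=1}^{d_k} \frac{\partial\cL}{\partial X^k_{jp}}\cdot\frac{\partial X^k_{jp}}{\partial E^k}.
\]
The key step is then to identify $\frac{\partial\cL}{\partial X^k_{jp}}$ with $\alpha^k_{jp}$. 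Applying the chain rule one further level through the outputs $\hat y_{ic}$ gives
\[
\frac{\partial\cL}{\partial X^k_{jp}} \;=\; \sum_{i \in \cV}\sum_{c \in [C]} \frac{\partial\ell_i}{\partial \hat y_{ic}} \cdot \frac{\partial \hat y_{ic}}{\partial X^k_{jp}} \;=\; \sum_{i\in\cV}\sum_{c\in[C]} g_{ic}\cdot\frac{\partial\hat y_{ic}}{\partial X^k_{jp}},
\]
which is exactly $\alpha^k_{jp}$ by Definition~\ref{defn:alphak}. Substituting and rewriting the double sum as $\left.\frac{\partial(\valpha^k\odot X^k)}{\partial E^k}\right|_{\valpha^k}$ finishes Part 1.

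Next I would tackle Part 2. For the base case $k = K$, since $\hat y_{ic} = \sum_p X^K_{ip} W^{K+1}_{pc}$, we have $\frac{\partial \hat y_{ic}}{\partial X^K_{jp}} = \mathbb{I}[i=j]\cdot W^{K+1}_{pc}$, and so
\[
\alpha^K_{jp} \;=\; \sum_{i\in\cV}\sum_{c\in[C]} g_{ic}\cdot\mathbb{I}[i=j]\cdot W^{K+1}_{pc} \;=\; \sum_{c\in[C]} g_{jc} W^{K+1}_{pc},
\]
which is the $(j,p)$ entry of $\vG (W^{K+1})^\top$. For $k < K$, I would apply the chain rule to pass through the layer $X^{k+1} = f(X^k; E^{k+1})$:
\[
\frac{\partial \hat y_{ic}}{\partial X^k_{jp}} \;=\; \sum_{i'\in\cV}\sum_{q=1}^{d_{k+1}} \frac{\partial \hat y_{ic}}{\partial X^{k+1}_{i'q}}\cdot\frac{\partial X^{k+1}_{i'q}}{\partial X^k_{jp}}.
\]
Plugging this into Definition~\ref{defn:alphak} and swapping the order of summation, which is legal since every sum is finite, groups the $(i,c)$-sum into exactly $\alpha^{k+1}_{i'q}$:
\[
\alpha^k_{jp} \;=\; \sum_{i'\in\cV}\sum_{q=1}^{d_{k+1}} \left(\sum_{i\in\cV}\sum_{c\in[C]} g_{ic}\cdot\frac{\partial \hat y_{ic}}{\partial X^{k+1}_{i'q}}\right)\cdot\frac{\partial X^{k+1}_{i'q}}{\partial X^k_{jp}} \;=\; \sum_{i'\in\cV}\sum_{q=1}^{d_{k+1}} \alpha^{k+1}_{i'q}\cdot\frac{\partial X^{k+1}_{i'q}}{\partial X^k_{jp}},
\]
which is the claimed recursion. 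This also validates the compact form $\valpha^k = \left.\frac{\partial(\valpha^{k+1}\odot X^{k+1})}{\partial X^k}\right|_{\valpha^{k+1}}$.

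There is no deep mathematical obstacle; the only care required is notational. The trickiest point is making the ``$|_{\valpha^k}$'' and ``$|_{\vG}$'' conventions rigorous: these simply mean that in the expression $\valpha^k\odot X^k$, the factor $\valpha^k$ is regarded as a constant tensor, so that $\frac{\partial(\valpha^k\odot X^k)}{\partial E^k}$ is literally $\sum_{i,p}\alpha^k_{ip}\frac{\partial X^k_{ip}}{\partial E^k}$ rather than a total derivative. Once this is spelled out, both parts follow from the chain rule plus one interchange of finite sums, and the recursive structure of $\valpha^k$ is precisely what lets back-propagation (and hence \alg's lazy cached variants) avoid the neighborhood-explosion incurred by naively differentiating $\hat y_{ic}$ through all $K$ layers.
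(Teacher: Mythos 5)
Your proof is correct and follows essentially the same route as the paper's: Part 1 is the chain rule through $X^k$ with the identification $\frac{\partial\cL}{\partial X^k_{jp}} = \alpha^k_{jp}$ (the paper groups the same chain-rule factors slightly differently, starting from $\frac{\partial\cL}{\partial\hat y_{ic}}$, but the manipulations are identical), and Part 2's base case and one-layer chain-rule recursion with a swap of finite sums match the paper verbatim. Your closing remark on the meaning of the conditioning notation $|_{\valpha^k}$ is also exactly the clarification the paper's proof records.
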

Lemma~\ref{lem:main} establishes a recursive definition of the incomplete gradients using terms such as $\frac{\partial X^{k+1}_{iq}}{\partial X^k_{jp}}$ that concern just a single layer. Thus, computing $\valpha^k$ for any $k \in [K]$ does not involve any neighborhood explosion since only the immediate neighbors of a node need be consulted. Lemma~\ref{lem:main} also shows that if $\valpha^k$ are computed and frozen, the loss derivatives $\frac{\partial\cL}{\partial E^k}$ only involve additional computation of terms such as $\frac{\partial X^k_{ip}}{\partial E^k}$ which yet again involve a single layer and do not cause neighborhood explosion. This motivates lazy updates to $\valpha^k, X^k$ values in order to accelerate back-propagation. However, performing lazy updates to both $\valpha^k, X^k$ offers suboptimal performance. Hence \alg adopts two variants described in Algorithms~\ref{algo:bp} and \ref{algo:inv}. The \emph{backprop} variant\footnote{The backprop variant is named so since it updates model parameters in the order back-propagation would have updated them i.e. $W^{K+1}$ followed by $E^K, E^{K-1},\ldots$ whereas the inverted variant performs updates in the reverse order i.e. starting from $E^1, E^2$ all the way to $W^{K+1}$.} keeps embeddings $X^k$ stale for an entire epoch but performs eager updates to $\valpha^k$. The \emph{inverted} variant on the other hand keeps the incomplete gradients $\valpha^k$ stale for an entire epoch but performs eager updates to $X^k$.

\begin{minipage}[t]{0.49\textwidth}
\vspace*{-4ex}
\begin{algorithm}[H]
	\caption{\alg: backprop order}
	\label{algo:bp}
	{\small
	\begin{algorithmic}[1]
		\REQUIRE GCN $\cG$, initial features $X^0$, task loss $\cL$
		\STATE Initialize model parameters $E^k, k \in [K], W^{K+1}$
		\WHILE{not converged}
			\STATE Do a forward pass to compute $X^k$ for all $k \in [K]$ as well as $\hat\vY$
			\STATE Compute $\vG$ then $\frac{\partial\cL}{\partial W^{K+1}}$ using Lemma~\ref{lem:main} (1) and update $W^{K+1} \leftarrow W^{K+1} - \eta\cdot\frac{\partial\cL}{\partial W^{K+1}}$
			\STATE Compute $\valpha^K$ using $\vG, W^{K+1}$, Lemma~\ref{lem:main} (2)
			\FOR{$k = K \ldots 2$}
				\STATE Compute $\frac{\partial\cL}{\partial E^k}$ using $\valpha^k, X^k$, Lemma~\ref{lem:main} (1)
				\STATE Update $E^k \leftarrow E^k - \eta\cdot\frac{\partial\cL}{\partial E^k}$
				\STATE Update $\valpha^k$ using $\valpha^{k+1}$ using Lemma~\ref{lem:main} (2)
			\ENDFOR
		\ENDWHILE
	\end{algorithmic}
	}
\end{algorithm}
\end{minipage}
\hfill
\begin{minipage}[t]{0.49\textwidth}
\vspace*{-4ex}
\begin{algorithm}[H]
	\caption{\alg: inverted order}
	\label{algo:inv}
	{\small
	\begin{algorithmic}[1]
		\REQUIRE GCN $\cG$, initial features $X^0$, task loss $\cL$
		\STATE Initialize model parameters $E^k, k \in [K], W^{K+1}$
		\STATE Do an initial forward pass to compute $X^k, k \in [K]$%
		\WHILE{not converged}
			\STATE Compute $\hat\vY,\vG$ and $\valpha^k$ for all $k \in [K]$ using Lemma~\ref{lem:main} (2)
			\FOR{$k = 1 \ldots K$}
				\STATE Compute $\frac{\partial\cL}{\partial E^k}$ using $\valpha^k, X^{k}$, Lemma~\ref{lem:main} (1)
				\STATE Update $E^k \leftarrow E^k - \eta\cdot\frac{\partial\cL}{\partial E^k}$
				\STATE Update $X^k \leftarrow f(X^{k-1}; E^k)$
			\ENDFOR
			\STATE Compute $\frac{\partial\cL}{\partial W^{K+1}}$ using Lemma~\ref{lem:main} (1) and use it to update $W^{K+1} \leftarrow W^{K+1} - \eta\cdot\frac{\partial\cL}{\partial W^{K+1}}$
		\ENDWHILE
	\end{algorithmic}
	}
\end{algorithm}
\end{minipage}

{\textbf{SGD Implementation}: Update steps in the algorithms (steps 4, 8 in Algorithm~\ref{algo:bp} and steps 7, 10 in Algorithm~\ref{algo:inv}) are described as a single gradient step over the entire graph to simplify exposition -- in practice, these steps are implemented using {\em mini-batch} SGD. A mini-batch of nodes $S$ is sampled and task gradients are computed w.r.t $\hat\cL_S = \sum_{i \in S}\ell_i$ alone instead of $\cL$.}

\textbf{Contribution}: As noted in Section~\ref{sec:related}, \alg uses caching in a manner fundamentally different from frameworks such as PyTorch or TensorFlow which use short-lived caches and compute exact gradients unlike \alg that computes gradients faster but with bounded bias. Moreover, unlike techniques such as VR-GCN that cache only node embeddings, \alg instead offers two variants and the variant that uses inverted order of updates (Algorithm~\ref{algo:inv}) and caches incomplete gradients usually outperforms the backprop variant of \alg (Algorithm~\ref{algo:bp}) that caches node embeddings.
\textbf{Theoretical Analysis}: Conditioned on the stale parameters (either $\valpha^k$ or $X^k$ depending on which variant is being executed), the gradients used by \alg to perform model updates (steps 4, 8 in Algorithm~\ref{algo:bp} and steps 7, 10 in Algorithm~\ref{algo:inv}) do not have any sampling bias. However, the staleness itself training bias. However, by controlling the step length $\eta$ and the frequency with which the stale parameters are updated, this bias can be provably controlled resulting in guaranteed convergence to a first-order stationary point. The detailed statement and proof are presented in Appendix \ref{sec:proofs}. 
\begin{theorem}[\alg Convergence (Informal)]
\label{thm:conv}
Suppose the task objective $\cL$ has $\bigO1$-Lipschitz gradients and \alg is executed with small enough step lengths $\eta$ for model updates (steps 4, 8 in Algorithm~\ref{algo:bp} and steps 7, 10 in Algorithm~\ref{algo:inv}), then within $T$ iterations, \alg ensures: 
\begin{enumerate}
	\item $\norm{\nabla\cL}_2^2 \leq \bigO{1/{T^{\frac{2}{3}}}}$ if update steps are carried out on the entire graph in a full-batch.
	\item $\norm{\nabla\cL}_2^2 \leq \bigO{1/{\sqrt T}}$ if update steps are carried out using mini-batch SGD.
\end{enumerate}
\end{theorem}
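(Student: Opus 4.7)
The plan is to treat \alg as a biased gradient descent scheme and invoke standard non-convex smooth optimization analysis, with the main work going into controlling the staleness bias. The key observation from Lemma~\ref{lem:main} is that the loss gradient factors multiplicatively through the pair $(\valpha^k, X^k)$; conditioned on the currently cached value of whichever of these is stale, \alg computes the other factor exactly and produces the \emph{exact} gradient of a surrogate objective in which the stale quantity is treated as a constant. Hence the bias at iteration $t$ is entirely attributable to the gap between the cached value and its current ``fresh'' value. I would first establish, for each variant, that this gap satisfies a bound of the form $\norm{\tilde X^k_t - X^k_t}, \norm{\tilde\valpha^k_t - \valpha^k_t} \leq C\cdot \eta \cdot \tau \cdot G$, where $\tau$ is the number of mini-batch steps per refresh epoch and $G$ bounds the gradient norms during the epoch. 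This follows by unrolling the layer-wise Lipschitz dependence of $X^k$ (resp.\ $\valpha^k$) on the parameters $E^1,\ldots,E^k$, using the $\bigO1$-smoothness of $\cL$ to bound the parameter drift $\norm{E^k_t - E^k_{t-\tau}}$ over an epoch.

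Next I would combine this with a Lipschitz bound on the mapping from stale inputs to the surrogate gradient, yielding $\norm{\tilde\vg_t - \nabla\cL(\theta_t)} \leq B(\eta,\tau)$ with $B(\eta,\tau) = \bigO{\eta\tau}$. Plugging this into the standard smoothness descent inequality
\[
\cL(\theta_{t+1}) \leq \cL(\theta_t) - \eta\langle\nabla\cL(\theta_t),\tilde\vg_t\rangle + \tfrac{L\eta^2}{2}\norm{\tilde\vg_t}_2^2,
\]
using Young's inequality to split the cross term as $\langle\nabla\cL,\tilde\vg_t\rangle \geq \tfrac12\norm{\nabla\cL}^2 - \tfrac12\norm{\tilde\vg_t - \nabla\cL}^2$, and telescoping over $T$ steps gives
\[
\tfrac1T\sum_{t=1}^T \norm{\nabla\cL(\theta_t)}_2^2 \leq \tfrac{2(\cL(\theta_0)-\cL^\ast)}{\eta T} + 2 B(\eta,\tau)^2 + L\eta G^2.
\]
For the full-batch variant (part~1) the right-hand side is $\bigO{(\eta T)^{-1} + \eta^2 + \eta}$; choosing $\eta = \Theta(T^{-1/3})$ balances the first and third terms and yields $\bigO{T^{-2/3}}$.

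For the mini-batch variant (part~2), I would write the stochastic gradient as $\hat\vg_t = \tilde\vg_t + \vxi_t$ where $\vxi_t$ is a zero-mean noise (conditioned on the cache and iterate) with bounded variance $\sigma^2$ arising from mini-batch subsampling of $\cL$. The descent inequality then picks up an extra term $\tfrac{L\eta^2\sigma^2}{2}$ per step, giving the bound $\bigO{(\eta T)^{-1} + \eta + \eta^2 + B(\eta,\tau)^2}$. Here the sampling variance term $\eta$ dominates $\eta^2$ and the bias $\eta^2\tau^2$, so choosing $\eta = \Theta(T^{-1/2})$ yields the claimed $\bigO{1/\sqrt T}$ rate.

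The main obstacle I anticipate is rigorously bounding the staleness-induced bias $B(\eta,\tau)$ layer by layer. This requires (i) establishing that the iterates, embeddings, and incomplete gradients remain in a bounded region so that local Lipschitz constants can be applied uniformly (likely via a bootstrapping or projection argument, or by assuming the $\bigO1$-smoothness extends globally), and (ii) tracking how a perturbation in a single layer's parameters propagates through both the forward pass (for $X^k$) and the reverse recursion of Lemma~\ref{lem:main}(2) (for $\valpha^k$) without blowing up exponentially in $K$. The symmetry between the two variants should make the analysis essentially identical once this staleness bound is in place, since Lemma~\ref{lem:main} treats $\valpha^k$ and $X^k$ on the same footing as the two multiplicative factors of the true gradient.
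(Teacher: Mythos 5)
Your overall strategy --- casting \alg as biased gradient descent, bounding the staleness bias layer by layer using smoothness and bounded gradients, and plugging into a smooth non-convex descent inequality --- is the same as the paper's. Your observation that in each variant only one of the two factors $(\valpha^k, X^k)$ in the gradient factorization is ever stale, so the bias is $O(\eta\tau)$, mirrors the paper's Steps 1--3.

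The gap is in your full-batch rate derivation. Your displayed inequality carries the term $L\eta G^2$ with $G$ a uniform constant bound on $\norm{\tilde\vg_t}_2$, giving a right-hand side of $O((\eta T)^{-1} + \eta^2\tau^2 + \eta)$. Setting $\eta = \Theta(T^{-1/3})$ does \emph{not} yield $O(T^{-2/3})$: at that step size the $L\eta G^2$ term is $\Theta(T^{-1/3})$ and dominates, and the best your inequality supports is $O(T^{-1/2})$ via $\eta = \Theta(T^{-1/2})$, short of the claim. The $L\eta G^2$ term is an over-bound: write $\norm{\tilde\vg_t}^2 \leq 2\norm{\nabla\cL}^2 + 2\norm{\vDelta_t}^2$, absorb the $L\eta^2\norm{\nabla\cL}^2$ piece into the descent term (valid once $\eta \leq 1/(4L)$), and the residual is $L\eta^2 B(\eta,\tau)^2 = O(\eta^4\tau^2)$, which is negligible. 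The right-hand side then becomes $O((\eta T)^{-1} + \eta^2\tau^2)$ and $\eta = \Theta(T^{-1/3})$, $\tau = O(1)$ gives the claimed $O(T^{-2/3})$. The paper reaches the same point by a different route: it converts the absolute bias into a \emph{relative} bound $\norm{\vDelta_t}_2 \leq \delta\norm{\nabla\cL}_2$ via a two-case argument (either this relative bound holds, or $\norm{\nabla\cL}_2 = O(\tau\eta)$ and one is already done), then invokes Lemma~\ref{lem:conv} (part 1), whose conclusion for a non-stochastic biased oracle has no residual $O(\eta)$ term precisely because the quadratic term is proportional to $\norm{\nabla\cL}^2$ and is absorbed.

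Your mini-batch analysis is essentially correct: the variance term $L\eta\sigma^2$ is genuine and forces $\eta = \Theta(T^{-1/2})$, and you implicitly (but should state explicitly) choose $\tau = O(T^{1/4})$ so that $B(\eta,\tau)^2 = O(\eta^2\tau^2) = O(T^{-1/2})$, which is exactly the paper's choice. Your concern about exponential blow-up in $K$ of the staleness Lipschitz constants is legitimate --- the paper's recursion $I_k \leq B(H(H+1) + I_{k+1})$ is indeed geometric in the number of layers --- but both you and the paper treat $K$ as a fixed constant and fold this into the $O(\cdot)$ notation.
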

This result holds under minimal assumptions of objective smoothness and boundedness that are standard \cite{vrgcn, mvsgcn}, yet offers convergence rates comparable to those offered by standard mini-batch SGD. However, whereas works such as \citep{vrgcn} assume bounds on the sup-norm i.e. $L_\infty$ norm of the gradients, Theorem~\ref{thm:conv} only requires an $L_2$ norm bound. {Note that objective smoothness requires the architecture to use smooth activation functions. However, \alg offers similar performance whether using non-smooth activations e.g. ReLU or smooth ones e.g. GELU (see Appendix \ref{app:smooth}) as is also observed by other works \citep{hendrycks2020gaussian}.}
\section{Empirical Evaluation}
\label{sec:exps}
\alg was compared to state-of-the-art (SOTA) baselines on several node classification benchmarks in terms of test accuracy and convergence rate. The inverted order of updates was used for \alg as it was found to offer superior performance in ablation studies.

\textbf{Datasets and Tasks:} The following five benchmark tasks were used:\\
(1) Reddit \citep{graphsage}: predicting the communities to which different posts belong,\\
(2) PPI-Large \citep{graphsage}: classifying protein functions in biological protein-protein interaction graphs,\\
(3) Flickr \citep{graphsaint}: image categorization based on descriptions and other properties,\\
(4) OGBN-Arxiv \citep{ogb}: predicting paper-paper associations, and\\
(5) OGBN-Proteins \citep{ogb}: categorizing meaningful associations between proteins.\\ Training-validation-test splits and metrics were used in a manner consistent with the original release of the datasets: specifically ROC-AUC was used for OGBN-Proteins and micro-F1 for all other datasets. Dataset descriptions and statistics are presented in Appendix~\ref{data_baselines}. The graphs in these tasks varied significantly in terms of size (from 56K nodes in PPI-Large to 232K nodes in Reddit), density (from average degree 13 in OGBN-Arxiv to 597 in OGBN-Proteins) and number of edges (from 800K to 39 Million). They require diverse information to be captured and a variety of multi-label and multi-class node classification problems to be solved thus offering extensive evaluation. 

\textbf{Baselines:} \alg was compared to state-of-the-art algorithms, namely - GCN \citep{gcn}, GraphSAGE \citep{graphsage}, VR-GCN \citep{vrgcn}, Cluster-GCN \citep{clustergcn} and GraphSAINT \citep{graphsaint} (using the Random Walk Sampler which was reported by the authors to have the best performance). The \revision{mini-batched} implementation of GCN provided by GraphSAGE authors was used since the implementation released by \citep{gcn} gave run time errors on all datasets. GraphSAGE and VRGCN address the neighborhood explosion problem by sampling neighborhood subsets, whereas ClusterGCN and GraphSAINT are subgraph sampling techniques. Thus, our baselines include neighbor sampling, layer-wise sampling, subgraph sampling and no-sampling methods. We recall that \alg does not require any node/subgraph sampling. \alg was implemented in TensorFlow and compared with TensorFlow implementations of the baselines released by the authors. Due to lack of space, comparisons with the following additional baselines is provided in Appendix \ref{additional_baselines_text}: LADIES \citep{ladies}, L2-GCN \citep{l2gcn}, AS-GCN \citep{asgcn}, MVS-GNN \citep{mvsgcn}, FastGCN \citep{fastgcn}, SIGN \citep{frasca2020sign}, PPRGo \citep{pprgo} and Bandit Sampler \citep{liu2020bandit}.

\textbf{Architecture}: We note that all baseline methods propose a specific network architecture along with their proposed training strategy. These architectures augment the standard GCN architecture \citep{gcn} e.g. using multiple non-linear layers within each GCN layer, normalization and concatenation layers, all of which can help improve performance. \alg being architecture-agnostic can be readily used with all these architectures. However, for the experiments, the network architectures proposed by VR-GCN and GraphSAINT was used with \alg owing to their consistent performance across datasets as demonstrated by \citep{vrgcn, graphsaint}. Both architectures were considered and results for the best architecture are reported for each dataset.

\textbf{Detailed Setup.} The supervised inductive learning setting was considered for all five datasets as it is more general than the transductive setting that assumes availability of the entire graph during training. The inductive setting also aligns with real-world applications where graphs can grow over time \citep{graphsage}. Experiments on PPI-Large, Reddit and Flickr used 2 Layer GCNs for all methods whereas OGBN-Proteins and OGBN-Arxiv used 3 Layer GCNs for all methods as prescribed in the original benchmark \citep{ogb}. Further details are provided in Appendix \ref{app:hardware}.

\textbf{Model Selection and Hyperparameter Tuning.} Model selection was done for all methods based on their validation set performance. Each experiment was run five times with mean and standard deviation in test performance reported in Table \ref{results} along with training times. Although the embedding dimension varies across datasets, they are same for all methods for any dataset. For \alg, GraphSAGE and VR-GCN, an exhaustive grid search was done over general hyperparameters such as batch size, learning rate and dropout rate \citep{dropout}. In addition, method-specific hyperparameter sweeps were also carried out that are detailed in Appendix \ref{app:params}.

\subsection{Results}
\label{sec:main-results}
For \alg, the VR-GCN architecture performed better for Reddit and OGBN-Arxiv datasets while the GraphSAINT architecture performed better on the remaining three datasets. All baselines were extensively tuned on the 5 datasets and their performance reported in their respective publications was either replicated closely or else improved. Test accuracies are reported in Table \ref{results} and convergence plots are shown in Figure \ref{fig:convergence_results}. Additionally, Table~\ref{results} also reports the absolute accuracy gain of \alg over the best baseline. The \% speedup offered by \alg is computed as follows: let the highest validation score obtained by the best baseline be $v_{1}$ and $t_{1}$ be the time taken to reach that score. Let the time taken by \alg to reach $v_{1}$ be $t_{2}$. Then,
\begin{equation}\label{eq:speedup}
    \text{\% Speedup} := \frac{t_{1} - t_{2}}{t_{1}} \times 100
\end{equation}
The wall clock training time in Figure \ref{fig:convergence_results} is  strictly the optimization time for each method and excludes method-specific overheads such as pre-processing, sampling and sub-graph creation that other baselines incur. This is actually a disadvantage to \alg since its overheads are much smaller. The various time and memory overheads incurred by all methods are summarized in Appendix \ref{app:time_memory_overheads} and memory requirements for \alg are discussed in Appendix \ref{app:memory}. 

\begin{table}[t]
\caption{\textbf{Accuracy of \alg compared to SOTA algorithms.} The metric is ROC-AUC for Proteins and Micro-F1 for the others. \alg is the only method with accuracy within $0.2\%$ of the best accuracy on each dataset with significant speedups in training across datasets. On PPI-Large, \alg is $\sim 8\times$ faster than VR-GCN, the most accurate baseline. * denotes speedup in initial convergence based on a high validation score of 0.955. \textbf{--} denotes no absolute gain. $\|$ denotes a runtime error.} 
\centering
 \resizebox{\columnwidth}{!}{
\begin{tabular}{cccccc}
\hline
\textbf{Algorithm} & \textbf{PPI-Large} & \textbf{Reddit} & \textbf{Flickr} & \textbf{Proteins} & \textbf{Arxiv}  \\
\hline \hline
GCN & 0.614 $\pm$ 0.004 & 0.931 $\pm$ 0.001 & 0.493 $\pm$ 0.002 & 0.615 $\pm$ 0.004 & 0.657 $\pm$ 0.002 \\[0.05cm]
GraphSAGE & 0.736 $\pm$ 0.006 & 0.954 $\pm$ 0.002 & 0.501 $\pm$ 0.013 & 0.759 $\pm$ 0.008 & 0.682 $\pm$ 0.002 \\[0.05cm]
VR-GCN & 0.975 $\pm$ 0.007 & 0.964 $\pm$ 0.001 & 0.483 $\pm$ 0.002 &  0.752 $\pm$ 0.002 & 0.701 $\pm$ 0.006 \\[0.05cm]
Cluster-GCN & 0.899 $\pm$ 0.004 & 0.962 $\pm$ 0.004 & 0.481 $\pm$ 0.005 &  $\|$   & 0.706 $\pm$ 0.004 \\[0.05cm]
GraphSAINT & 0.956 $\pm$ 0.003 & \textbf{0.966} $\pm$ 0.003 & 0.510 $\pm$ 0.001 & 0.764 $\pm$ 0.009 & 0.712 $\pm$ 0.006\\[0.05cm]
\hline \hline 
\alg & \textbf{0.987} $\pm$ 0.004 & 0.964 $\pm$ 0.001 & \textbf{0.515} $\pm$ 0.001 & \textbf{0.784} $\pm$ 0.004 & \textbf{0.718} $\pm$ 0.001\\[0.05cm]
\hline 
Abs. Gain   & \textbf{0.012} & \textbf{--} &  \textbf{0.005} &  \textbf{0.020} & \textbf{0.006} \\[0.05cm]
\% Speedup \eqref{eq:speedup}  & \textbf{88.12} & \textbf{8.1}* &  \textbf{44.74} &  \textbf{11.05} & \textbf{13.94}\\[0.05cm]
\hline 
\end{tabular}
}
\label{results}\vspace*{10pt}
\end{table}

\begin{figure}
		\includegraphics[width=1.0\textwidth]{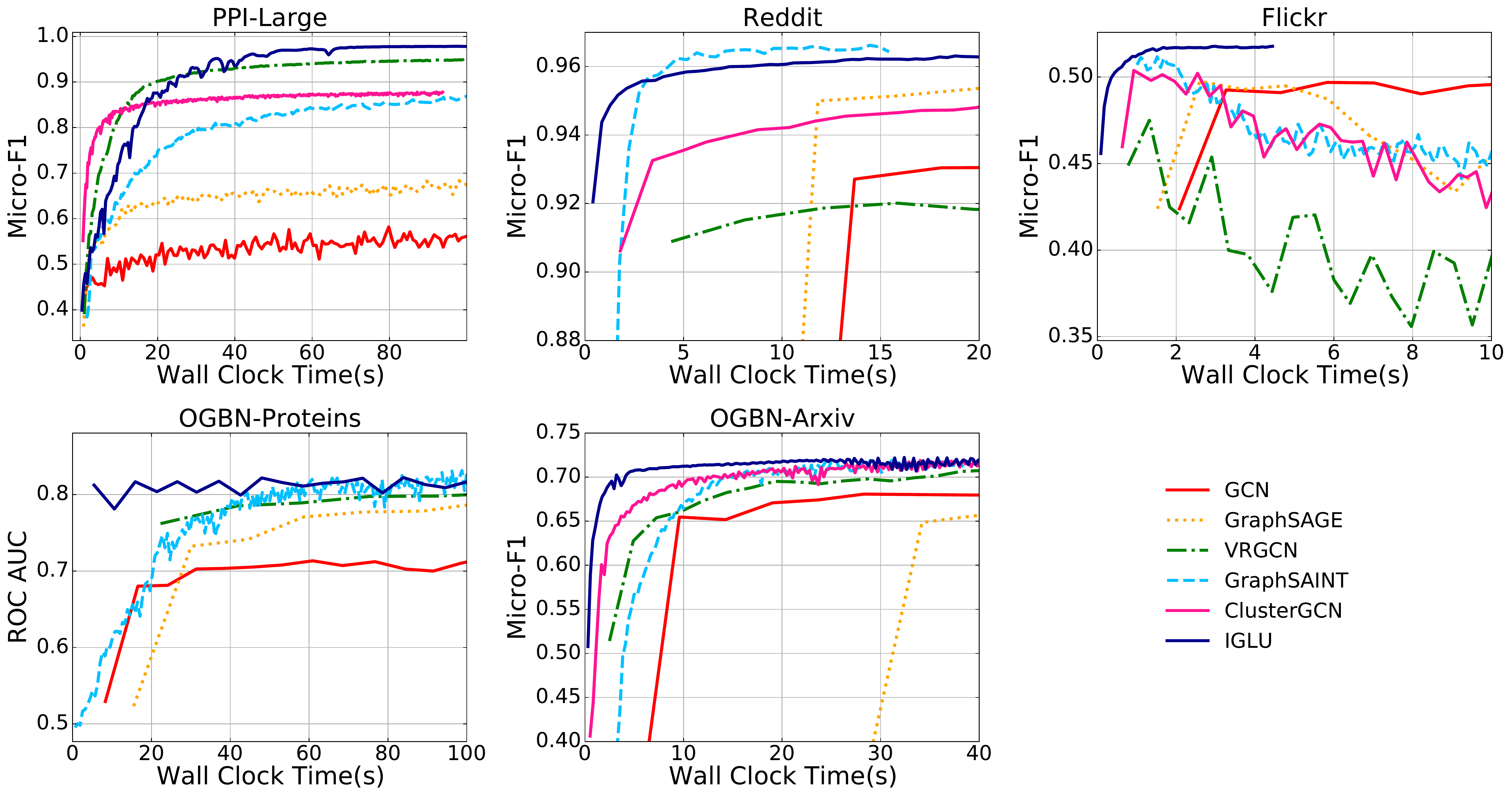}\vspace*{-5pt}
	\caption{\textbf{Wall Clock Time vs Validation Accuracy on different datasets for various methods.} \alg offers significant improvements in  convergence rate over baselines across diverse datasets. }
	\label{fig:convergence_results}
\end{figure}

\textbf{Performance on Test Set and Speedup Obtained:}
Table~\ref{results} establishes that \alg significantly outperforms the baselines on PPI-Large, Flickr, OGBN-Proteins and OGBN-Arxiv and is competitive with best baselines on Reddit. On PPI-Large, \alg improves accuracy upon the best baseline (VRGCN) by over \textbf{1.2}\% while providing speedups of upto \textbf{88}\%, i.e., \alg is about $8\times$ faster to train than VR-GCN. On OGBN-Proteins, \alg improves accuracy upon the best baseline (GraphSAINT) by over \textbf{2.6}\% while providing speedup of \textbf{11}\%. On Flickr, \alg offers \textbf{0.98}\% improvement in accuracy while simultaneously offering upto \textbf{45}\% speedup over the previous state-of-the-art method GraphSAINT. Similarly on Arxiv, \alg provides \textbf{0.84}\% accuracy improvement over the best baseline GraphSAINT while offering nearly \textbf{14}\% speedup. On Reddit, an \textbf{8.1}\% speedup was observed in convergence to a high validation score of 0.955 while the final performance is within a standard deviation of the best baseline. The OGBN-Proteins and OGBN-Arxiv datasets were originally benchmarked in the transductive setting, with the entire graph information made available during training. However, we consider the more challenging inductive setting yet \alg outperforms the best transductive baseline by over \textbf{0.7}\% for Proteins while matching the best transductive baseline for Arxiv \citep{ogb}. It is important to note that OGBN-Proteins is an atypical dataset because the graph is highly dense. Because of this, baselines such as ClusterGCN and GraphSAINT that drop a lot of edges while creating subgraphs show a deterioration in performance. ClusterGCN encounters into a runtime error on this dataset (denoted by $\|$ in the table), while GraphSAINT requires a very large subgraph size to achieve reasonable performance.

\textbf{Convergence Analysis}: Figure \ref{fig:convergence_results} shows that \alg converges faster to a higher validation score than other baselines. For PPI-Large, while Cluster-GCN and VR-GCN show promising convergence in the initial stages of training, they stagnate at a much lower validation score in the end whereas \alg is able to improve consistently and converge to a much higher validation score. For Reddit, \alg's final validation score is marginally lower than GraphSAINT but \alg offers rapid convergence in the early stages of training. 
For OGBN-Proteins, Flickr and OGBN-Arxiv, \alg demonstrates a substantial improvement in both convergence and the final performance on the test set.

\begin{figure}
\centering
\begin{tabular}{cc}
\includegraphics[width=.4\textwidth]{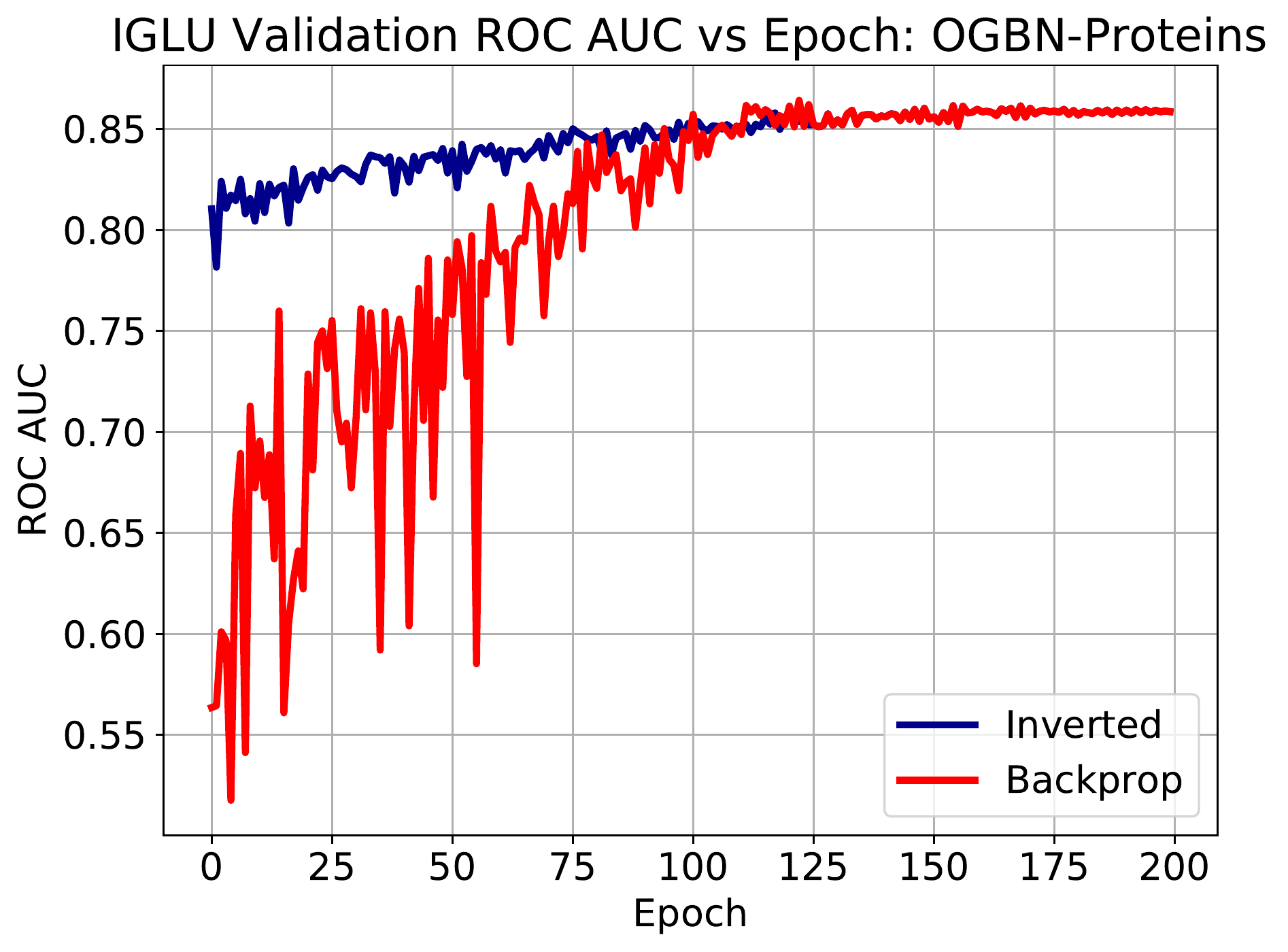}&
\includegraphics[width=.4\textwidth]{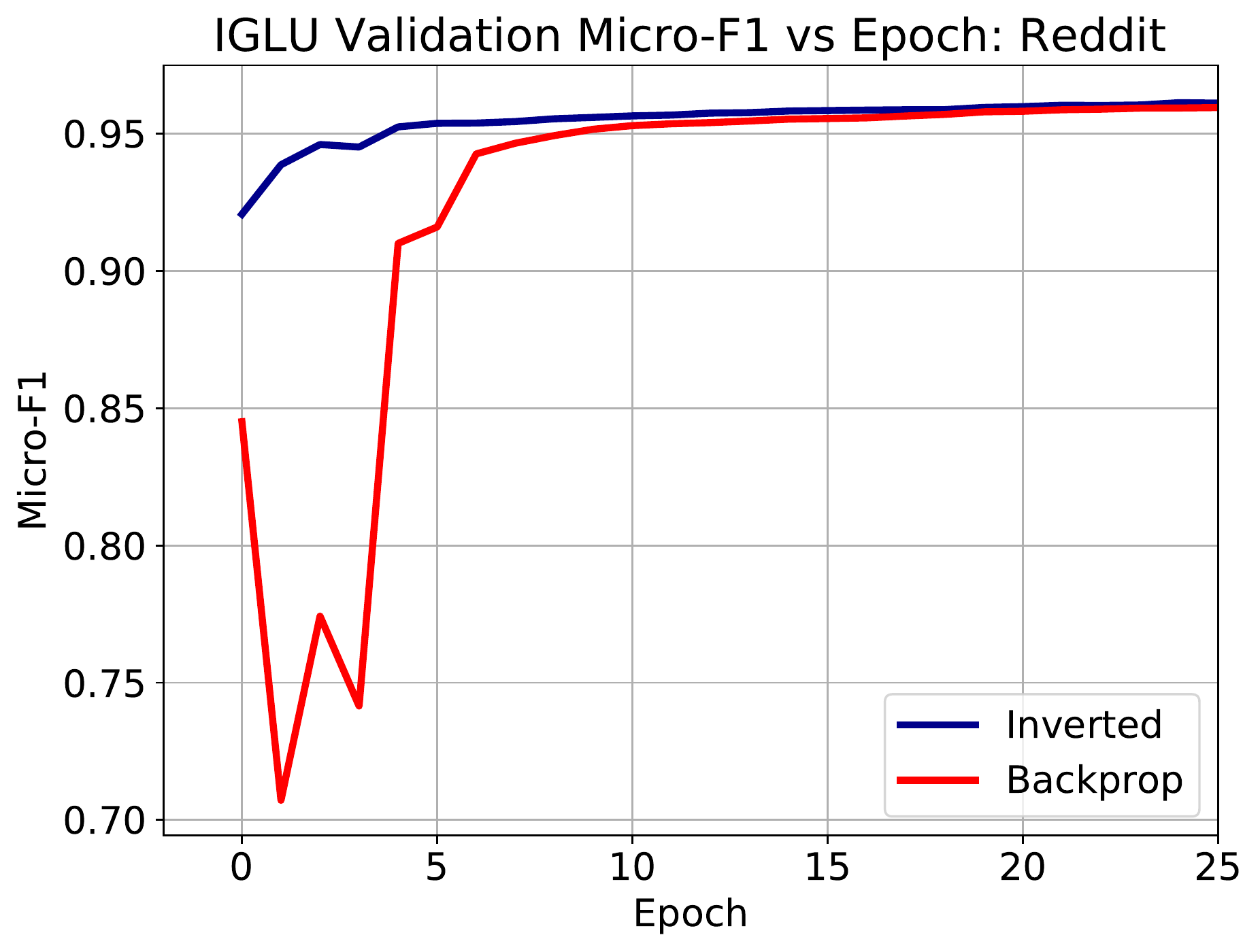}\\
{ (a)}&{ (b)}\vspace*{-3pt}
\end{tabular}
\caption{Effect of backprop and inverted order of updates in \alg on Reddit and OGBN-Proteins datasets. The inverted order of updates offers more stability and faster convergence. It is notable that techniques such as VR-GCN use stale node embeddings that correspond to the backprop variant.}
\label{fig:ablation_convergence_results}
\end{figure}

\subsection{Ablation studies}\label{ablation}

\paragraph{Effect of the Order of Updates.}\label{update_order} 
\alg offers the flexibility of using either the backprop order of updates or the inverted order of updates as mentioned in Section \ref{sec:method}. Ablations were performed on the Reddit and OGBN-Proteins datasets to analyse the effect of the different orders of updates. Figure~\ref{fig:ablation_convergence_results} offers epoch-wise convergence for the same and shows that the inverted order of updates offers faster and more stable convergence in the early stages of training although both variants eventually converge to similar validation scores. It is notable that keeping node embeddings stale (backprop order) offered inferior performance since techniques such as VR-GCN \citep{vrgcn} that also keep node embeddings stale. \alg offers the superior alternative of keeping incomplete gradients stale instead.

\begin{figure}
\centering
\begin{tabular}{cc}
\includegraphics[width=.4\textwidth]{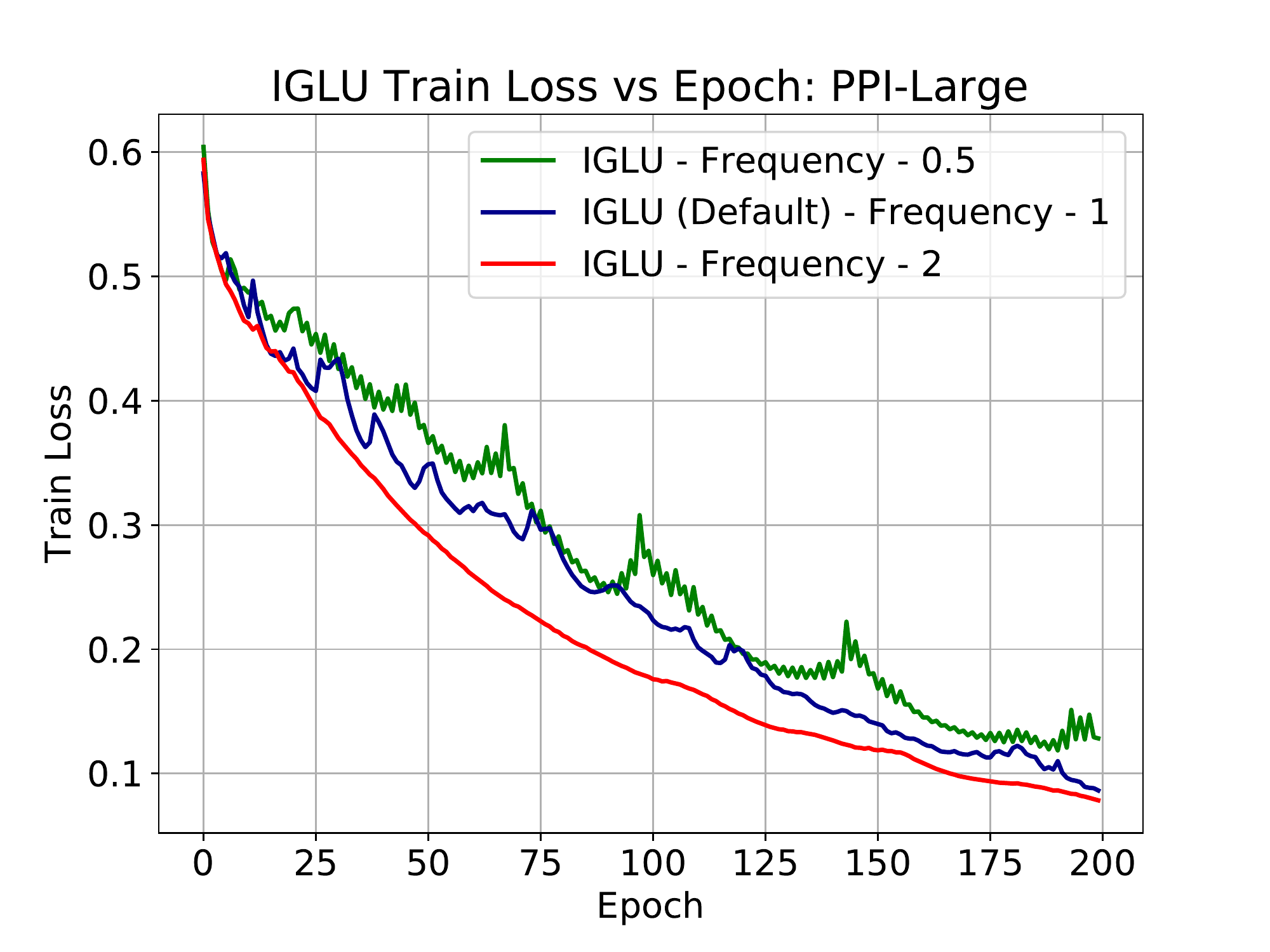}&
\includegraphics[width=.4\textwidth]{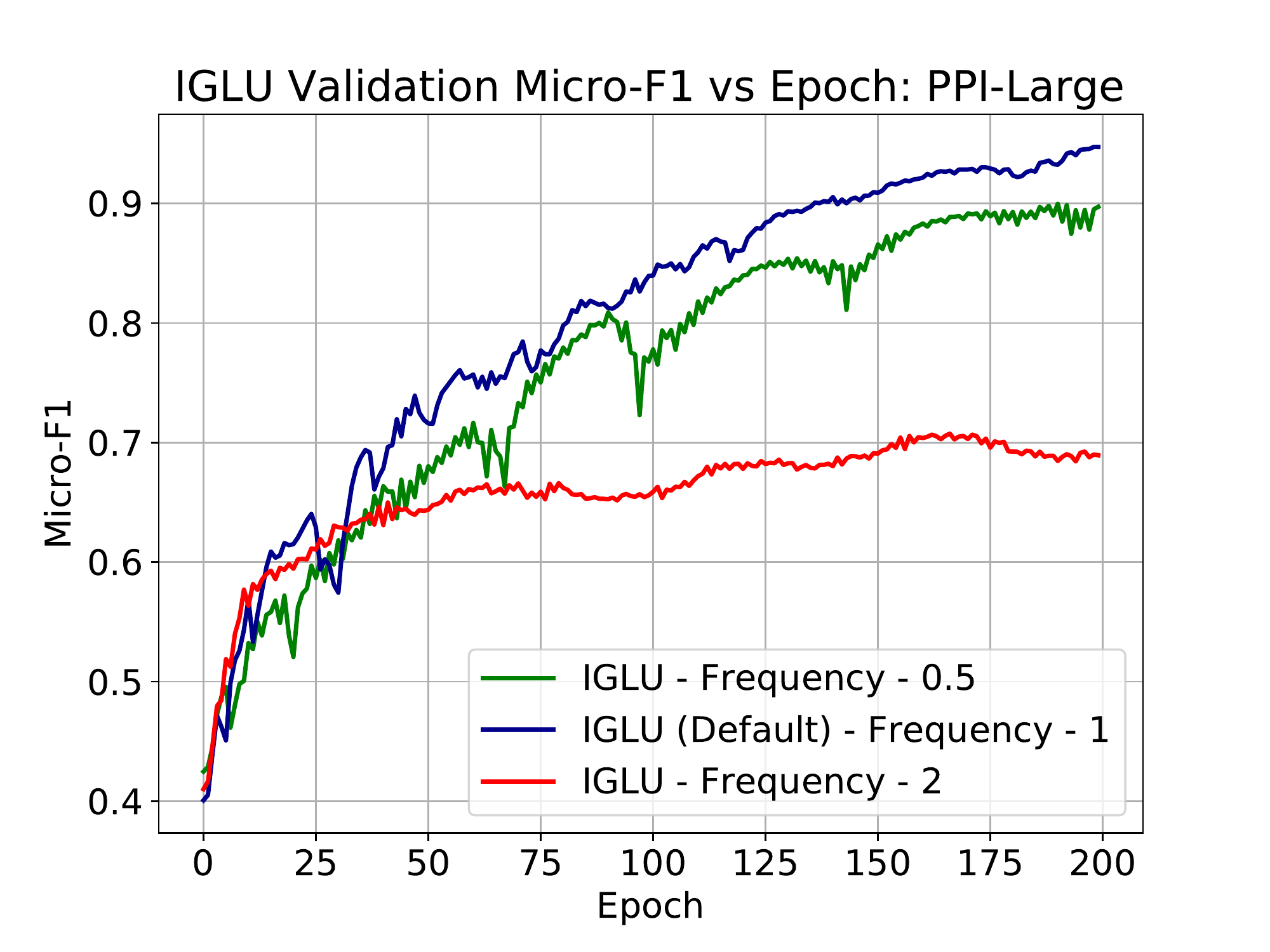}\\
{ (a)}&{ (b)} 
\end{tabular}
\caption{Update frequency vs Accuracy. Experiments conducted on PPI-Large. As expected, refreshing the $\valpha^k$'s too frequently or too infrequently can affect both performance and convergence. }
\label{fig:staleness_ablation}
\end{figure}

\paragraph{Analysis of Degrees of Staleness.} 
The effect of frequency of updates to the incomplete gradients ($\valpha^k$) on the performance and convergence of \alg was analyzed. This ablation was conducted keeping all the other hyperparameters fixed. In the default setting, $\valpha^k$ values were updated only once per epoch (referred to as frequency 1). Two other update schemes were also considered: a) update the $\valpha^k$ values every two epochs (referred to as frequency 0.5), and b) update the $\valpha^k$ values twice within an epoch (referred to as frequency 2). To clarify, $\valpha^k$ values are the most fresh with update frequency 2 and most stale with update frequency 0.5. This ablation study was performed on the PPI-Large dataset and each variant was trained for 200 epochs. Table \ref{staleness_table} summarizes the results doing model selection and Figure \ref{fig:staleness_ablation} plots the convergence of these variants. Figure \ref{fig:staleness_ablation}b shows that on PPI-Large, the default update frequency 1 has the best convergence on the validation set, followed by update frequency 0.5. Both update frequency 1 and 0.5 massively outperform update frequency 2. Figure \ref{fig:staleness_ablation}a shows that \alg with update frequency 2 has the lowest training loss but poor validation performance, indicating overfitting to the training dataset. We observe from this ablation that updating embeddings prematurely can cause unstable training resulting in convergence to a suboptimal solution. However, not refreshing the $\valpha^k$ values frequently enough can delay convergence to a good solution.

\begin{table}[ht]
\caption{Accuracy vs different update frequency on PPI-Large.} 
\centering
{
\begin{tabular}{cccc}
\hline \hline
Update Frequency & Train Micro-F1 & Validation Micro-F1  & Test Micro-F1 \\
\hline 
0.5  & 0.947 & 0.899 & 0.916 \\
\hline 
1 & 0.970 & 0.947 & 0.961 \\
\hline 
2 & 0.960 & 0.708 &  0.726\\
\hline 
\hline

\end{tabular}
}
\vspace{1.5pt}
\label{staleness_table}
\end{table} 

\paragraph{Additional Ablations and Experiments:} Due to lack of space, the following additional ablations and experiments are described in the appendices: a) Ablation on degrees of staleness for the backprop order of updates in Appendix \ref{app:staleness}, b) Experiments demonstrating \alg's scalability to deeper networks and larger datasets in Appendices \ref{app:deeper} and \ref{app:largegraph} respectively, and c) Experiments demonstrating \alg's applicability and architecture-agnostic nature in Appendices \ref{app:applicability} and \ref{app:archagnostic}.

\section{Discussion and Future Work}
\label{sec:disc}
This paper introduced \alg, a novel method for training GCN architectures that uses biased gradients based on cached intermediate computations to speed up training significantly. The gradient bias is shown to be provably bounded so overall convergence is still effective (see Theorem~\ref{thm:conv}).\alg's performance was validated on several datasets where it significantly outperformed SOTA methods in terms of accuracy and convergence speed. Ablation studies confirmed that \alg is robust to its few hyperparameters enabling a near-optimal choice. 
Exploring other possible variants of \alg, in particular reducing variance due to mini-batch SGD, sampling nodes to further speed-up updates, and exploring alternate staleness schedules are interesting future directions. On a theoretical side, it would be interesting to characterize properties of datasets and loss functions that influence the effect of lazy updates on convergence. Having such a property would allow practitioners to decide whether to execute \alg with lazier updates or else reduce the amount of staleness. Exploring application- and architecture-specific variants of \alg is also an interesting direction. 

\subsubsection*{Reproducibility Statement}
\label{sec:reproducibility}
Efforts have been made to ensure that results reported in this paper are reproducible.

\textbf{Theoretical Clarity}: Section~\ref{sec:method} discusses the problem setup and preliminaries and describes the proposed algorithm. Detailed proofs are provided in Appendix~\ref{sec:proofs} due to lack of space.

\textbf{Experimental Reproducibility}: Section \ref{sec:exps} and Appendices~\ref{impl} and~\ref{data_baselines} contain information needed to reproduce the empirical results, namely datasets statistics and data source, data pre-processing, implementation details for \alg and the baselines including architectures, hyperparameter search spaces and the best hyperparameters corresponding to the results reported in the paper. 

\textbf{Code Release}: An implementation of \alg can be found at the following URL\\ \href{https://github.com/sdeepaknarayanan/iglu}{\texttt{https://github.com/sdeepaknarayanan/iglu}}

\subsubsection*{Ethics Statement}
\label{sec:ethics}
{This paper presents \alg, a novel technique to train GCN architectures on large graphs that outperforms state of the art techniques in terms of prediction accuracy and convergence speed. The paper does not explore any sensitive applications and the experiments focus primarily on publicly available benchmarks of scientific (e.g. PPI) and bibliographic (e.g. ArXiv) nature do not involve any user studies or human experimentation.}

\subsubsection*{Acknowledgments}
The authors are thankful to the reviewers for discussions that helped improve the content and presentation of the paper.

\bibliography{iclr2022_conference}
\bibliographystyle{iclr2022_conference}

\appendix
\section*{Appendix}
This appendix is segmented into three key parts.
\begin{enumerate}
    \item \textbf{Section \ref{impl}} discusses additional implementation details. In particular, method-specific overheads are discussed in detail and detailed hyper-parameter settings for \alg and the main baselines reported in Table \ref{results} are provided. A key outcome of this analysis is that \alg has the least overheads as compared to the other methods. \revision{We also provide the methodology for incorporating architectural modifications into IGLU, provide a detailed comparison with caching based methods and provide a more detailed descriptions of the algorithms mentioned in the main paper. }
    \item \textbf{Section \ref{data_baselines}} reports dataset statistics, provides comparisons with additional baselines (not included in the main paper due to lack of space) and provides experiments on scaling to deeper models and larger graphs as mentioned in Section~\ref{ablation} in the main paper. It also provides experiments for the applicability of \alg across settings and architectures, using smooth activation functions, continued ablation on degrees of staleness and optimization on the train set. The key outcome of this discussion is that \alg scales well to deeper models and larger datasets, and continues to give performance boosts with state-of-the-art even when compared to these additional baselines. Experimental evidence also demonstrates \alg's ability to generalize easily to a different setting and architecture, perform equivalently with smooth activation functions and achieve lower training losses faster compared to all the baselines across datasets.
    \item \textbf{Section~\ref{sec:proofs}} gives detailed proofs of Lemma~\ref{lem:main} and the convergence rates offered by \alg. It is shown that under standard assumptions such as objective smoothness, \alg is able to offer both the standard rate of convergence common for SGD-style algorithms, as well as a \textit{fast} rate if full-batch GD is performed. 
\end{enumerate}

\section{Additional Implementation Details}
\label{impl}

We recall from Section~\ref{sec:main-results} that the wall-clock time reported in Figure~\ref{fig:convergence_results} consists of strictly the optimization time for each method and excludes method-specific overheads. This was actually a disadvantage for \alg since its overheads are relatively mild. This section demonstrates that other baselines incur much more significant overheads whereas \alg does not suffer from these large overheads. When included, it further improves the speedups that \alg provides over baseline methods.

\subsection{Hardware}\label{app:hardware}
 We implement IGLU in TensorFlow 1.15.2 and perform all experiments on an NVIDIA V100 GPU (32 GB Memory)  and Intel Xeon CPU processor (2.6 GHz). We ensure that all baselines are experimented with the exact same hardware. 

\subsection{Time and Memory Overheads for Various Methods}
\label{app:time_memory_overheads}

We consider three main types of overheads that are incurred by different methods. This includes pre-processing overhead that is one-time, recurring overheads and additional memory overhead. We describe each of the overheads in the context of the respective methods below.

\begin{enumerate}
    \item \textbf{GraphSAGE} - GraphSAGE recursively samples neighbors at each layer for every minibatch. This is done on-the-fly and contributes to a significant sampling overhead. Since this overhead is incurred for every minibatch, we categorize this under recurring overhead. 
    We aggregate this overhead across all minibatches during training. 
    GraphSAGE does not incur preprocessing or additional memory overheads.

    \item \textbf{VRGCN} - Similar to GraphSAGE, VRGCN also recursively samples neighbors at each layer for every minibatch on-the-fly. We again aggregate this overhead across all minibatches during training.
    VRGCN also stores the stale/historical embeddings that are learnt for every node at every layer. This is an additional overhead of $\bigO{NKd}$, where $K$ is the number of layers, $N$ is the number of nodes in the graph and $d = \frac1K\sum_{k \in [K]}\ d_k$ is the average embedding dimensionality across layers.

    \item \textbf{ClusterGCN} - ClusterGCN creates subgraphs and uses them as minibatches for training. For the creation of these subgraphs, ClusterGCN performs graph clustering using the highly optimized METIS tool\footnote{\href{http://glaros.dtc.umn.edu/gkhome/metis/metis/download}{\texttt{http://glaros.dtc.umn.edu/gkhome/metis/metis/download}}}. This overhead is a one-time overhead since graph clustering is done before training and the same subgraphs are (re-)used during the whole training process. We categorize this under preprocessing overhead. ClusterGCN does not incur any recurring or additional memory overheads.
    
    \item \textbf{GraphSAINT} - GraphSAINT, similar to ClusterGCN creates subgraphs to be  used as minibatches for training. We categorize this minibatch creation as the preprocessing overhead for GraphSAINT. However, unlike ClusterGCN, GraphSAINT also periodically creates new subgraphs on-the-fly. We categorize this overhead incurred in creating new subgraphs as recurring overhead. GraphSAINT does not incur any additional memory overheads.
    
    \item \textbf{IGLU} - \alg creates mini-batches only once using subsets of nodes with their full neighborhood information which is then reused throughout the training process. In addition to this, \alg requires initial values of both the incomplete gradients $\valpha^k$ and the $X^k$ embeddings (Step 3 and first part of Step 4 in Algorithm \ref{algo:bp} and Step 2 and Step 4 in Algorithm \ref{algo:inv}) before optimization can commence. We categorize these two overheads - mini-batch creation and initializations of $\valpha^k$'s and $X^k$ embeddings as \alg's preprocessing overhead and note that \textbf{\alg does not have any recurring overheads}. \alg does incur an additional memory overhead since it needs to store the incomplete gradients $\valpha^k$'s in the inverted variant and the embeddings $X^k$ in the backprop variant. However, note that the memory occupied by $X^k$ for a layer $k$ is the same as that occupied by $\valpha^k$ for that layer (see Definition~\ref{defn:alphak}). Thus, for both its variants, \alg incurs an additional memory overhead of $\bigO{NKd}$, where $K$ is the number of layers, $N$ is the number of nodes in the graph and $d = \frac1K\sum_{k \in [K]}\ d_k$ is the average embedding dimensionality across layers.
    \end{enumerate}

\begin{table}[t]
	\caption{\textbf{OGBN-Proteins: Overheads of different methods.} \alg does not have any recurring overhead while VRGCN, GraphSAGE and GraphSAINT all suffer from heavy recurring overheads. ClusterGCN runs into runtime error on this dataset (\textbf{denoted by ||}). GraphSAINT incurs an overhead that is $\sim 2 \times$ the overhead incurred by \alg, while GraphSAGE and VRGCN incur upto $\sim 4.7 \times$ and $\sim 7.8 \times$ the overhead incurred by \alg respectively. For the last row, I denotes initialization time, MB denotes minibatch time and T denotes total preprocessing time. Please refer to Discussion on OGBN - Proteins at Section  \ref{proteins_overhead_discussion} for more details.}
	\vspace{5mm}
	\centering
	{
	\begin{tabular}{ccccc}
			\hline 
			  Method & Preprocessing (One-time)  & Recurring & Additional Memory \\
			\hline 
			\hline
			    GraphSAGE &  N/A & 276.8s & N/A \\
			   VRGCN & N/A & 465.0s & $\bigO{NKd}$ \\
			   ClusterGCN & || & || & || \\
			   GraphSAINT & 22.1s & 101.0s & N/A \\
			   \hline 
			\alg & 34.0s ({I}) + 25.0s ({MB}) = 59.0s ({T}) & N/A & $\bigO{NKd}$\\
			\hline 
		\end{tabular}
	}
	\vspace{5mm}
	\label{overheads:proteins}
\end{table}

\begin{table}[t]
	\caption{\textbf{Reddit: Overheads of different methods.} \alg and GraphSAINT do not have any recurring overhead for this dataset while VRGCN and GraphSAGE incur heavy recurring overheads. ClusterGCN suffers from heavy preprocessing overhead incurred due to clustering. In this case, \alg incurs an overhead that is marginally higher ($\sim 1.4 \times$) than that of GraphSAINT, while VRGCN, GraphSAGE and ClusterGCN incur as much as $\sim 2.1 \times$, $\sim 4.5 \times$ and $\sim 5.8 \times$ the overhead incurred by \alg respectively. For the last line, I denotes initialization time, MB denotes minibatch time and T denotes total preprocessing time. Please refer to Discussion on Reddit at Section  \ref{reddit_overhead_discussion} for more details. } 
	\vspace{3mm}
	\centering
	{
		\begin{tabular}{ccccc}
			\hline 
			  Method & Preprocessing (One-time) & Recurring & Additional Memory \\
			\hline 
			\hline
			   GraphSAGE & N/A & 41.7s & N/A\\
			   VRGCN & N/A & 19.2s & $\bigO{NKd}$\\
			   ClusterGCN & 54.0s & N/A & N/A \\
	   			  GraphSAINT & 6.7s & N/A & N/A\\

			   \hline 
			\alg & 3.5s ({I}) + 5.7s ({MB})= 9.2s ({T}) & N/A & $\bigO{NKd}$\\
			\hline 
			
		\end{tabular}
	}
    \vspace{5mm}
	\label{overheads:reddit}
\end{table}

Tables \ref{overheads:proteins} and \ref{overheads:reddit} report the overheads incurred by different methods on the OGBN-Proteins and Reddit datasets (the largest datasets in terms of edges and nodes respectively). ClusterGCN runs into a runtime error on the Proteins dataset \textbf{(denoted by || in the table)}. N/A stands for Not Applicable in the tables. In the tables, specifically for \alg, pre-processing time is the sum of initialization time required to pre-compute $\valpha^k, X^k$ and mini-batch creation time. We also report the  individual overhead for both initialization and mini-batch creation for \alg. The total pre-processing time for \alg is denoted by T, overheads incurred for initialization by I and overheads incurred for mini-batch creation by MB.

\begin{figure}
    \begin{center}
		\includegraphics[width=0.6\textwidth]{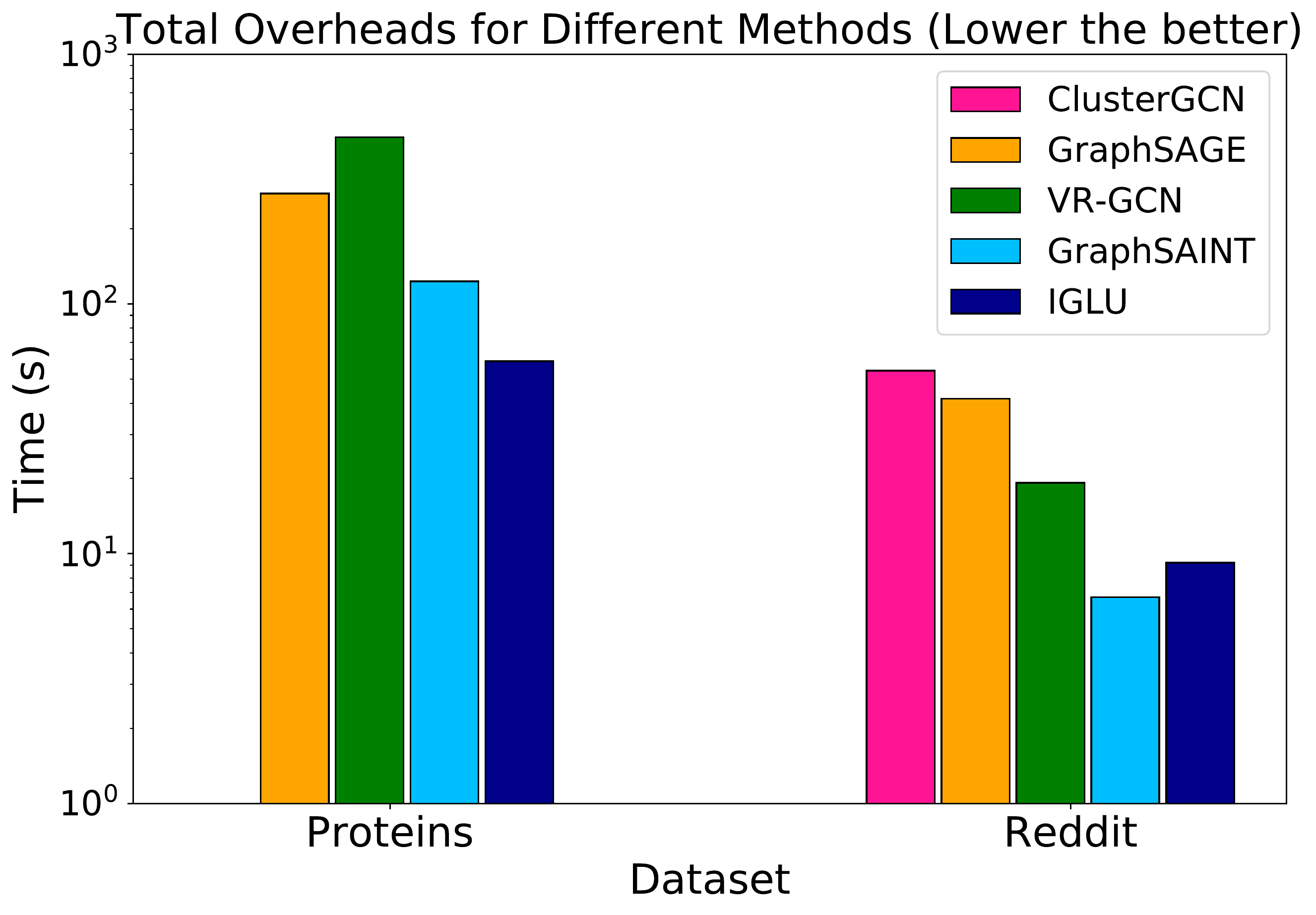}\vspace*{-5pt} 
    \end{center}
	\caption{\textbf{Total Overheads in Wall Clock Time (Log Scale) for the different methods on OGBN-Proteins and Reddit dataset.} ClusterGCN runs into runtime error on the OGBN-Proteins dataset and hence has not been included in the plot. \alg frequently offers least total overhead compared to the other methods and hence significantly lower overall experimentation time. Please refer to section \ref{proteins_overhead_discussion} for details.}
	\label{fig:overheads_timing_analysis}
	
\end{figure}

For \alg, the minibatch creation code is currently implemented in Python, while GraphSAINT uses a highly optimized C++ implementation. Specifically for the Reddit dataset, the number of subgraphs that GraphSAINT samples initially is sufficient and it does not incur any recurring overhead. However, on Proteins, GraphSAINT samples 200 subgraphs every 18 epochs, once the initially sampled subgraphs are  used, leading to a sizeable recurring overhead. 

\paragraph{Discussion on OGBN - Proteins:}\label{proteins_overhead_discussion}
Figure \ref{fig:overheads_timing_analysis} summarizes the total overheads for all the methods. On the OGBN-Proteins dataset, \alg incurs $\sim 2 \times$ less overhead than GraphSAINT, the best baseline, while incurring as much as $\sim 7.8 \times$ less overhead than VRGCN and  $\sim 4.7 \times$ less overhead than GraphSAGE. It is also important to note that these overheads are often quite significant as compared to the optimization time for the methods and can add to the overall experimentation time. For experiments on the OGBN-Proteins dataset, VRGCN's total overheads equal 46.25\% of its optimization time, GraphSAINT's overheads equal 19.63\% of its optimization time, GraphSAGE's overhead equal 9.52\% of its optimization time. However \alg's total overheads equal only \textbf{5.59\%} of its optimization time which is the lowest out of all methods. Upon re-computing the speedup provided by \alg using the formula defined in equation \eqref{eq:speedup}, but this time with overheads included, it was observed that \alg offered an improved speedup of \textbf{16.88\%} (the speedup was only 11.05\% when considering only optimization time).

\paragraph{Discussion on Reddit:}\label{reddit_overhead_discussion}
On the Reddit dataset, \alg incurs upto $\sim 2.1 \times$ less overhead than VRGCN and upto $\sim 4.5\times$ less overhead than GraphSAGE. However, \alg incurs marginally higher overhead ($\sim 1.4 \times$) than GraphSAINT. This can be attributed to the non-optimized minibatch creation routine currently used by \alg compared to a highly optimized and parallelized implementation in C++ used by GraphSAINT. This is an immediate avenue for future work. Nevertheless, VRGCN's total overhead equals 15.41\% of its optimization time, GraphSAINT's overhead equals 43.41\% of its optimization time, GraphSAGE's overhead equals 31.25\% of its optimization time, ClusterGCN's overhead equals 41.02\% of its optimization time while \alg's overhead equals 44.09\% of its optimization time. Whereas the relative overheads incurred by \alg and GraphSAINT in comparison to optimization time may seem high for this dataset, this is because the actual optimization times for these methods are rather small, being just 15.43 and 20.86 seconds for GraphSAINT and \alg respectively in comparison to the other methods such as VRGCN, ClusterGCN  and GraphSAGE whose optimization times are 124s, 131s and 133s respectively, almost an order of magnitude larger than that of \alg.

\begin{table}[t]
	\caption{\textbf{URL's and commit numbers to run baseline codes}} 
	\vspace{2mm}
	\centering
	{
		\begin{tabular}{ccc}
			\hline 
			  Method & URL & Commit \\
			\hline 
			\hline
			GCN & \href{https://github.com/williamleif/GraphSAGE}{github.com/williamleif/GraphSAGE} & a0fdef \\
			   GraphSAGE & \href{https://github.com/williamleif/GraphSAGE}{github.com/williamleif/GraphSAGE} & a0fdef\\
			   VRGCN & \href{https://github.com/thu-ml/stochastic_gcn}{github.com/thu-ml/stochastic\_gcn} & da7b78\\
			   ClusterGCN &
			   \href{https://github.com/google-research/google-research/tree/master/cluster_gcn}{github.com/google-research/google-research/tree/master/cluster\_gcn} & 0c1bbe5 \\
			   AS-GCN & \href{https://github.com/huangwb/AS-GCN}{github.com/huangwb/AS-GCN} & 5436ecd \\
			   L2-GCN & \href{https://github.com/VITA-Group/L2-GCN}{github.com/VITA-Group/L2-GCN} & 687fbae \\
			   MVS-GNN & \href{https://github.com/CongWeilin/mvs_gcn}{github.com/CongWeilin/mvs\_gcn} & a29c2c5 \\
			   LADIES & \href{https://github.com/acbull/LADIES}{github.com/acbull/LADIES} & c10b526 \\
   			   FastGCN & \href{https://github.com/matenure/FastGCN}{https://github.com/matenure/FastGCN} & b8e6e64 \\
  			   SIGN & \href{https://github.com/twitter-research/sign}{https://github.com/twitter-research/sign} & 42a230c \\
 			   PPRGo & \href{https://github.com/TUM-DAML/pprgo\_pytorch}{https://github.com/TUM-DAML/pprgo\_pytorch} & c92c32e \\
			   BanditSampler & \href{https://github.com/xavierzw/gnn-bs}{https://github.com/xavierzw/gnn-bs} & a2415a9 \\
			   
			   \hline 
			
		\end{tabular}
	}
	\vspace{1.5pt}
	\label{baselines_urls}
\end{table} 

\subsection{Memory Analysis for IGLU}\label{app:memory}

{While \alg requires storing stale variables which can have additional memory costs, for most scenarios with real world graphs, saving these stale representations on modern GPUs are quite reasonable. We provide examples of additional memory usage required for two of the large datasets - Reddit and Proteins in Table \ref{memory_overheads} and we observe that IGLU requires only \textbf{150MB} and \textbf{260MB} of additional GPU memory. Even for a graph with 1 million nodes, the additional memory required would only be $\sim$ 2.86GB which easily fit on modern GPUs. For even larger graphs, CPU-GPU interfacing can be used. CPU and GPU interfacing for data movement is a common practice in training machine learning models and hence a potential method to mitigate the issue of limited memory availability in settings with large datasets. This has been explored by many works for dealing with large datasets in the context of GCNs, such as VR-GCN \citep{vrgcn} for storing historical activations in main memory (CPU). Such an interfacing is an immediate avenue of future work for \alg.}

\begin{table}[ht]
\caption{\textbf{{Additional Memory Overheads incurred by \alg on Large datasets}}} 
\centering
\resizebox{\textwidth}{!}{
\begin{tabular}{cccccc}
\hline 
Dataset & \# of Train Nodes & Embedding Dimensions & Number of Layers & Memory Per Layer & Total GPU Memory \\[0.05cm]
\hline
\hline
Reddit & 155k & 128 & 2 & 75MB & 150MB  \\[0.05cm]
Proteins & 90k & 256 & 3 & 85MB & 260MB \\[0.05cm]
Million Sized Graph & 1M & 256 & 3 & 0.95GB & 2.86GB \\[0.05cm]
\hline
\end{tabular}
}
\vspace{1.5pt}
\label{memory_overheads}
\end{table} 

{We observe that IGLU enjoys significant speedups and improvements in training cost across datasets as compared to the baselines as a result of using stale variables. Additionally, since IGLU requires training just a single layer at a time, there is scope for further reduction in memory usage by using only the variables required for the current layer and by re-using the computation graphs across layers, and therefore making \alg even less memory expensive.}

\subsection{Hyperparameter Configurations for \alg and baselines}
\label{app:params}

Table \ref{baselines_urls} summarizes the source URLs and commit stamps using which baseline methods were obtained for experiments. The Adam optimizer \cite{kingma2014adam} was used to train \alg and all the baselines until convergence for each of the datasets. A grid search was performed over the other hyper-parameters for each baseline which are summarized below:

\begin{enumerate}
    \item \textbf{GraphSAGE}: Learning Rate - \{0.01, 0.001\}, Batch Size - \{512, 1024, 2048\}, Neighborhood Sampling Size - \{25, 10, 5\}, Aggregator - \{Mean, Concat\} 
    
    \item \textbf{VRGCN} : Batch Size - \{512, 1000, 2048\}, Degree - \{1, 5, 10\}, Method - \{CV, CVD\}, Dropout - \{0, 0.2, 0.5\}

    \item \textbf{ClusterGCN} : Learning Rate - \{0.01, 0.001, 0.005\}, Lambda - \{-1, 1, 1e-4\}, Number of Clusters - \{5, 50, 500, 1000, 1500, 5000\}, Batch Size - \{1, 5, 50, 100, 500\}, Dropout - \{0, 0.1, 0.3, 0.5\}
    
    \item \textbf{GraphSAINT-RW} : Aggregator - \{Mean, Concat\}, Normalization - \{Norm, Bias\}, Depth - \{2, 3, 4\}, Root Nodes - \{1250, 2000, 3000, 4500, 6000\}, Dropout - \{0.0, 0.2, 0.5\}
    
    \item \textbf{IGLU} : Learning Rate - \{0.01, 0.001\} with learning rate decay schemes, Batch Size - \{512, 2048, 4096, 10000\}, Dropout - \{0.0, 0.2, 0.5, 0.7\}
\end{enumerate}

\subsection{Incorporating Residual Connections, Batch Normalization and Virtual Nodes in \alg}
\label{app:arch}

General-purpose techniques such as BatchNorm and skip/residual connections, and GCN-specific advancements such as bi-level aggregation using virtual nodes offer performance boosts. The current implementation of IGLU already incorporates normalizations 
as described in Sections \ref{sec:method} and \ref{sec:exps}. Below we demonstrate how all these aforementioned architectural variations can be incorporated into IGLU with minimal changes to Lemma~\ref{lem:main} part 2. Remaining guarantees such as those offered by Lemma~\ref{lem:main} part 1 and Theorem~\ref{thm:conv} remain unaltered but for changes to constants.

\textbf{Incorporating BatchNorm into \alg}: As pointed out in Section 3, \alg assumes a general form of the architecture, specifically
\[
\vx_i^k = f(\vx_j^{k-1}, j \in \bc i \cup \cN(i); E^k),
\]
where $f$ includes the aggregation operation such as using the graph Laplacian, weight matrices and any non-linearity. This naturally allows operations such as normalizations (LayerNorm, BatchNorm etc) to be carried out. For instance, the $f$ function for BatchNorm with a standard GCN would look like the following
\begin{align*}
\vz_i^k &= \sigma\br{\sum_{j \in \cV}A_{ij}(W^k)^\top\vx_j^{k-1}}\\
\hat\vz_i^k &= \frac{\vz_i^k - \vmu_B^k}{\sqrt{\vnu_B^k + \epsilon}}\\
\vx_i^k &= \Gamma^k\cdot\hat\vz_i^k + \vbeta^k,
\end{align*}
where $\sigma$ denotes a non-linearity like the sigmoid, $\Gamma^k \in \bR^{d_k\times d_k}$ is a diagonal matrix and $\vbeta^k \in \bR^{d_k}$ is a vector, and $\vmu_B^k, \vnu_B^k \in \bR^{d_k}$ are vectors containing the dimension-wise mean and variance values over a mini-batch $B$ (division while computing $\hat\vz^k_i$ is performed element-wise). The parameter $E^k$ is taken to collect parameters contained in all the above operations i.e. $E^k = \bc{W^k, \Gamma^k, \vbeta^k}$ in the above example ( $\vmu_B^k, \vnu_B^k$ are computed using samples in a mini-batch itself). Downstream calculations in Definition~\ref{defn:alphak} and Lemma~\ref{lem:main} continue to hold with no changes.

\textbf{Incorporating Virtual Nodes into \alg}: Virtual nodes can also be seamlessly incorporated simply by re-parameterizing the layer-wise parameter $E^k$. We are referring to \citep{pei2020geomgcn} [Pei et al ICLR 2020] for this discussion. Let $R$ be the set of relations and let $\cN_g(\cdot), \cN_s(\cdot)$ denote graph neighborhood and latent-space neighborhood functions respectively. A concrete example is given below to illustrate how virtual nodes can be incorporated. We note that operations like BatchNorm etc can be additionally incorporated and alternate aggregation operations e.g. concatenation can be used instead.
\begin{align*}
\begin{aligned}
    \vg_i^{(k,r)} &= \sigma\br{\sum_{\substack{j \in \cN_g(i)\\\tau(i,j)=r}}A_{ij}(L_g^k)^\top\vx_j^{k-1}}\\
    \vs_i^{(k,r)} &= \sigma\br{\sum_{\substack{j \in \cN_s(i)\\\tau(i,j)=r}}T_{ij}(L_s^k)^\top\vx_j^{k-1}}
\end{aligned}
&& \text{(Low-level aggregation)}\\
\begin{aligned}
    \vm_i^k &= \frac1{\abs R}\sum_{r \in R}\br{\vg_i^{(k,r)} + \vs_i^{(k,r)}}
\end{aligned}
&&\text{(High-level aggregation)}\\
\begin{aligned}
    \vx_i^k &= \sigma\br{(W^k)^\top\vm_i^k}
\end{aligned}
&&\text{(Non-linear transform)}
\end{align*}
where $\tau$ denotes the relationship indicator, $A_{ij}$ denotes the graph edge weight between nodes $i$ and $j$ and $T_{ij}$ denotes their geometric similarity in latent space. Note that $\vg_i^{(k,r)}, \vs_i^{(k,r)}$ corresponds to embeddings of the virtual nodes in the above example. To implement the above, the parameter $E^k$ can be taken to collect the learnable parameters contained in all the above operations i.e. $E^k = \bc{L_g^k, L_s^k, W^k}$ in the above example. Downstream calculations in Definition~\ref{defn:alphak} and Lemma~\ref{lem:main} continue to hold as is with no changes.

\textbf{Incorporating Skip/Residual Connections into \alg}: The architecture style presented in Section 3 does not directly allow skip connections but they can be incorporated readily with no change to Definition~\ref{defn:alphak} and minor changes to Lemma~\ref{lem:main}. Let us introduce the notation $k \rightarrow m$ to denote a direct forward (skip) connection directed from layer $k$ to some layer $m > k$. In a purely feed-forward style architecture, we would only have connections of the form $k \rightarrow k+1$. The following gives a simple example of a GCN with a connection that skips two layers, specifically $(k-2) \rightarrow k$.
\[
\vx_i^k = f\br{\vx_j^{k-1}, j \in \bc i \cup \cN(i); E^k)} + \vx_i^{k-2},
\]
where $f$ includes the aggregation operation with the graph Laplacian, the transformation weight matrix and any non-linearity. Definition~\ref{defn:alphak} needs no changes to incorporate such architectures. Part 1 of Lemma~\ref{lem:main} also needs no changes to address such cases. Part 2 needs a simple modification as shown below
\begin{lemma}[Lemma~\ref{lem:main}.2 adapted to skip connections]
For the final layer, we continue to have (i.e. no change) $\valpha^K = \vG(W^{K+1})^\top$. For any $k < K$, we have $\valpha^k = \sum_{m: k \rightarrow m}\left.\frac{\partial(\valpha^m\odot X^m)}{\partial X^k}\right|_{\text{all } \valpha^m \text{ s.t. } k \rightarrow m}$ i.e. $\alpha^k_{jp} = \sum_{m: k \rightarrow m}\sum_{i\in\cV}\sum_{q = 1}^{d_m} \alpha^m_{iq}\cdot\frac{\partial X^m_{iq}}{\partial X^k_{jp}}$.
\end{lemma}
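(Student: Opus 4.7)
The plan is to mimic the proof of the original Lemma~\ref{lem:main}.2 for purely feed-forward architectures and upgrade only the chain-rule step, so that it accounts for the additional forward paths introduced by skip connections. Unpacking Definition~\ref{defn:alphak} and treating $\vG$ as fixed, one has $\alpha^k_{jp} = \sum_{i\in\cV}\sum_{c\in[C]} g_{ic}\cdot\frac{\partial \hat y_{ic}}{\partial X^k_{jp}}$, so the entire argument reduces to expressing $\frac{\partial \hat y_{ic}}{\partial X^k_{jp}}$ via a single multivariate chain-rule step that branches into every layer that $X^k$ feeds into directly. The case $k=K$ needs no modification: $X^K$ feeds only into the final fully-connected layer producing $\hat{\vY}$ and is unaffected by upstream skip edges, so $\valpha^K = \vG(W^{K+1})^\top$ carries over verbatim from the original lemma.

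For $k < K$, the forward-pass definition of each $X^m$ with $k \rightarrow m$ takes $X^k$ as one of its inputs, and by construction these exhaust all direct consumers of $X^k$. Applying the standard multivariate chain rule once, while explicitly freezing any other layer outputs so as not to double-count indirect paths (which is exactly what the conditioning bar in the lemma statement encodes), I would write
\[
\frac{\partial \hat y_{ic}}{\partial X^k_{jp}} \;=\; \sum_{m:\, k \rightarrow m}\ \sum_{i'\in\cV}\sum_{q=1}^{d_m}\, \frac{\partial \hat y_{ic}}{\partial X^m_{i'q}}\cdot\frac{\partial X^m_{i'q}}{\partial X^k_{jp}},
\]
where the rightmost partial holds all layer outputs other than $X^m$ fixed. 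Substituting this into the expanded form of $\alpha^k_{jp}$ and swapping the order of summation so that the $(i,c)$-sum sits against $\frac{\partial \hat y_{ic}}{\partial X^m_{i'q}}$, the inner double-sum collapses into $\alpha^m_{i'q}$ by Definition~\ref{defn:alphak} applied at layer $m$. This yields the claimed entry-wise recursion, which is exactly the element-wise version of $\valpha^k = \sum_{m: k \rightarrow m}\left.\frac{\partial(\valpha^m\odot X^m)}{\partial X^k}\right|_{\text{all } \valpha^m}$.

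The main obstacle, and the one place where the proof deserves care, is the bookkeeping around the direct-vs-indirect partial derivative. If $X^k$ reaches some $X^m$ both through a skip edge and through the ordinary chain $X^{k+1},\ldots,X^{m-1}$, then naively interpreting $\frac{\partial X^m_{iq}}{\partial X^k_{jp}}$ as an unrestricted total derivative would double-count contributions already tracked by applying the recursion at intermediate layers. The clean way to handle this is to formalise each $X^m$ as a function of only its immediate parents in the augmented computation DAG induced by the relations $k \rightarrow m$, so that $\frac{\partial X^m_{iq}}{\partial X^k_{jp}}$ picks up only the direct contribution of the skip input from layer $k$; this is exactly the semantics imparted by the $\left.\cdot\right|_{\text{all }\valpha^m}$ conditioning. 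Once this convention is fixed, the remaining algebra is routine and mirrors the feed-forward proof verbatim.
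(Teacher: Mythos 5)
Your proof follows the paper's argument essentially verbatim: expand $\alpha^k_{jp}$ via Definition~\ref{defn:alphak}, apply the multivariate chain rule branching over all layers $m$ with $k \rightarrow m$, swap the order of summation, and collapse the inner double sum into $\alpha^m$ by applying Definition~\ref{defn:alphak} at layer $m$. Your added remark that $\frac{\partial X^m_{iq}}{\partial X^k_{jp}}$ must be read as the local partial derivative with respect to $X^k$ viewed as a direct parent of $X^m$ in the computation DAG (so that indirect paths are not double-counted) is a genuine subtlety the paper leaves implicit, and you resolve it correctly.
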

Note that as per the convention established in the paper, $\sum_{m: k \rightarrow m}\left.\frac{\partial(\valpha^m\odot X^m)}{\partial X^k}\right|_{\text{all } \valpha^m \text{ s.t. } k \rightarrow m}$ implies that while taking the derivatives, $\valpha^m$ values are fixed (treated as a constant) for all $m > k$ such that $k \rightarrow m$. This ``conditioning'' is important since $\valpha^m$ also indirectly depends on $X^k$ if $m > k$.

\textbf{Proof of Lemma 1.2 adapted to skip connections}: We consider two cases yet again and use Definition 1 that tells us that
\[
\alpha^k_{jp} = \sum_{i \in \cV}\sum_{c \in [C]} g_{ic}\cdot\frac{\partial\hat y_{ic}}{\partial X^K_{jp}}
\]

\textbf{Case 1} ($k = K$): Since this is the top-most layer and there are no connections going ahead let alone skipping ahead, the analysis of this case remains unchanged and continues to yield $\alpha^K = \vG(W^{K+1})^\top$.

\textbf{Case 2} ($k < K$): Using Definition 1 and incorporating all layers to which layer $k$ has a direct or skip connection gives us
\[
\alpha^k_{jp} = \sum_{i \in \cV}\sum_{c \in [C]}g_{ic}\cdot\frac{\partial\hat y_{ic}}{\partial X^k_{jp}} = \sum_{m: k \rightarrow m}\sum_{i \in \cV}\sum_{c \in [C]}g_{ic}\cdot\sum_{l \in \cV}\sum_{q=1}^{d_m}\frac{\partial\hat y_{ic}}{\partial X^m_{lq}}\frac{\partial X^m_{lq}}{\partial X^k_{jp}}
\]
Rearranging the terms gives us
\[
\alpha^k_{jp} = \sum_{m: k \rightarrow m}\sum_{l \in \cV}\sum_{q=1}^{d_m}\left(\sum_{i \in \cV}\sum_{c \in [C]}g_{ic}\cdot\frac{\partial\hat y_{ic}}{\partial X^m_{lq}}\right)\cdot\frac{\partial X^m_{lq}}{\partial X^k_{jp}} = \sum_{m: k \rightarrow m}\sum_{l \in \cV}\sum_{q=1}^{d_m}\alpha^m_{lq}\cdot\frac{\partial X^m_{lq}}{\partial X^k_{jp}},
\]
where we simply used Definition 1 in the second step. However, the resulting term simply gives us $\alpha^k_{jp} = \sum_{m: k \rightarrow m}\left.\frac{\partial(\valpha^m\odot X^m)}{\partial X^k_{jp}}\right|_{\text{all }\valpha^m \text{ s.t. } k \rightarrow m}$ which conditions on, or treats as a constant, the term $\valpha^m$ for all $m > k$ such that $k \rightarrow m$ according to our notation convention. This finishes the proof of part 2 adapted to skip connections.

\subsection{\revision{Comparison of \alg with VR-GCN, MVS-GNN and GNNAutoScale}}

We highlight the key difference between \alg and earlier works that cache intermediate results for speeding up GNN training below. 

\subsubsection{VRGCN v/s IGLU}

\begin{enumerate}
    \item \textbf{Update of Cached Variables}: VR-GCN \citep{vrgcn} caches only historical embeddings, and while processing a single mini-batch these historical embeddings are updated for a sampled subset of the nodes. In contrast \alg does not update any intermediate results after processing each mini-batch. These are updated only once per epoch, after all parameters for individual layers have been updated.
    \item \textbf{Update of Model Parameters}: VR-GCN’s backpropagation step involves update of model parameters of all layers after each mini-batch. In contrast \alg updates parameters of only a single layer at a time.
    \item \textbf{Variance due to Sampling}: VR-GCN incurs additional variance due to neighborhood sampling which is then reduced by utilizing historical embeddings for some nodes and by computing exact embeddings for the others. IGLU does not incur such variance since IGLU uses all the neighbors.
\end{enumerate}

\subsubsection{MVS-GNN v/s IGLU}

MVS-GNN \citep{mvsgcn} is another work that caches historical embeddings. It follows a nested training strategy wherein firstly a large batch of nodes are sampled and mini-batches are further created from this large batch for training. MVS-GNN handles variance due to this mini-batch creation by performing importance weighted sampling to construct mini-batches.

\begin{enumerate}
\item \textbf{Update of Cached Variables and Variance due to Sampling:} Building upon VR-GCN, to reduce the variance in embeddings due to its sampling of nodes at different layers, MVS-GNN caches only embeddings and uses historical embeddings for some nodes and recompute the embeddings for the others. Similar to VR-GCN, these historical embeddings are updated as and when they are part of the mini-batch used for training. As discussed above, \alg does not incur such variance since IGLU uses all the neighbors.

\item \textbf{Update of Model Parameters: } Update of model parameters in MVS-GNN is similar to that of VR-GCN, where backpropagation step involves update of model parameters of all layers for each mini-batch. As described already, \alg updates parameters of only a single layer at a time.
\end{enumerate}

\subsubsection{GNNAutoScale v/s IGLU}

GNNAutoScale \citep{fey2021gnnautoscale} extends the idea of caching historical embeddings from VR-GCN and provides a scalable solution.

\begin{enumerate}
\item{\textbf{Update of intermediate representations and model parameters: }}While processing a minibatch of nodes, GNNAutoScale computes the embeddings for these nodes at each layer while using historical embeddings for the immediate neighbors outside the current minibatch. After processing each mini-batch, GNNAutoScale updates the historical embeddings for nodes considered in the mini-batch. Similar to VR-GCN and MVS-GNN, GNNAutoScale updates all parameters at all layers while processing a mini-batch of nodes. In contrast IGLU does not update intermediate results (intermediate representations in Algorithm 1 and incomplete gradients in Algorithm 2) after processing each minibatch. In fact, these are updated only once per epoch, after all parameters for individual layers have been updated.

\item{\textbf{Partitioning:}} GNNAutoScale relies on the METIS clustering algorithm for creating mini-batches that minimize inter-connectivity across batches. This is done to minimize access to historical embeddings and reduce staleness. This algorithm tends to bring similar nodes together, potentially resulting in the distributions of clusters being different from the original dataset. This may lead to biased estimates of the full gradients while training using mini-batch SGD as discussed in Section 3.2, Page 5 of Cluster-GCN \citep{clustergcn}. \alg does not rely on such algorithms since it's parameter updates are concerned with only a single layer and also avoids potential additional bias.

\end{enumerate}

\textbf{Similarity of IGLU with GNNAutoScale:} Both of the methods avoid a neighborhood sampling step, thereby avoiding additional variance due to neighborhood sampling and making use of all the edges in the graph. Both IGLU and GNNAutoScale propose methods to reduce the neighborhood explosion problem, although in fundamentally different manners. GNNAutoScale does so by pruning the computation graph by using historical embeddings for neighbors across different layers. IGLU on the other hand restricts the parameter updates to a single layer at a time by analyzing the gradient structure of GNNs therefore alleviating the neighborhood explosion problem.

\subsubsection{Summary of \alg's Technical Novelty and Contrast with Caching based related works}

To summarize, \alg is fundamentally different from these methods that cache historical embeddings in that it changes the entire training procedure of GCNs in contrast with the aforementioned caching based methods as follows: 

\begin{itemize}
\item The above methods still follow standard \textbf{SGD style training of GCNs} in that they update the model parameters at all the layers after each mini-batch. This is very different from IGLU’s parameter updates that concern \textbf{only a single layer} while processing a mini-batch.
\item IGLU can cache either \textbf{incomplete gradients \textit{or} embeddings} which is different from the other approaches that cache \textbf{only embeddings.} This provides alternate approaches for training GCNs and we demonstrate empirically that caching incomplete gradients, in fact, offers superior performance and convergence.
\item Unlike GNNAutoScale and VR-GCN that update some of the historical embeddings after each mini-batch is processed, IGLU’s \textbf{caching is much more aggressive} and the stale variables are updated \textbf{only once per epoch}, after all parameters for all layers have been updated. 
\item Theoretically, we provide \textbf{good convergence rates and bounded bias} even while using \textbf{stale gradients}, which has not been discussed in any prior works. 
\end{itemize}

These are the key technical novelties of our proposed method and they are a consequence of a careful understanding of the gradient structure of GCN’s themselves.

\subsubsection{Empirical Comparison with GNNAutoScale}

We provide an empirical comparison of \alg with GNNAutoScale and summarize the results in Table \ref{results_gnnaus} and Figure \ref{fig:convergence_results_gnnautoscale}. It is important to note that the best results for GNNAutoScale as reported by the authors in the paper, correspond to varying hyperparameters such as number of GNN layers and different embedding dimensions across methods, datasets and architectures. However, for the experiments covered in the main paper, we use 2 layer settings for PPI-Large, Flickr and Reddit and 3 layer settings for OGBN-Arxiv and OGBN-Proteins datasets consistently for \alg  and the baseline methods, as motivated by literature. We also ensure that the embedding dimensions are uniform across \alg and the baselines. Therefore, to ensure a fair comparison, we perform additional experiments with these parameters for GNNAutoScale set to values that are consistent with our experiments for IGLU and the baselines. We train GNNAutoScale with three variants, namely GCN, GCNII \citep{gcn2} and PNA \citep{pna} and report the results for each of the variant. We also note here that GNNAutoScale was implemented in PyTorch \citep{pytorch} while \alg was implemented in TensorFlow \citep{tensorflow}. While this makes a wall-clock time comparison unsuitable as discussed in Appendix \ref{additional_baselines_text}, we still provide a wall-clock time comparison for completeness. We also include the best performance numbers for GNNAutoScale on these datasets (as reported by the authors in Table 5, Page 9 of the GNNAutoScale paper) across different architectures.  Note that we do not provide comparisons on the OGBN-Proteins dataset since we ran into errors while trying to incorporate the dataset into the official implementation of GNNAutoScale.

\textbf{Results:} Figure \ref{fig:convergence_results_gnnautoscale} provides convergence plots comparing \alg with the different architectures of GNNAutoScale and Table \ref{results_gnnaus} summarizes the test performance on PPI-Large, Flickr, Reddit and OGBN-Arxiv (transductive) datasets. From the table, we observe that \alg offers competitive performance compared to the GCN variant of GAS for the majority of the datasets. We also observe from Figure \ref{fig:convergence_results_gnnautoscale} that \alg offers significant improvements in training time with rapid early convergence on the validation set. We note that more complex architectures such as GCNII and PNA offer improvements in performance to GNNAutoScale. \alg being architecture agnostic can be incorporated with these architectures for further improvements in performance. We leave this as an avenue for future work.

\begin{table}[ht]
\caption{\revision{\textbf{Test Accuracy of \alg compared to GNNAutoScale}.} * - We perform experiments using GNNAutoScale in a setting identical to \alg with 2-layer models on PPI-Large, Reddit and Flickr datasets and 3-layer models on OGBN-Arxiv dataset (transductive) and report the performance. For completeness, we also include the best results from GNNAutoScale for comparison. We were unable to perform experiments with GNNAutoScale on the Proteins dataset, and hence omit it for comparison. We observe that \alg performs competitively with GNNAutoScale for models like GCN on most of the datasets. \alg being architecture agnostic can be further combined with varied architectures like GCNII and PNA to obtain gains offered by these architecture.} 
\centering
 \resizebox{0.9\columnwidth}{!}{
\begin{tabular}{ccccc}
\hline
\textbf{Algorithm} & \textbf{PPI-Large} & \textbf{Reddit} & \textbf{Flickr} & \textbf{Arxiv (Trans) }  \\
\hline \hline
\multicolumn{5}{c}{Our Experiments*} \\
\hline
GAS-GCN & 0.983 & 0.954 & 0.533  & 0.710 \\[0.05cm]
GAS-GCNII & 0.969 & 0.964 & 0.539  & 0.724 \\[0.05cm]
GAS-PNA & 0.917 & 0.970 & 0.555  & 0.714 \\[0.05cm]
\hline
\multicolumn{5}{c}{Best Results: GNNAutoScale, (From Table 5, Page 9)} \\
\hline
GAS-GCN & 0.989 & 0.954 & 0.540  & 0.716 \\[0.05cm]
GAS-GCNII & 0.995 & 0.967 & 0.562  & 0.730 \\[0.05cm]
GAS-PNA & 0.994 & 0.971 & 0.566  & 0.725 \\[0.05cm]
\hline \hline 
\alg & 0.987 $\pm$ 0.004 & 0.964 $\pm$ 0.001 & 0.515 $\pm$ 0.001 & 0.719 $\pm$ 0.002\\[0.05cm]
\hline
\hline 
\end{tabular}
}
\label{results_gnnaus}\vspace*{10pt}
\end{table}

\begin{figure}[ht]
		\includegraphics[width=1.0\textwidth]{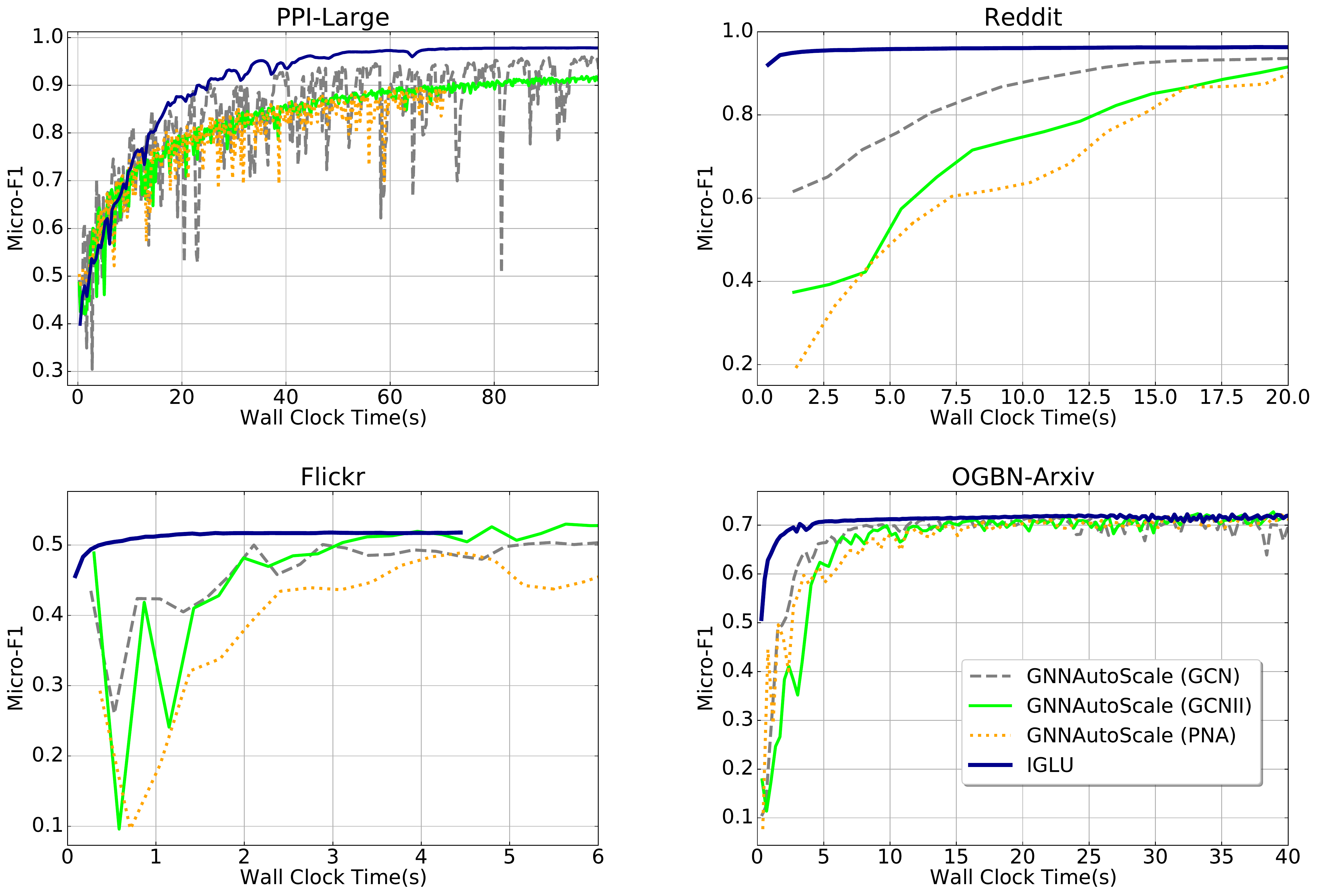}\vspace*{-5pt}
	\caption{\revision{\textbf{Wall Clock Time vs Validation Accuracy on different datasets as compared to GNNAutoScale.}} We perform experiments using GNNAutoScale in a setting identical to \alg with 2-layer models on PPI-Large, Reddit and Flickr datasets and 3-layer models on OGBN-Arxiv dataset and report the performance. \alg offers competitive performance and faster convergence across the datasets. }
	\label{fig:convergence_results_gnnautoscale}
\end{figure}

\subsection{Detailed Description of Algorithms 1 and 2}

We present Algorithms~\ref{algo:bp} and \ref{algo:inv} again below (as Algorithms~\ref{algo:bp-full} and \ref{algo:inv-full} respectively) with details of each step.

\textbf{\alg: backprop order}
Algorithm~\ref{algo:bp-full} implements the \alg algorithm in its backprop variant. Node embeddings $X^k, k \in [K]$ are calculated and kept stale. They are not updated even when model parameters $E^k, k \in [K]$ get updated during the epoch. On the other hand, the incomplete task gradients $\valpha^k, k \in [K]$ are kept refreshed using the recursive formulae given in Lemma~\ref{lem:main}. For sake of simplicity, the algorithm been presented with staleness duration of one epoch i.e. $X^k$ are refreshed at the beginning of each epoch. Variants employing shorter or longer duration of staleness can be also explored simply by updating $X^k, k \in [K]$ say twice in an epoch or else once every two epochs.

\begin{algorithm}[ht]
	\caption{\alg: backprop order}
	\label{algo:bp-full}
	{
	\begin{algorithmic}[1]
		\REQUIRE GCN $\cG$, initial features $X^0$, task loss $\cL$
		\STATE Initialize model parameters $E^k, k \in [K], W^{K+1}$
		\FOR{epoch = $1, 2, \ldots$}
		    \FOR{$k = 1 \ldots K$}
			    \STATE Refresh $X^k \leftarrow f(X^{k-1}; E^k)$ \COMMENT{$X^k$ will be kept stale till next epoch}
			\ENDFOR
			\STATE $\hat\vY \leftarrow X^KW^{K+1}$ \COMMENT{Predictions}
			\STATE $\vG \leftarrow \bs{\frac{\partial\ell_i}{\partial \hat y_{ic}}}_{N \times C}$ \COMMENT{The loss derivative matrix}
			\STATE Compute $\frac{\partial\cL}{\partial W^{K+1}} \leftarrow (X^K)^\top\vG$ \COMMENT{Using Lemma~\ref{lem:main}.1 here}
			\STATE Update $W^{K+1} \leftarrow W^{K+1} - \eta\cdot\frac{\partial\cL}{\partial W^{K+1}}$
			\STATE Refresh $\valpha^K \leftarrow \vG(W^{K+1})^\top$ \COMMENT{Using Lemma~\ref{lem:main}.2 here}
			\FOR{$k = K \ldots 2$}
				\STATE Compute $\frac{\partial\cL}{\partial E^k} \leftarrow \left.\frac{\partial(\valpha^k\odot X^k)}{\partial E^k}\right|_{\valpha^k}$, \COMMENT{Using Lemma~\ref{lem:main}.1 here}
				\STATE Update $E^k \leftarrow E^k - \eta\cdot\frac{\partial\cL}{\partial E^k}$
				\STATE Refresh $\valpha^{k-1} \leftarrow \left.\frac{\partial(\valpha^k\odot X^k)}{\partial X^{k-1}}\right|_{\valpha^k}$ \COMMENT{Using Lemma~\ref{lem:main}.2 here}
			\ENDFOR
		\ENDFOR
	\end{algorithmic}
	}
\end{algorithm}

\textbf{\alg: inverted order}
Algorithm~\ref{algo:inv-full} implements the \alg algorithm in its inverted variant. Incomplete task gradients $\valpha^k, k \in [K]$ are calculated once at the beginning of every epoch and kept stale. They are not updated even when node embeddings $X^k, k \in [K]$ get updated during the epoch. On the other hand, the node embeddings $X^k, k \in [K]$ are kept refreshed. For sake of simplicity, the algorithm been presented with staleness duration of one epoch i.e. $\valpha^k$ are refreshed at the beginning of each epoch. Variants employing shorter or longer durations of staleness can be also explored simply by updating $\valpha^k$ say twice in an epoch or else once every two epochs. This has been explored in Section~\ref{ablation} (see paragraph on "Analysis of Degrees of Staleness.").

\begin{algorithm}[t]
	\caption{\alg: inverted order}
	\label{algo:inv-full}
	{
	\begin{algorithmic}[1]
		\REQUIRE GCN $\cG$, initial features $X^0$, task loss $\cL$
		\STATE Initialize model parameters $E^k, k \in [K], W^{K+1}$
		\FOR{$k = 1 \ldots K$}
			    \STATE $X^k \leftarrow f(X^{k-1}; E^k)$ \COMMENT{Do an initial forward pass}
			\ENDFOR
		\FOR{epoch = $1,2,\ldots$}
		    \STATE $\hat\vY \leftarrow X^KW^{K+1}$ \COMMENT{Predictions}
			\STATE $\vG \leftarrow \bs{\frac{\partial\ell_i}{\partial \hat y_{ic}}}_{N \times C}$ \COMMENT{The loss derivative matrix}\newline
			\COMMENT{Use Lemma~\ref{lem:main}.2 to refresh $\valpha^k, k \in [K]$.}
			\STATE Refresh $\valpha^K \leftarrow \vG(W^{K+1})^\top$
			\FOR{$k = K \ldots 2$}
			    \STATE Refresh $\valpha^{k-1} \leftarrow \left.\frac{\partial(\valpha^k\odot X^k)}{\partial X^{k-1}}\right|_{\valpha^k}$
		    \ENDFOR\newline
			\COMMENT{These $\valpha^k, k \in [K]$ will now be kept stale till next epoch}
		    \FOR{$k = 1 \ldots K$}
		        \STATE Compute $\frac{\partial\cL}{\partial E^k} \leftarrow \left.\frac{\partial(\valpha^k\odot X^k)}{\partial E^k}\right|_{\valpha^k}$, \COMMENT{Using Lemma~\ref{lem:main}.1 here}
				\STATE Update $E^k \leftarrow E^k - \eta\cdot\frac{\partial\cL}{\partial E^k}$
				\STATE Refresh $X^k \leftarrow f(X^{k-1}; E^k)$
			\ENDFOR
			\STATE Compute $\frac{\partial\cL}{\partial W^{K+1}} \leftarrow (X^K)^\top\vG$ \COMMENT{Using Lemma~\ref{lem:main}.1 here}
			\STATE Update $W^{K+1} \leftarrow W^{K+1} - \eta\cdot\frac{\partial\cL}{\partial W^{K+1}}$
		\ENDFOR
	\end{algorithmic}
	}
\end{algorithm}
\subsection{\revision{OGBN-Proteins: Validation performance at a mini-batch level}}

\revision{The performance analysis in the main paper was plotted at a coarse granularity of an epoch. We refer to an epoch as one iteration of steps 6 to 18 (both inclusive) in Algorithm \ref{algo:inv-full}. For a finer analysis of \alg's performance on the OGBN-Proteins dataset, we measure the Validation ROC-AUC at the granularity of a mini-batch. As mentioned in the ``SGD Implementation" paragraph in Page 5 below the algorithm description in the main paper, steps 14 and 18 in Algorithm \ref{algo:inv-full} are implemented using mini-batch SGD. Recall that OGBN-Proteins used a 3 layer GCN. For the proteins dataset we have $\sim$ 170 mini-batches for training. We update the parameters for each layer using all the mini-batches from the layer closest to the input to the layer closest to the output as detailed in Algorithm \ref{algo:inv-full}. To generate predictions, we compute partial forward passes after the parameters of each layer is updated.  We plot the validation ROC-AUC as the first epoch progresses in Figure \ref{fig:proteins_mb}. We observe that when the layer closest to the input is trained (Layer 1 in the figure), \alg has an ROC-AUC close to 0.5 on the validation set. Subsequently once the second layer  (Layer 2 in the figure) is trained, we observe that the validation ROC-AUC improves from around $\sim$ 0.51 to $\sim$ 0.57 and finally once the layer closest to the output is trained  (Layer 3 in the figure), the ROC-AUC progresses quickly to a high validation ROC-AUC of $\sim$ 0.81. In the figure in the main paper the high validation score reflects the result at the end of the first epoch. Training the GCN using a total of $\sim$ 510 minibatches ($\sim$ 170 per layer) approximately takes 5 seconds as depicted in Figure \ref{fig:convergence_results} in the main paper}. 
\begin{figure}[ht]
    \begin{center}
		\includegraphics[width=0.6\textwidth]{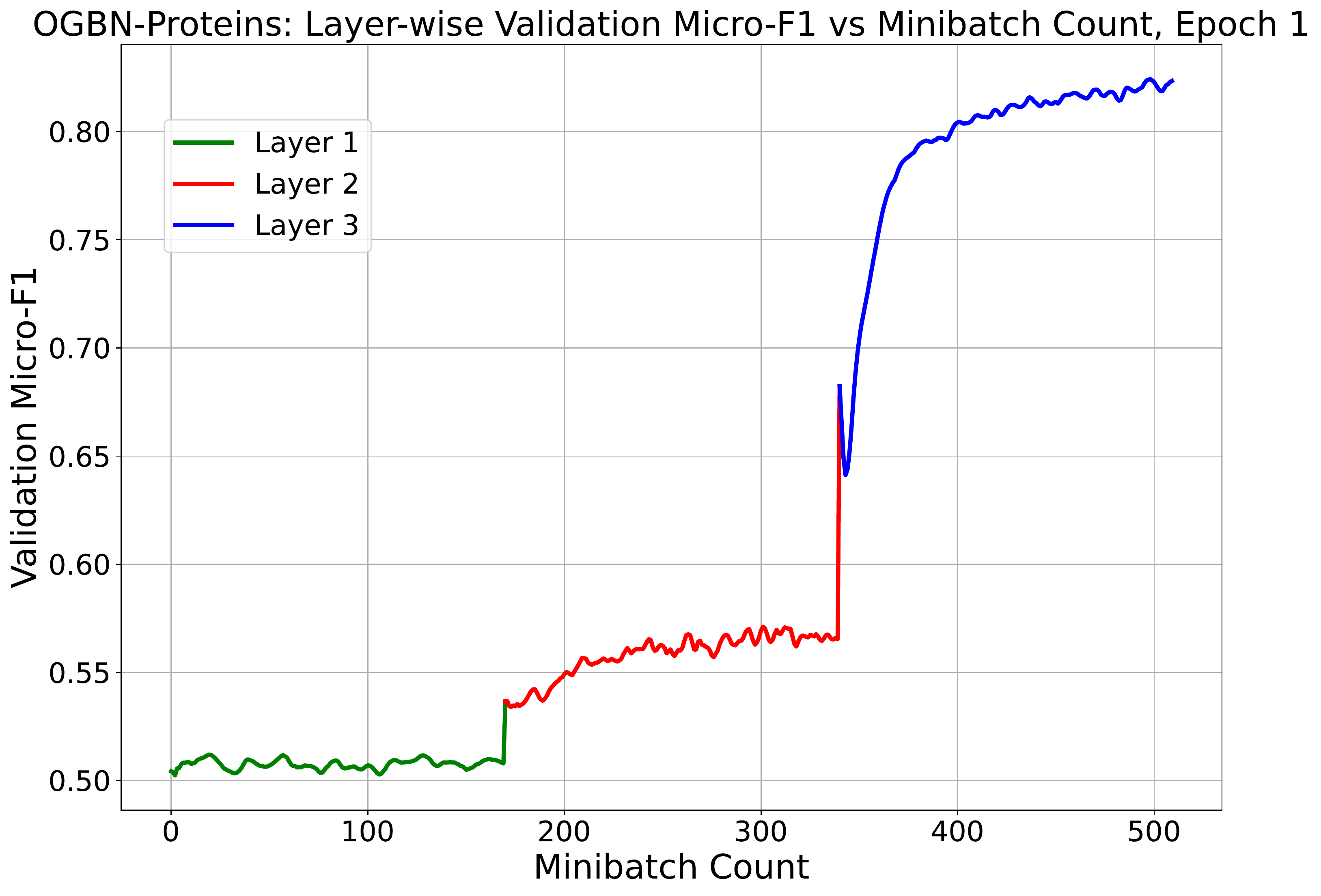}\vspace*{-5pt} 
    \end{center}
	\caption{\revision{\textbf{Fine-grained Validation ROC-AUC for IGLU on the Proteins Dataset for Epoch 1.}} We depict the value of Validation ROC - AUC at the granularity of a minibatch for the first epoch. We observe that the Validation ROC-AUC begins with a value close to 0.5 and quickly reaches an ROC-AUC of 0.81 by the end of the first epoch.  As mentioned in the text, Proteins uses a 3 layer GCN and each layer processes $\sim$ 170 mini-batches.}
	\label{fig:proteins_mb}
	
\end{figure}

\color{black}

\section{Dataset Statistics and Additional Experimental Results}
\label{data_baselines}

\subsection{Dataset Statistics}
Table \ref{dataset} provides details on the benchmark node classification datasets used in the experiments. The following five benchmark datasets were used to empirically demonstrate the effectiveness of \alg: predicting the communities to which different posts belong in Reddit\footnote{\href{http://snap.stanford.edu/graphsage/reddit.zip}{http://snap.stanford.edu/graphsage/reddit.zip}} \citep{graphsage}, classifying protein functions across various biological protein-protein interaction graphs in PPI-Large\footnote{\href{http://snap.stanford.edu/graphsage/ppi.zip}{http://snap.stanford.edu/graphsage/ppi.zip}}  \citep{graphsage}, categorizing types of images based on descriptions and common properties in Flickr\footnote{\href{https://drive.google.com/drive/folders/1apP2Qn8r6G0jQXykZHyNT6Lz2pgzcQyL}{https://github.com/GraphSAINT/GraphSAINT - Google Drive Link}} \citep{graphsaint}, predicting paper-paper associations in OGBN-Arxiv\footnote{\href{https://ogb.stanford.edu/docs/nodeprop/\#ogbn-arxiv}{https://ogb.stanford.edu/docs/nodeprop/\#ogbn-arxiv}} \citep{ogb} and categorizing meaningful associations between proteins in OGBN-Proteins\footnote{\href{https://ogb.stanford.edu/docs/nodeprop/\#ogbn-proteins}{https://ogb.stanford.edu/docs/nodeprop/\#ogbn-proteins}} \citep{ogb}.

\begin{table}[t]
\caption{\textbf{Datasets used in experiments along with their statistics.} MC refers to a multi-class problem, whereas ML refers to a multi-label problem.} 
\centering
\resizebox{\textwidth}{!}{
\begin{tabular}{ccccccc}
\hline 
Dataset & \# Nodes & \# Edges & Avg. Degree & \# Features & \# Classes & Train/Val/Test \\[0.05cm]
\hline
\hline
PPI-Large & 56944 & 818716 & 14 & 50 & 121 (ML) & 0.79/0.11/0/10\\[0.05cm]
Reddit & 232965 & 11606919 & 60 & 602 & 41 (MC) & 0.66/0.10/0.24\\[0.05cm]
Flickr & 89250 & 899756 & 10 & 500 & 7 (MC) & 0.5/0.25/0.25\\[0.05cm]
OGBN-Proteins & 132534 & 39561252 & 597 & 8 & 112 (ML) & 0.65/0.16/0.19\\[0.05cm]
OGBN-Arxiv & 169343 & 1166243 & 13 & 128 & 40 (MC) & 0.54/0.18/0.28\\[0.05cm]
\hline
\end{tabular}
}
\vspace{1.5pt}
\label{dataset}
\end{table} 

\subsection{Comparison with additional Baselines}\label{additional_baselines_text}

In addition to the baselines mentioned in Table \ref{results}, Table \ref{results_additional} compares \alg to LADIES \citep{ladies}, L2-GCN \citep{l2gcn}, AS-GCN \citep{asgcn}, MVS-GNN \citep{mvsgcn}, FastGCN \citep{fastgcn}, SIGN \citep{frasca2020sign}, PPRGo \citep{pprgo} and Bandit Sampler's \citep{liu2020bandit} performance on the test set. However, a wall-clock time comparison with these methods is not provided since the author implementations of LADIES, L2GCN, MVS-GNN and SIGN are in PyTorch \citep{pytorch} which has been shown to be less efficient than Tensorflow \citep{clustergcn, tensorflow} for GCN applications.  Also, the official AS-GCN, FastGCN and Bandit Sampler implementations released by the authors were for 2 layer models only, whereas some datasets such as Proteins and Arxiv require 3 layer models for experimentation. Attempts to generalize the code to a 3 layer model ran into runtime errors, hence the missing results are denoted by ** in Table \ref{results_additional} and these methods are not considered for a timing analysis.  MVS-GNN also runs into a runtime error on the Proteins dataset denoted by ||.  \emph{\alg continues to significantly outperform all additional baselines on all the datasets.}

\begin{table}[ht]
\caption{\textbf{{Performance on Test Set for \alg compared to additional algorithms.}} The metric is ROC-AUC for Proteins and Micro-F1 for the others \alg still retains the state-of-the-art results across all datasets even when compared to these new baselines. MVS-GNN ran into runtime error on the Proteins dataset  \textbf{(denoted by ||)}. AS-GCN, FastGCN and BanditSampler run into a runtime error on datasets that require more than two layers \textbf{(denoted by $**$)}. Please refer to section \ref{additional_baselines_text} for details.}
\centering
\resizebox{\textwidth}{!}{
\begin{tabular}{cccccc}
\hline 
Algorithm & PPI-Large & Reddit & Flickr & Proteins & Arxiv  \\
\hline \hline
LADIES & 0.548 $\pm$ 0.011 &  0.923 $\pm$ 0.008  & 0.488 $\pm$ 0.012 & 0.636 $\pm$ 0.011 & 0.667 $\pm$ 0.002 \\[0.05cm]
L2GCN & 0.923 $\pm$ 0.008 & 0.938 $\pm$ 0.001 & 0.485 $\pm$ 0.001 & 0.531 $\pm$ 0.001 & 0.656 $\pm$ 0.004 \\[0.05cm]
ASGCN & 0.687 $\pm$ 0.001 & 0.958 $\pm$ 0.001 & 0.504 $\pm$ 0.002 &  ** & ** \\[0.05cm]
MVS-GNN & 0.880 $\pm$ 0.001 & 0.950 $\pm$ 0.001 & 0.507 $\pm$ 0.002 &  || & 0.695 $\pm$ 0.003\\
FastGCN & 0.513  $\pm$ 0.032 & 0.924  $\pm$ 0.001 & 0.504  $\pm$ 0.001 & ** & ** \\[0.05cm]
SIGN & 0.970  $\pm$ 0.003 & \textbf{0.966  $\pm$ 0.003} & 0.510  $\pm$ 0.001 & 0.665  $\pm$ 0.008 & 0.649  $\pm$ 0.003 \\[0.05cm]
PPRGo  & 0.626  $\pm$ 0.002  & 0.946  $\pm$ 0.001 & 0.501  $\pm$ 0.001 & 0.659  $\pm$ 0.006  & 0.678  $\pm$ 0.003\\[0.05cm]
BanditSampler & 0.905  $\pm$ 0.003 & 0.957  $\pm$ 0.000 & 0.513  $\pm$ 0.001 & ** & ** \\[0.05cm]
\hline \hline 
IGLU & \textbf{0.987} $\pm$ 0.004 & 0.964 $\pm$ 0.001
& \textbf{0.515} $\pm$ 0.001 & \textbf{0.784} $\pm$ 0.004 & \textbf{0.718} $\pm$ 0.001\\[0.05cm]
\hline
\end{tabular}
}

\label{results_additional}
\end{table} 

\begin{table}[ht]
	\caption{\textbf{Per epoch time (in seconds) for different methods as the number of layers increase on the OGBN-Proteins dataset.} ClusterGCN ran into a runtime error on this dataset as noted earlier. VRGCN ran into a runtime error for a 4 layer model (\textbf{denoted by ||}). \alg and GraphSAINT scale almost linearly with the number of layers. It should be noted that these times strictly include only optimization time. GraphSAINT has a much lower per-epoch time than \alg because of the large sizes of subgraphs per batch ($\sim$ 10000 nodes), while \alg uses minibatches of size of 512. This results in far less gradient updates within an epoch for GraphSAINT when compared with \alg, resulting in a much smaller per-epoch time but requiring more epochs overall. Please refer to section \ref{per_epoch_analysis} for details. }
\vspace{2mm}
\centering{	
	\begin{tabular}{|c|c|c|c|}
			\hline
			&  \multicolumn{3}{c|}{\textbf{Number of Layers}} \\[0.1cm] \hline
			\textbf{Method} & \textbf{2} & \textbf{3} & \textbf{4} \\[0.1cm] \hline
			GraphSAGE & 2.6  & 14.5  & 163.1  \\[0.1cm] \hline
			VR-GCN  & 2.3    & 21.5  & ||    \\[0.1cm] \hline
			GraphSAINT & 0.45 & 0.61 & 0.76  \\[0.1cm] \hline \hline
			IGLU     & 2.97  & 5.27  & 6.99  \\[0.1cm] \hline
						
	\end{tabular}}
	\label{scaling_epoch_time}
\end{table}

\begin{table}[t]
	\caption{\textbf{Test Performance (ROC-AUC) at Best Validation for different methods as the number of layers increase on the OGBN-Proteins Dataset.} Results are reported for a single run but trends were observed to remain consistent across repeated runs. VRGCN ran into runtime error for 4 layers (\textbf{denoted by ||}). \alg offers steady increase in performance as the number of layers increase, as well as state-of-the-art performance throughout. GraphSAGE shows a decrease in performance on moving from 3 to 4 layers while GraphSAINT shows only a marginal increase in performance. Please refer to section \ref{test_auc_convergence} for details. }
\vspace{3mm}
\centering{	
	\begin{tabular}{|c|c|c|c|}
			\hline
			&  \multicolumn{3}{c|}{\textbf{Number of Layers}} \\ [0.1cm]\hline
			\textbf{Method} & \textbf{2} & \textbf{3} & \textbf{4} \\ [0.1cm]\hline
			GraphSAGE & 0.755  & 0.759  & 0.742  \\ [0.1cm]\hline
			VR-GCN  & 0.732  & 0.749  & ||    \\ [0.1cm] \hline
			GraphSAINT & 0.752 & 0.764 & 0.767       \\ [0.1cm] \hline \hline
			IGLU     & \textbf{0.768}  & \textbf{0.783}  & \textbf{0.794}       \\ \hline
						
	\end{tabular}}
	\label{scaling_test_performance}
	\vspace{5mm}
\end{table}

\begin{figure}[ht]
    \begin{center}
		\includegraphics[width=0.6\textwidth]{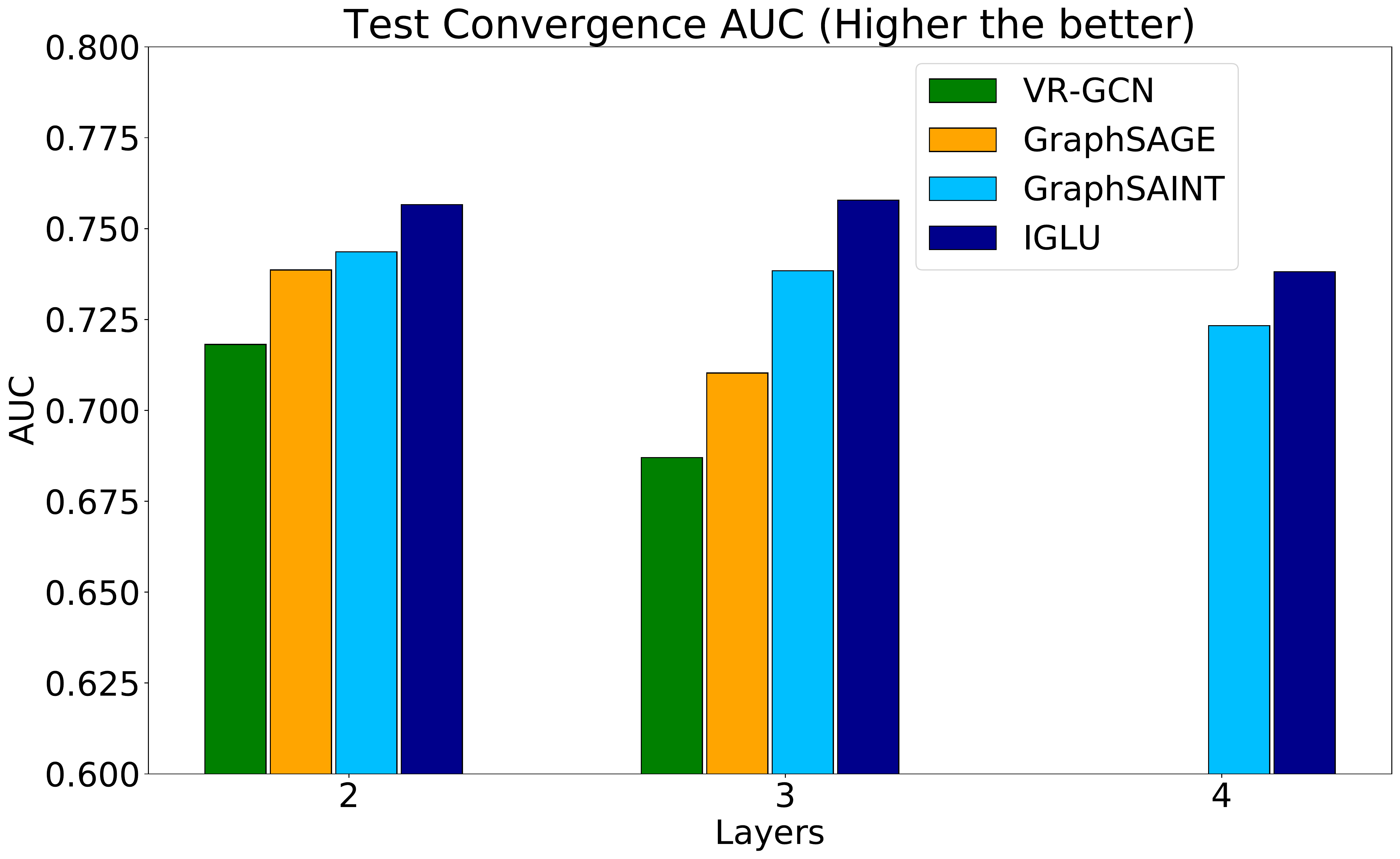}\vspace*{-5pt}
    \end{center}
	\caption{\textbf{Test Convergence AUC plots across different number of layers on the OGBN-Proteins dataset.} \alg has consistently higher AUTC values compared to the other baselines, demonstrating increased stability, faster convergence and better generalization. GraphSAGE suffers from neighborhood explosion problem and the training became very slow as noted earlier. This results in a decrease in the AUTC while going from 3 to 4 layers. GraphSAGE's AUTC for 4 layers is only 0.313, and is thus not visible in the plot. VRGCN also suffers from the neighborhood explosion problem and runs into runtime errors for a 4 layer model. ClusterGCN runs into runtime error for the OGBN-Proteins for all of 2, 3 and 4 layers and is therefore not present in this analysis. Please refer to Section \ref{test_auc_convergence} for details.}
	\label{fig:scaling_test_auc_analysis}
\end{figure}

\subsection{Timing Analysis for scaling to more layers}\label{app:deeper}
To compare the scalability and performance of different algorithms for deeper models, models with 2, 3 and 4 layers were trained for \alg and the baseline methods. \alg was observed to offer a per-epoch time that scaled roughly \textbf{linearly with the number of layers} as well as offer the \textbf{highest gain in test performance as the number of layers was increased}.

\subsubsection{Per Epoch Training Time}\label{per_epoch_analysis}
Unlike neighbor sampling methods like VRGCN and GraphSAGE, \alg does not suffer from the neighborhood explosion problem as the number of layers increases, since \alg updates involve only a single layer at any given time. We note that GraphSAINT and ClusterGCN also do not suffer from the neighborhood explosion problem directly since they both operate by creating GCNs on subgraphs. However, these methods may be compelled to select large subgraphs or else suffer from poor convergence. To demonstrate \alg's effectiveness in solving the neighborhood explosion problem, the per-epoch training times are summarized as a function of the number of layers in Table \ref{scaling_epoch_time} on the Proteins dataset. A comparison with ClusterGCN could not be provided as since it ran into runtime errors on this dataset. In Table \ref{scaling_epoch_time}, while going from 2 to 4 layers, GraphSAINT was observed to require $\sim 1.6 \times $ more time per epoch while \alg required $\sim 2.3 \times$ more time per epoch, with both methods scaling almost linearly with respect to number of layers as expected. However, GraphSAGE suffered a $\sim 62 \times$ increase in time taken per epoch in this case, suffering from the neighborhood explosion problem. VRGCN ran into a run-time error for the 4 layer setting (denoted by || in Table \ref{scaling_epoch_time}). Nevertheless, even while going from 2 layers to 3 layers, VRGCN and GraphSAGE are clearly seen to suffer from the neighborhood explosion problem, resulting in an increase in training time per epoch of almost $\sim 9.4 \times$ and $\sim 5.6 \times$  respectively.

We note that the times for GraphSAINT in Table \ref{scaling_epoch_time} are significantly smaller than those of \alg even though earlier discussion reported \alg as having the fastest convergence. However, there is no contradiction -- GraphSAINT operates with very large subgraphs, with each subgraph having almost 10 \% of the nodes of the entire training graph ($\sim 10000$ nodes in a minibatch), while \alg operates with minibatches of size 512, resulting in \alg performing a lot more gradient updates within an epoch, as compared with GraphSAINT. Consequently, \textbf{\alg also takes fewer epochs to converge to a better solution than GraphSAINT}, thus compensating for the differences in  time taken per epoch.

\subsubsection{Test Convergence AUC} \label{test_auc_convergence}
This section explores the effect of increasing the number of layers on convergence rate, optimization time, and final accuracy, when scaling to larger number of layers. To jointly estimate the efficiency of a method in terms of wall clock time and test performance achieved, the area under the test convergence plots (AUTC) for various methods was computed. A method that converged rapidly and that too to better performance levels would have a higher AUTC than a method that converges to suboptimal values or else converges very slowly. To fairly time all methods, each method was offered time that was triple of the time it took the best method to reach its highest validation score. Defining the cut-off time this way ensures that methods that may not have rapid early convergence still get a fair chance to improve by having better final performance, while also simultaneously penalizing methods that may have rapid early convergence but poor final performance. We rescale the wall clock time to be between 0 and 1, where 0 refers to the start of training while 1 refers to the cut-off time. 

\textbf{Results:}
Figure \ref{fig:scaling_test_auc_analysis} summarizes the AUTC values. \alg consistently obtains higher AUC values than all methods for all number of layers demonstrating its stability, rapid convergence during early phases of training and ability to generalize better as compared to other baselines. GraphSAGE suffered from neighborhood explosion leading to increased training times and hence decreased AUC values as the number of layers increase. VR-GCN also suffered from the same issue, and additionally ran into a run-time error for 4 layer models. 

\textbf{Test Performance with Increasing Layers:} Table \ref{scaling_test_performance} summarizes the final test performances for different methods, across different number of layers. The performance for some methods is inconsistent as the depth increases whereas \textit{\alg consistently outperforms all the baselines in this case as well} with gains in performance as we increase the number of layers hence \textit{making it an attractive technique to train deeper GCN models}. 

\subsection{Scalability to Larger Datasets: OGBN - Products}\label{app:largegraph}

{To demonstrate the ability of IGLU to scale to very large datasets, we performed experiments on the OGBN-Products dataset \citep{ogb}, one of the largest datasets in the OGB collection, with its statistics summarized in Table \ref{dataset_products}.}

\begin{table}[ht]
\caption{\textbf{{Statistics for the OGB-Products datasets}}. MC refers to a multi-class problem, whereas ML refers to a multi-label problem.} 
\centering
\resizebox{\textwidth}{!}{
\begin{tabular}{ccccccc}
\hline 
Dataset & \# Nodes & \# Edges & Avg. Degree & \# Features & \# Classes & Train/Val/Test \\[0.05cm]
\hline
\hline
OGBN-Products & 2,449,029 & 61,859,140 & 50.5 & 100 & 47 (MC) & 0.08/0.02/0.90\\[0.05cm]
\hline
\end{tabular}
}
\vspace{1.5pt}
\label{dataset_products}
\end{table} 

\begin{table}[ht]
\caption{\textbf{{Performance on the OGBN-Products Test Set for \alg compared to baseline algorithms.}} \alg outperforms all the baseline methods on this significantly large dataset as well.}
\centering
\begin{tabular}{cc}
\hline 
Algorithm & Test Micro-F1\\
\hline \hline
GCN & 0.760 $\pm$ 0.002 \\[0.05cm]
GraphSAGE & 0.787 $\pm$ 0.004 \\[0.05cm]
ClusterGCN & 0.790 $\pm$ 0.003 \\
GraphSAINT & 0.791  $\pm$ 0.002 \\[0.05cm]
SIGN (3,3,0) & 0.771  $\pm$ 0.001 \\[0.05cm]
SIGN (5,3,0)  & 0.776  $\pm$ 0.001 \\[0.05cm]
\hline \hline 
IGLU & \textbf{0.793} $\pm$ 0.003 \\[0.05cm]
\hline
\end{tabular}
\label{results_products}
\end{table}

{To demonstrate scalability, we conducted experiments in the transductive setup since this setup involves using the full graph. In addition, this was the original setup in which the dataset was benchmarked, therefore allowing for a direct comparison with the baselines (results taken from Table 4 in OGB \citep{ogb} and Table 6 in SIGN \citep{frasca2020sign}). 

We summarize the performance results in the Table \ref{results_products}, reporting Micro-F1 as the metric. We however do not provide timing comparisons with the baseline methods since \alg is implemented in TensorFlow while the baselines in the original benchmark \cite{ogb} are implemented in PyTorch. This therefore renders a direct comparison of wall-clock time unsuitable. Please refer to Appendix \ref{additional_baselines_text}  for more details.}

{We observe that IGLU is able to scale to the OGBN-Products dataset with over 2.4 million nodes and outperforms all of the baseline methods.}

\subsection{Applicability of IGLU in the transductive setting: OGBN-Arxiv and OGBN-Proteins}\label{app:applicability}

{In addition to the results in the inductive setting reported in the main paper, we perform additional experiments on the OGBN-Arxiv and OGBN-Proteins dataset in the transductive setting to demonstrate IGLU’s applicability across inductive and transductive tasks and compare the performance of IGLU to that of transductive baseline methods in Table \ref{results_transductive} (results taken from OGB \citep{ogb}).}

\begin{table}[ht]
\caption{\textbf{{Comparison of IGLU’s Test performance with the baseline methods in the transductive setting on the OGBN-Arxiv and OGBN-Proteins datasets.}} The metric is Micro-F1 for OGBN-Arxiv and ROC-AUC for OGBN-Proteins. }
\centering
\resizebox{0.6\textwidth}{!}{
\begin{tabular}{ccc}
\hline 
Algorithm & OGBN-Arxiv & OGBN-Proteins\\
\hline \hline
GCN & 0.7174 $\pm$  0.0029	& 0.7251$\pm$ 0.0035 \\[0.05cm]
GraphSAGE & 0.7149  $\pm$ 0.0027 &  0.7768  $\pm$ 0.0020\\[0.05cm]
\hline \hline 
IGLU & \textbf{0.7193} $\pm$ 0.0018 & \textbf{0.7840} $\pm$ 0.0061 \\[0.05cm]
\hline
\end{tabular}
}
\label{results_transductive}
\end{table} 

{We observe that even in the transductive setting, IGLU outperforms the baseline methods on both the OGBN-Arxiv and OGBN-Proteins datasets.}

\subsection{Architecture agnostic nature of IGLU}\label{app:archagnostic}

{To demonstrate the applicability of IGLU to a wide-variety of architectures, we perform experiments on IGLU with  Graph Attention Networks (GAT) \citep{gat}, GCN \citep{gcn} and GraphSAGE \citep{graphsage} based architectures and summarize the results for the same in Table \ref{results_gat} and \ref{results_gcn_sage} respectively as compared to these baseline methods. We use the Cora, Citeseer and Pubmed datasets as originally used in \cite{gat} for comparison with the GAT based architecture and the OGBN-Arxiv dataset for comparison with GCN and GraphSAGE based architectures (baseline results as originally reported in \citep{ogb}).}

\begin{table}[ht]
\centering
\parbox{.49\linewidth}{
\caption{\textbf{{Comparison of IGLU +GAT’s test performance with the baseline GAT on different datasets.}}}
\centering
\resizebox{0.5\textwidth}{!}{
\begin{tabular}{cccc}
\hline 
Algorithm & Cora & Citeseer & Pubmed\\
\hline \hline
GAT & 0.823 $\pm$  0.007	& 0.711  $\pm$ 0.006 & 0.786 $\pm$ 0.004  \\[0.05cm]
\hline \hline 
\textbf{IGLU + GAT} & \textbf{0.829 } $\pm$ 0.004 & \textbf{0.717} $\pm$ 0.005 &  \textbf{0.787} $\pm$ 0.002 \\[0.05cm]
\hline
\end{tabular}
}
\label{results_gat}
}
\hfill
\parbox{.49\linewidth}{
\caption{\textbf{{Comparison of IGLU's test performance with GCN and GraphSAGE architectures with the baseline methods on the OGBN-Arxiv dataset.}}}
\centering
\resizebox{0.35\textwidth}{!}{
\begin{tabular}{cc}
\hline 
Algorithm & OGBN-Arxiv \\
\hline \hline
GCN & 0.7174 $\pm$  0.0029\\[0.05cm]
GraphSAGE & 0.7149 $\pm$  0.0027\\[0.05cm]
\hline \hline 
\textbf{IGLU + GCN} & \textbf{0.7187} $\pm$ 0.0014 \\[0.05cm]
\textbf{IGLU + GraphSAGE} & \textbf{0.7155} $\pm$ 0.0032 \\[0.05cm]
\hline
\end{tabular}
}\label{results_gcn_sage}
}
\end{table}

{We observe that the IGLU+GAT, IGLU+GCN and IGLU+GraphSAGE outperforms the baseline methods across datasets, thereby demonstrating the architecture agnostic nature of \alg.}

\subsection{Experiments Using a Smooth Activation Function: GELU} \label{app:smooth}

{To understand the effect of using non-smooth vs smooth activation functions on \alg, we perform experiments using the GELU \citep{hendrycks2020gaussian} activation function which is a smooth function and in-line with the objective smoothness assumptions made in Theorem~\ref{thm:conv}}.

$$
\operatorname{GELU}(x)=x P(X \leq x)=x \Phi(x)=x \cdot \frac{1}{2}[1+\operatorname{erf}(x / \sqrt{2})]
$$

{We compare the performance of IGLU using ReLU and GELU on all the 5 datasets in the main paper and  summarize the results in Table \ref{relu_vs_gelu}.}

\begin{table}[ht]
\caption{\textbf{{Effect of Non-smooth vs Smooth activation functions: Test Performance of IGLU on different datasets using ReLU and GELU activation functions.}} Metrics are the same for the datasets as reported in Table 1 of the main paper. Results reported are averaged over five different runs.} 
\centering
{
\begin{tabular}{ccc}
\hline \hline
Dataset & ReLU & GELU \\
\hline 
PPI-Large  & 0.987  $\pm$ 0.004 & 0.987 $\pm$ 0.000 \\
Reddit & 0.964 $\pm$  0.001 & 0.962  $\pm$ 0.000 \\
Flickr  & 0.515 $\pm$ 0.001 & 0.516 $\pm$ 0.001 \\
Proteins  & 0.784  $\pm$  0.004 & 0.782 $\pm$ 0.002 \\
Arxiv  & 0.718 $\pm$ 0.001	 & 0.720  $\pm$ 0.002 \\
\hline 
\hline
\end{tabular}
}
\vspace{1.5pt}
\label{relu_vs_gelu}
\end{table} 

{We observe that IGLU is able to enjoy very similar performance across both GELU and ReLU as the activation functions, thereby justifying the practicality of our smoothness assumptions.}

\subsection{Analysis on Degree of Staleness: Backprop Order of Updates}\label{app:staleness}

{To understand the effect of more frequent updates in the backprop variant, we performed additional experiments using the backprop variant on the PPI-Large dataset and varied the frequency of updates, the results of which are reported in Table \ref{staleness_table_backprop} for a single run. We fix the hyperparameters and train for 200 epochs.}

\begin{table}[ht]
\caption{\textbf{{Accuracy vs  different update frequency on PPI-Large: Backprop Order}}} 
\centering
{
\begin{tabular}{cccc}
\hline \hline
Update Frequency & Train Micro-F1 & Validation Micro-F1  & Test Micro-F1 \\
\hline 
0.5  & 0.761 & 0.739 & 0.756 \\
\hline 
1 & 0.805 & 0.778 & 0.796 \\
\hline 
2 & 0.794 & 0.769 &  0.784\\
\hline 
\hline
\end{tabular}
}
\vspace{1.5pt}
\label{staleness_table_backprop}
\end{table} 

{We observe that more frequent updates help stabilize training better. We also observe that update frequencies 1 and 2 perform competitively, and both significantly outperform update frequency 0.5.}

{However we note that with higher update frequency, we incur an additional computational cost since we need to re-compute embeddings more frequently. We believe that both improved stability and competitive performance can be attributed to the fresher embeddings, which are otherwise kept stale within an epoch in this order of updates. The experiment with frequency 0.5 has a slower convergence and comparatively poor performance as expected.}

\subsection{Convergence - Train Loss vs Wall Clock Time}

\begin{figure}[ht]
	\begin{tabular}{c}\hspace*{-8pt}
		\includegraphics[width=1.0\textwidth]{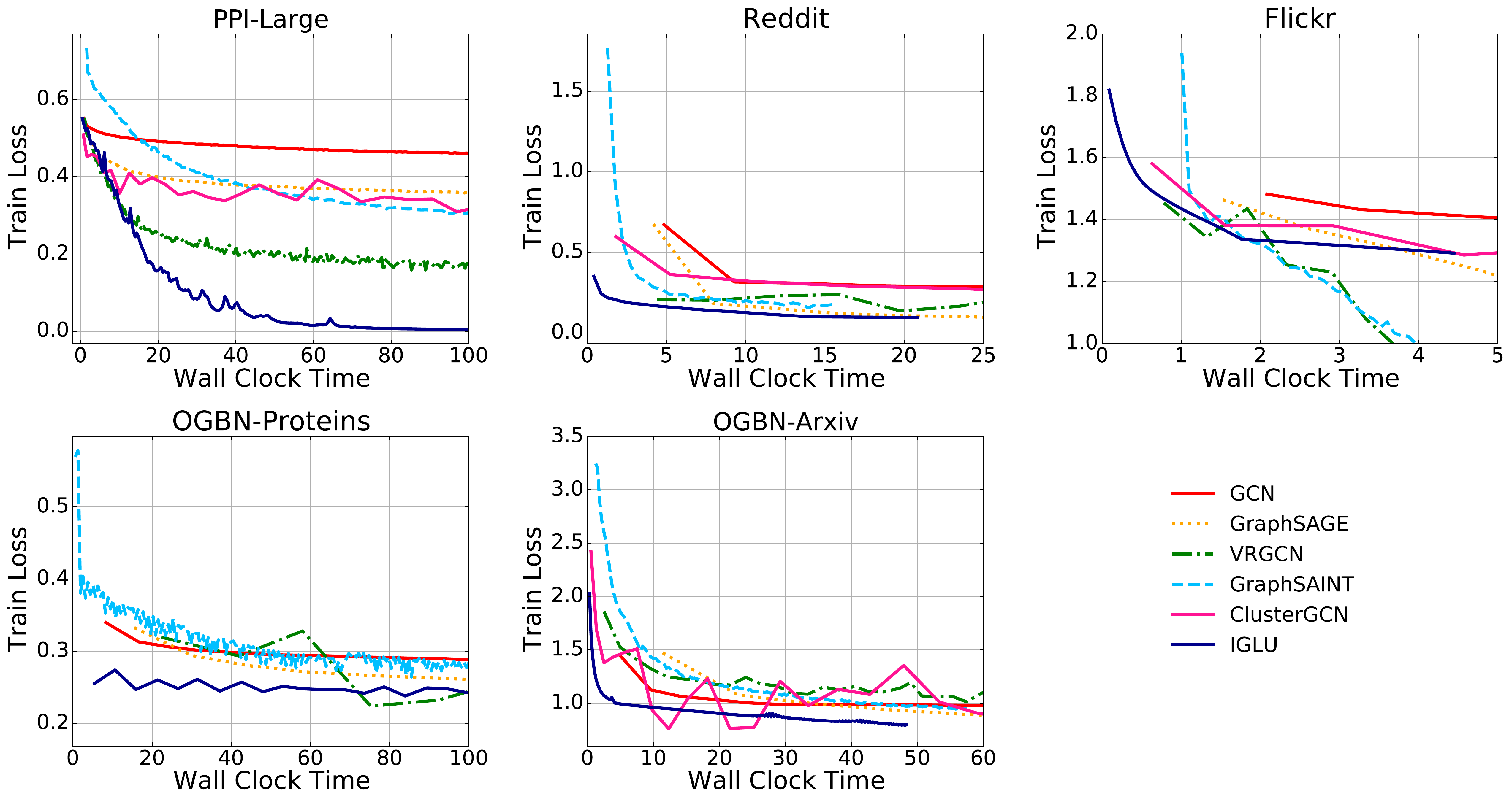}
	\end{tabular}
	\caption{\textbf{Training Loss curves of  different methods on the benchmark datasets against Wall clock time.}}
	\label{fig:train_loss_results_epoch_appendix}
\end{figure}

Figure \ref{fig:train_loss_results_epoch_appendix} provides train loss curves for all datasets and methods in Table \ref{results}. \alg is able to achieve a lower training loss faster compared to all the baselines across datasets.

\section{Theoretical Proofs}
\label{sec:proofs}
\allowdisplaybreaks
In this section we provide a formal restatement of Theorem~\ref{thm:conv} as well as its proof. We also provide the proof of Lemma 1 below.

\subsection{Proof for Lemma 1}
\textbf{Proof of Lemma 1} We consider the two parts of Lemma 1 separately and consider two cases while proving each part. For each part, Case 1 pertains to the final layer and Case 2 considers intermediate layers.

\textbf{Clarification about some Notation in the statements of Definition 1 and Lemma 1}: The notation $\left.\frac{\partial(\vG\odot\hat\vY)}{\partial X^k_{jp}}\right|_{\vG}$ is meant to denote the partial derivative w.r.t $X^k_{jp}$ but while keeping $\vG$ fixed i.e. treated as a constant or being ``conditioned upon'' (indeed both $\vG$ and $\hat\vY$ depend on $X^k_{jp}$ but the definition keeps $\vG$ fixed while taking derivatives). 

Similarly, in Lemma 1 part 1, $\valpha^k$ is fixed (treated as a constant) in the derivative in the definition of $\partial{\mathcal L}/\partial E^k$ and in part 2, $\valpha^{k+1}$ is fixed (treated as a constant) in the derivative in the definition of $\valpha^k$.

\textbf{Recalling some Notation for sake of completeness}: 
We recall that $\vx^k_i$ is the $k$-th layer embedding of node $i$. $\vy_i$ is the $C$-dimensional ground-truth label vector for node $i$ and $\hat\vy_i$ is the $C$-dimensional predicted score vector for node $i$. $K$ is total number of layers in the GCN. ${\mathcal L}_i$ denotes the loss on the $i$-th node and ${\mathcal L}$ denotes the total loss summed over all nodes. $d_k$ is the embedding dimensionality after the $k$-th layer.

\textbf{Proof of Lemma 1.1} We analyze two cases

\textbf{Case 1} ($k = K$ i.e. final layer): Recall that the predictions for node $i$ are obtained as $\hat\vy_i = (W^{K+1})^\top x^K_i$. Thus we have
\[
\frac{\partial{\mathcal L}}{\partial W^{K+1}} = \sum_{i \in \cV}\sum_{c \in [C]}\frac{\partial{\mathcal L}_i}{\partial \hat y_{ic}}\frac{\partial \hat y_{ic}}{\partial W^{K+1}}.
\]
Now, $\frac{\partial{\mathcal L}_i}{\partial \hat y_{ic}} = g_{ic}$ by definition. If we let $\vw^{K+1}_c$ denote the $c$-th column of the $d_k \times C$ matrix $W^{K+1}$, then it is clear that $\hat y_{ic}$ depends only on $\vw^{K+1}_c$ and $\vx^K_i$. Thus, we have
\[
\frac{\partial \hat y_{ic}}{\partial W^{K+1}} = \frac{\partial \hat y_{ic}}{\partial\vw^{K+1}_c}\cdot \ve_c^\top = (\vx^K_i)^\top \ve_c^\top,
\]
where $\ve_c$ is the $c$-th canonical vector in $C$-dimensions with 1 at the $c$-th coordinate and 0 everywhere else. This gives us
\[
\frac{\partial{\mathcal L}}{\partial W^{K+1}} = \sum_{i \in \cV} (\vx^K_i)^\top \sum_{c \in [C]} g_{ic}\cdot\ve_c^\top = (X^K)^\top\vG
\]

\textbf{Case 2} ($k < K$ i.e. intermediate layers): We recall that $X^k$ stacks all $k$-th layer embeddings as an $N \times d_k$ matrix and $X^k = f(X^{k-1};E^k)$ where $E^k$ denotes the parameters (weights, offsets, scales) of the k-th layer. Thus we have
\[
\frac{\partial{\mathcal L}}{\partial E^k} = \sum_{i \in \cV}\sum_{c \in [C]}\frac{\partial{\mathcal L}}{\partial \hat y_{ic}}\frac{\partial \hat y_{ic}}{\partial E^k}.
\]
As before, $\frac{\partial{\mathcal L}}{\partial \hat y_{ic}} = g_{ic}$ by definition and we have
\[
\frac{\partial \hat y_{ic}}{\partial E^k} = \sum_{j \in \cV}\sum_{p=1}^{d_k}\frac{\partial \hat y_{ic}}{\partial X^k_{jp}}\frac{\partial X^k_{jp}}{\partial E^k}
\]
This gives us
\[
\frac{\partial{\mathcal L}}{\partial E^k} = \sum_{i \in \cV}\sum_{c \in [C]} g_{ic}\cdot\sum_{j \in \cV}\sum_{p=1}^{d_k}\frac{\partial \hat y_{ic}}{\partial X^k_{jp}}\frac{\partial X^k_{jp}}{\partial E^k} = \sum_{j \in \cV}\sum_{p=1}^{d_k}\alpha^k_{jp}\cdot\frac{\partial X^k_{jp}}{\partial E^k} = \left.\frac{\partial(\valpha^k\odot X^k)}{\partial E^k}\right|_{\valpha^k},
\]
where we used the definition of $\alpha^k_{jp}$ in the second step and used a ``conditional'' notation to get the third step. We reiterate that $\left.\frac{\partial(\valpha^k\odot X^k)}{\partial E^k}\right|_{\valpha^k}$ implies that $\valpha^k$ is fixed (treated as a constant) while taking the derivative. This ``conditioning'' is critical since $\valpha^k$ also depends on $E^k$. This concludes the proof.

\textbf{Proof of Lemma 1.2}: We consider two cases yet again and use Definition 1 that tells us that
\[
\alpha^k_{jp} = \sum_{i \in \cV}\sum_{c \in [C]} g_{ic}\cdot\frac{\partial\hat y_{ic}}{\partial X^k_{jp}}
\]

\textbf{Case 1} ($k = K$): Since $\hat y_i = (W^{K+1})^\top \vx^K_i$, we know that $\hat y_{ic}$ depends only on $\vx^K_i$ and $\vw^{K+1}_c$ where as before, $\vw^{K+1}_c$ is the $c$-th column of the matrix $W^{K+1}$. This gives us $\frac{\partial\hat y_{ic}}{\partial X^K_{jp}} = 0$ if $i \neq j$ and $\frac{\partial\hat y_{ic}}{\partial X^K_{jp}} = w^K_{pc}$ if $i = j$ where $w^K_{pc}$ is the $(p,c)$-th entry of the matrix $W^{K+1}$ (or in other words, the $p$-th coordinate of the vector $\vw^{K+1}_c$). This tells us that
\[
\alpha^K_{jp} = \sum_{i \in \cV}\sum_{c \in [C]}g_{ic}\cdot\frac{\partial\hat y_{ic}}{\partial X^K_{jp}} = \sum_{c \in [C]}g_{jc}\cdot w^K_{pc},
\]
which gives us $\valpha^K = \vG(W^{K+1})^\top$.

\textbf{Case 2} ($k < K$): By Definition 1 we have
\[
\alpha^k_{jp} = \sum_{i \in \cV}\sum_{c \in [C]}g_{ic}\cdot\frac{\partial\hat y_{ic}}{\partial X^k_{jp}} = \sum_{i \in \cV}\sum_{c \in [C]}g_{ic}\cdot\sum_{l \in \cV}\sum_{q=1}^{d_{k+1}}\frac{\partial\hat y_{ic}}{\partial X^{k+1}_{lq}}\frac{\partial X^{k+1}_{lq}}{\partial X^k_{jp}}
\]
Rearranging the terms gives us
\[
\alpha^k_{jp} = \sum_{l \in \cV}\sum_{q=1}^{d_{k+1}}\left(\sum_{i \in \cV}\sum_{c \in [C]}g_{ic}\cdot\frac{\partial\hat y_{ic}}{\partial X^{k+1}_{lq}}\right)\cdot\frac{\partial X^{k+1}_{lq}}{\partial X^k_{jp}} = \sum_{l \in \cV}\sum_{q=1}^{d_{k+1}}\alpha^{k+1}_{lq}\cdot\frac{\partial X^{k+1}_{lq}}{\partial X^k_{jp}},
\]
where we simply used Definition 1 in the second step. However, the resulting term is simply $\left.\frac{\partial(\valpha^{k+1}\odot X^{k+1})}{\partial X^k_{jp}}\right|_{\valpha^{k+1}}$ which conditions on, or treats as a constant, the term $\valpha^{k+1}$ according to our notation convention. This finishes the proof of part 2.

\subsection{Statement of Convergence Guarantee}

The rate for full-batch updates, as derived below, is $\bigO{\frac1{T^{\frac23}}}$. This \textit{fast} rate offered by full-batch updates is asymptotically superior to the $\bigO{\frac1{\sqrt T}}$ rate offered by mini-batch SGD updates. This is due to the additional variance due to mini-batch construction that mini-batch SGD variants have to incur.

\begin{theorem}[\alg Convergence (Final)]\label{thm:conv-restated}
Suppose the task loss function $\cL$ has $H$-smooth and an architecture that offers bounded gradients and Lipschitz gradients as quantified below,  then if \alg in its inverted variant (Algorithm~\ref{algo:inv}) is executed with step length $\eta$ and a staleness count of $\tau$ updates per layer as in steps 7, 10 in Algorithm~\ref{algo:inv}, then within $T$ iterations, we must have
\begin{enumerate}
	\item $\norm{\nabla\cL}_2^2 \leq \bigO{1/T^{\frac23}}$ if model update steps are carried out on the entire graph in a full-batch with step length $\eta = \bigO{1/T^{\frac13}}$ and $\tau = \bigO1$.
	\item $\norm{\nabla\cL}_2^2 \leq \bigO{1/\sqrt T}$ if model update steps are carried out using mini-batch SGD with step length $\eta = \bigO{1/\sqrt T}$ and $\tau = \bigO{T^{\frac14}}$.
\end{enumerate}
\end{theorem}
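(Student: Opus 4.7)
The plan is to run a standard non-convex smooth analysis with biased gradients, where the bias is controlled by the staleness $\tau$ and step length $\eta$. First I would write the one-step descent inequality. Since $\cL$ is $H$-smooth and the update is $\theta_{t+1}=\theta_t-\eta g_t$ where $g_t$ is the IGLU gradient for the current layer's parameters, we have $\cL(\theta_{t+1})\le\cL(\theta_t)-\eta\langle\nabla\cL(\theta_t),g_t\rangle+\tfrac{H\eta^2}{2}\|g_t\|^2$. Writing $g_t=\nabla\cL(\theta_t)+b_t$ with $b_t$ the bias arising from using stale $\valpha^k$ (inverted variant; the backprop variant is symmetric with stale $X^k$), Cauchy--Schwarz / Young's inequality and a step length $\eta\le 1/(4H)$ give the clean per-step bound $\cL(\theta_{t+1})\le\cL(\theta_t)-\tfrac{\eta}{4}\|\nabla\cL(\theta_t)\|^2+\eta\|b_t\|^2$. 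Telescoping over $T$ yields
\[
\frac{1}{T}\sum_{t=0}^{T-1}\|\nabla\cL(\theta_t)\|_2^2\;\le\;\frac{4(\cL(\theta_0)-\cL^\star)}{\eta T}\;+\;\frac{4}{T}\sum_{t=0}^{T-1}\|b_t\|_2^2.
\]

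The crux, and the main obstacle, is bounding $\|b_t\|^2$ as a function of $\eta$ and $\tau$. Lemma~\ref{lem:main} expresses $\partial\cL/\partial E^k$ as a linear functional of $\valpha^k$ with coefficients that depend only on one layer's derivatives $\partial X^k/\partial E^k$, so $b_t$ equals this linear functional applied to $\Delta\valpha^k_t:=\valpha^k_{\text{stale}}-\valpha^k_{\text{true}}$. Using the assumed Lipschitzness of the layer gradients and the boundedness of the embeddings/$\valpha$-values, I would then show by induction down the layers (the recursion $\valpha^k=\left.\partial(\valpha^{k+1}\odot X^{k+1})/\partial X^k\right|_{\valpha^{k+1}}$ from Lemma~\ref{lem:main}.2 is what makes this tractable) that $\|\Delta\valpha^k_t\|$ is at most a constant times the cumulative parameter drift since the last refresh. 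That drift is in turn $\|\theta_t-\theta_{t-\tau_t}\|\le\eta\sum_{s=t-\tau_t}^{t-1}\|g_s\|$, where $\tau_t\le\tau$. For the full-batch case, boundedness of $g_s$ gives $\|b_t\|^2\le C\eta^2\tau^2$ for some architecture-dependent constant $C$; for the mini-batch case the same bound holds in expectation, and a separate variance term $\tfrac{H\eta^2}{2}\sigma^2$ appears in the descent lemma from $\bE\|g_t\|^2=\|\nabla\cL+b_t\|^2+\sigma^2$.

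Plugging the bias bound back into the telescoped inequality gives, in the full-batch case, $\tfrac{1}{T}\sum_t\|\nabla\cL\|^2\le\cO(1/(\eta T))+\cO(\eta^2\tau^2)$, which is optimized by $\eta=\Theta(T^{-1/3}\tau^{-2/3})$; with $\tau=\cO(1)$ this yields the claimed $\cO(T^{-2/3})$ rate. For the mini-batch case we additionally carry the variance term $\cO(\eta\sigma^2)$, so the bound becomes $\cO(1/(\eta T))+\cO(\eta)+\cO(\eta^2\tau^2)$; choosing $\eta=\Theta(1/\sqrt T)$ balances the first two summands at $\cO(T^{-1/2})$, and the bias term matches this rate precisely when $\eta^2\tau^2=\cO(T^{-1/2})$, i.e.\ $\tau=\cO(T^{1/4})$, giving the claimed $\cO(T^{-1/2})$ rate. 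The only subtlety I anticipate beyond the bias bound is keeping careful track that the decomposition $g_t=\nabla\cL+b_t$ genuinely holds \emph{conditionally} on the stale parameters so that, in the mini-batch setting, a tower-property argument removes mini-batch randomness without entangling it with the staleness drift; the paper's remark that IGLU incurs no sampling bias conditioned on the stale variables is precisely what makes this clean.
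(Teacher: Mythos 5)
Your proposal is correct and, notably, takes a cleaner route than the paper to the final step. Both proofs share the same substance for the crux you identify — bounding $\|b_t\|$ by propagating the parameter drift $\|\theta_t-\theta_{t-\tau_t}\|\le\eta\tau B$ through the layer-wise recursion of Lemma~\ref{lem:main}.2, which is exactly the paper's ``Step~3'' inductive argument that produces the constant $I_k$ per layer. Where you diverge is in how the bias enters the descent inequality. You keep the bias \emph{additive}: using Young's inequality you obtain a per-step bound of the form $\cL(\theta_{t+1})\le\cL(\theta_t)-\tfrac{\eta}{4}\|\nabla\cL(\theta_t)\|^2+c\,\eta\|b_t\|^2$ and telescope directly, so the stale bias contributes a term $\bigO{\eta^2\tau^2}$ to the averaged squared gradient norm that is then balanced against $\bigO{1/(\eta T)}$ (and $\bigO{\eta\sigma^2}$ in the mini-batch case) by the choice of $\eta$ and $\tau$. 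The paper instead insists on a \emph{relative} bias bound $\|\Delta^t\|_2\le\delta\|\nabla f(\theta^t)\|_2$ so as to invoke a generic Lemma~\ref{lem:conv}, and since only an absolute bound on the bias is available from Step~3, it must wrap the whole argument in a dichotomy: either the relative bias fails at some iterate (in which case $\|\nabla\cL\|_2$ is already $\bigO{\tau\eta}$ and we are done) or it holds throughout and Lemma~\ref{lem:conv} applies. Your additive treatment avoids this case split entirely, yielding a single telescoped bound, while the paper's route has the organizational advantage of isolating a reusable biased-gradient-descent lemma. Your observation that the decomposition $g_t=\nabla\cL+b_t$ must be taken \emph{conditionally} on the stale state so mini-batch noise and staleness bias do not entangle is also the same as the paper's treatment (conditional expectations inside Lemma~\ref{lem:conv}.2). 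One minor imprecision: with the paper's bounded-gradient assumption ($\|g_s\|\le B$ for every realization), the drift and hence $\|b_t\|\le C\eta\tau$ hold deterministically, not merely in expectation; this does not affect the argument but could be stated more sharply.
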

It is curious that the above result predicts that when using mini-batch SGD, a non-trivial amount of staleness ($\tau = \bigO{T^{\frac14}}$ as per the above result) may be optimal and premature refreshing of embeddings/incomplete gradients may be suboptimal as was also seen in experiments reported in Table \ref{staleness_table} and Figure \ref{fig:staleness_ablation}. Our overall proof strategy is the following
\begin{enumerate}
	\item \textbf{Step 1}: Analyze how lazy updates in \alg affect model gradients
	\item \textbf{Step 2}: Bound the bias in model gradients in terms of staleness due to the lazy updates
	\item \textbf{Step 3}: Using various properties such as smoothness and boundedness of gradients, obtain an upper bound for the bias in the gradients in terms of number of iterations since last update
	\item \textbf{Step 4}: Use the above to establish the convergence guarantee
\end{enumerate}

We will show the results for the variant of \alg that uses the \textit{inverted} order of updates as given in Algorithm~\ref{algo:inv}. A similar proof technique will also work for the variant that uses the \textit{backprop} order of updates. Also, to avoid clutter, we will from hereon assume that a normalized total loss function is used for training i.e. $\cL = \frac1N\sum_{i \in \cV}\ell_i$ where $N = \abs\cV$ is the number of nodes in the training graph.

\subsection{Step 1: Partial Staleness and its Effect on Model Gradients}
A peculiarity of the inverted order of updates is that the embeddings $X^k, k \in [K]$ are never stale in this variant. To see this, we use a simple inductive argument. The base case of $X^0$ is obvious -- it is never stale since it is never meant to be updated. For the inductive case, notice how, the moment any parameter $E^k$ is updated in step 7 of Algorithm~\ref{algo:inv} (whether by mini-batch SGD or by full-batch GD), immediately thereafter in step 8 of the algorithm, $X^k$ is updated using the current value of $X^{k-1}$ and $E^k$. Since by induction $X^{k-1}$ never has a stale value, this shows that $X^k$ is never stale either, completing the inductive argument.

This has an interesting consequence: by Lemma~\ref{lem:main}, we have $\frac{\partial\cL}{\partial E^k} = \frac1N\left.\frac{\partial(\valpha^k\odot X^k)}{\partial E^k}\right|_{\valpha^k}$ (notice the additional $1/N$ term since we are using a normalized total loss function now). However, as $\left.\frac{\partial(\valpha^k\odot X^k)}{\partial E^k}\right|_{\valpha^k}$ is completely defined given $E^k, \valpha^k$ and $X^{k-1}$ and by the above argument, $X^{k-1}$ never has a stale value. This shows that the only source of staleness in $\frac{\partial\cL}{\partial E^k}$ is the staleness in values of the incomplete task gradient $\valpha^k$. Similarly, it is easy to see that the only source of staleness in $\frac{\partial\cL}{\partial W^{K+1}} = (X^K)^\top\vG$ is the staleness in $\vG$.

The above argument is easily mirrored for the backprop order of updates where an inductive argument similar to the one used to argue above that $X^k$ values are never stale in the inverted update variant, would show that the incomplete task gradient $\valpha^k$ values are never stale in the backprop variant and the only source of staleness in $\frac{\partial\cL}{\partial E^k}$ would then be the staleness in the $X^k$ values.

\subsection{Step 2: Relating the Bias in Model Gradients to Staleness}
The above argument allows us to bound the bias in model gradients as a result of lazy updates. To avoid clutter, we will present the arguments with respect to the $E^k$ parameter. Similar arguments would hold for the $W^{K+1}$ parameter as well. Let $\tilde\valpha^k, \valpha^k$ denote the stale and actual values of the incomplete task gradient relevant to $E^k$. Let $\frac{\widetilde{\partial\cL}}{\partial E^k} = \left.\frac{\partial(\tilde\valpha^k\odot X^k)}{\partial E^k}\right|_{\tilde\valpha^k}$ be the stale gradients used by \alg in its inverted variant to update $E^k$ and similarly let $\frac{\partial\cL}{\partial E^k} = \left.\frac{\partial(\valpha^k\odot X^k)}{\partial E^k}\right|_{\valpha^k}$ be the true gradient that could have been used had there been no staleness. 

We will abuse notation and let the vectorized forms of these incomplete task gradients be denoted by the same symbols i.e. we stretch the matrix $\valpha^k \in \bR^{N\times d_k}$ into a long vector denoted also by $\valpha^k \in \bR^{N\cdot d_k}$. Let $\dim(E^k)$ denote the number of dimensions in the model parameter $E^k$ (recall that $E^k$ can be a stand-in for weight matrices, layer norm parameters etc used in layer $k$ of the GCN). Similarly, we let $Z^k_{jp} \in \bR^{\dim(E^k)}, j \in [N], p \in [d_k]$ denote the vectorized form of the gradient $\frac{\partial X^k_{ip}}{\partial E^k}$ and let $Z^k \in \bR^{N\cdot d_k \times \dim(E^k)}$ denote the matrix with all these vectors $Z^k_{jp}$ stacked up.

As per the above notation, it is easy to see that the vectorized form of the model gradient is given by
\[
\frac{\partial\cL}{\partial E^k} = \frac{(Z^k)^\top\valpha^k}N \in \bR^{\dim(E^k)}
\]
The $1/N$ term appears since we are using a normalized total loss function. This also tells us that
\begin{equation}
    \label{eq:step1}
    \norm{\frac{\widetilde{\partial\cL}}{\partial E^k} - \frac{\partial\cL}{\partial E^k}}_2 = \frac{\sqrt{(\tilde\valpha^k - \valpha^k)^\top(Z^k(Z^k)^\top)(\tilde\valpha^k - \valpha^k)}}N \leq \frac{\norm{\tilde\valpha^k - \valpha^k}_2\cdot\sigma_{\max}(Z^k)}N,
\end{equation}
where $\sigma_{\max}(Z^k)$ is the largest singular value of the matrix $Z^k$.

\subsection{Step 3: Smoothness and Bounded Gradients to Bound Gradient Bias}
The above discussion shows how to bound the bias in gradients in terms of staleness in the incomplete task gradients. However, to utilize this relation, we assume that the loss function $\cL$ is $H$-smooth which is a standard assumption in literature. We will also assume that the network offers bounded gradients. Specifically, for all values of model parameters $E^k, k \in [K], W^{K+1}$ we have $\norm\vG_2, \norm{\frac{\partial X^k_{ip}}{\partial E^k}}_2, \norm{\frac{\partial\cL}{\partial E^k}}_2, \norm{\frac{\partial X^{k+1}}{\partial X^k}}_2, \norm{\valpha^k}_2 \leq B$. For sake of simplicity, we will also assume the same bound on parameters e.g. $\norm{\vW^K}_2 \leq B$. Assuming bounded gradients and bounded parameters is also standard in literature. However, whereas works such as \cite{vrgcn} assume bounds on the sup-norm i.e. $L_\infty$ norm of the gradients, our proofs only require an $L_2$ norm bound.

We will now show that if the model parameters $\tilde E^k, k \in [K], \tilde W^{K+1}$ undergo gradient updates to their new values $E^k, k \in [K], W^{K+1}$ and the amount of \emph{travel} is bounded by $r > 0$ i.e. $\norm{\tilde E^k - E^k}_2 \leq r$, then we have $\norm{\tilde\valpha^k - \valpha^k}_2 \leq I_k \cdot r$ where $\tilde\valpha^k, \valpha^k$ are the incomplete task gradients corresponding to respectively old and new model parameter values and $I_k$ depends on various quantities such as the smoothness parameter and the number of layers in the network.

Lemma~\ref{lem:main} (part 2) tells us that for the final layer, we have $\valpha^K = \vG(W^{K+1})^\top$ as well as for any $k < K$, we have $\valpha^k = \left.\frac{\partial(\valpha^{k+1}\odot X^{k+1})}{\partial X^k}\right|_{\valpha^{k+1}}$. We analyze this using an inductive argument.
\begin{enumerate}
    \item Case 1: $k = K$ (Base Case): In this case we have
    \begin{align*}
        \tilde\valpha^K - \valpha^K &= \tilde\vG(\tilde W^{K+1})^\top - \vG(W^{K+1})^\top\\
        &= (\tilde\vG - \vG)(\tilde W^{K+1})^\top + \vG(\tilde W^{K+1} - W^{K+1})^\top
    \end{align*}
    Now, the travel condition tells us that $\norm{\tilde W^{K+1} - W^{K+1}}_2 \leq r$, boundedness tells us that $\norm{\vG}_2, \norm{\tilde\vW^{K+1}}_2 \leq B$. Also, since the loss function is $H$-smooth, the task gradients are $H$-Lipschitz which implies along with the travel condition that $\norm{\tilde\vG - \vG}_2 \leq H\cdot r$. Put together this tells us that
    \[
    \norm{\tilde\valpha^K - \valpha^K}_2 \leq (H+1)B\cdot r,
    \]
    telling us that $I_K \leq (H+1)B$.
    \item Case 2: $k < K$ (Inductive Case): In this case, let $\tilde X^k$ and $X^k$ denote the embeddings with respect to the old and new parameters respectively. Then we have
    \begin{align*}
        \tilde\valpha^k - \valpha^k =& \left.\frac{\partial(\tilde\valpha^{k+1}\odot \tilde X^{k+1})}{\partial\tilde X^k}\right|_{\tilde\valpha^{k+1}} - \left.\frac{\partial(\valpha^{k+1}\odot X^{k+1})}{\partial X^k}\right|_{\valpha^{k+1}}\\
        =& \underbrace{\left.\frac{\partial(\tilde\valpha^{k+1}\odot \tilde X^{k+1})}{\partial\tilde X^k}\right|_{\tilde\valpha^{k+1}} - \left.\frac{\partial(\valpha^{k+1}\odot \tilde X^{k+1})}{\partial\tilde X^k}\right|_{\valpha^{k+1}}}_{(P)}\\
        &{}+ \underbrace{\left.\frac{\partial(\valpha^{k+1}\odot \tilde X^{k+1})}{\partial\tilde X^k}\right|_{\valpha^{k+1}} - \left.\frac{\partial(\valpha^{k+1}\odot X^{k+1})}{\partial X^k}\right|_{\valpha^{k+1}}}_{(Q)}
    \end{align*}
    By induction we have $\norm{\tilde\valpha^{k+1} - \valpha^{k+1}}_2 \leq I_{k+1}\cdot r$ and bounded gradients tells us $\norm{\frac{\partial \tilde X^{k+1}}{\partial\tilde X^k}}_2 \leq B$ giving us $\norm{(P)}_2 \leq I_{k+1}B\cdot r$. To analyze the term $\norm{(Q)}_2$, recall that we have $X^{k+1} = f(X^k; E^{k+1})$ and since the overall task loss is $H$ smooth, so must be the function $f$. Abusing notation to let $H$ denote the Lipschitz constant of the network gives us $\norm{\tilde X^k - X^k}_2 \leq H \cdot r$. Then we have
    \begin{align*}
        \frac{\partial(\tilde X^{k+1})}{\partial\tilde X^k} - \frac{\partial X^{k+1}}{\partial X^k} &= f'(\tilde X^k; \tilde E^{k+1}) - f'(X^k; E^{k+1})\\
        &= \underbrace{f'(\tilde X^k; \tilde E^{k+1}) - f'(X^k; \tilde E^{k+1})}_{(M)} + \underbrace{f'(X^k; \tilde E^{k+1}) - f'(X^k; E^{k+1})}_{(N)}
    \end{align*}
    Applying smoothness, we have $\norm{(M)}_2 \leq H^2\cdot r$ as well as $\norm{(N)}_2 \leq H\cdot r$. Together with bounded gradients that gives us $\norm{\valpha^{k+1}}_2 \leq B$, we have $\norm{(Q)}_2 \leq BH(H+1)\cdot r$. Together, we have
    \[
    \norm{\tilde\valpha^k - \valpha^k}_2 \leq B(H(H+1) + I_{k+1})\cdot r,
    \]
    telling us that $I_k \leq B(H(H+1) + I_{k+1})$.
\end{enumerate}

The above tells us that in Algorithm~\ref{algo:inv}, suppose the parameter update steps i.e. step 7 and step 10 are executed by effecting $\tau$ mini-batch SGD steps or else $\tau$ full-batch GD steps each time, with step length $\eta$, then the amount of travel in any model parameter is bounded above by $\tau\eta B$ i.e. $\norm{\tilde E^k - E^k}_2 \leq \tau\eta B$ and so on. Now, incomplete task gradients $\valpha^k$ are updated only after model parameters for all layers have been updated once and the algorithm loops back to step 6. Thus, Lipschitzness of the gradients tells us that the staleness in the incomplete task gradients, for any $k \in [K]$, is upper bounded by
\[
\norm{\tilde\valpha^k - \valpha^k}_2 \leq \tau\eta\cdot IB,
\]
where we take $I = \max_{k \leq K}I_k$ for sake of simplicity. Now, since $\norm{\frac{\partial X^k_{ip}}{\partial E^k}}_2 \leq B$ as gradients are bounded, we have $\sigma_{\max}(Z^k) \leq N\cdot d_k B$. Then, combining with the result in Equation~\eqref{eq:step1} gives us
\[
\norm{\frac{\widetilde{\partial\cL}}{\partial E^k} - \frac{\partial\cL}{\partial E^k}}_2 \leq \tau\eta\cdot IB^2d_k
\]
Taking in contributions of gradients of all layers and the final classifier layer gives us the bias in the total gradient using triangle inequality as
\[
\norm{\widetilde{\nabla\cL} - \nabla\cL}_2 \leq \sum_{k \in [K]}\norm{\frac{\widetilde{\partial\cL}}{\partial E^k} - \frac{\partial\cL}{\partial E^k}}_2 + \norm{\frac{\widetilde{\partial\cL}}{\partial W^{K+1}} - \frac{\partial\cL}{\partial W^{K+1}}}_2 \leq \tau\eta\cdot IB^2d_{\max}(K+1),
\]
where $d_{\max} = \max_{k \in K}\ d_k$ is the maximum embedding dimensionality of any layer.

\subsection{Step 4 (i): Convergence Guarantee (mini-batch SGD)}
Let us analyze convergence in the case when updates are made using mini-batch SGD in steps 7, 10 of Algorithm~\ref{algo:inv}. The discussion above establishes an upper bound on the \emph{absolute} bias in the gradients. However, our proofs later require a \emph{relative} bound which we tackle now. Let us decide to set the step length to $\eta = \frac1{C\sqrt T}$ for some constant $C > 0$ that will be decided later and also set some value $\phi < 1$. Then two cases arise
\begin{enumerate}
	\item \textbf{Case 1}: The relative gradient bias is too large i.e. $\tau\eta\cdot IB^2d_{\max}(K+1) > \phi\cdot\norm{\nabla\cL}_2$. In this case we are actually done since we get
	\[
	\norm{\nabla\cL}_2 \leq \frac{\tau\cdot IB^2d_{\max}(K+1)}{\phi\cdot C\sqrt T},
	\]
	i.e. we are already at an approximate first-order stationary point.
	\item \textbf{Case 2}: The relative gradient bias is small i.e. $\tau\eta\cdot IB^2d_{\max}(K+1) \leq \phi\cdot\norm{\nabla\cL}_2$. In this case we satisfy the relative bias bound required by Lemma~\ref{lem:conv} (part 2) with $\delta = \phi$.
\end{enumerate}
This shows that either Case 1 happens in which case we are done or else Case 2 keeps applying which means that Lemma~\ref{lem:conv} (part 2) keeps getting its prerequisites satisfied. If Case 1 does not happen for $T$ steps, then Lemma~\ref{lem:conv} (part 2) assures us that we will arrive at a point where
\[
\E{\norm{\nabla\cL}_2^2} \leq \frac{2C(\cL^0 - \cL^\ast) + H\sigma^2/C}{(1-\phi)\sqrt T}
\]
where $\cL^0, \cL^\ast$ are respectively the initial and optimal values of the loss function which we recall is $H$-smooth and $\sigma^2$ is the variance due to mini-batch creation and we set $\eta = \frac1{C\sqrt T}$. Setting $C = \sqrt{\frac{H\sigma^2}{2(\cL^0 - \cL^\ast)}}$ tells us that within $T$ steps, either we will achieve
\[
\norm{\nabla\cL}_2^2 \leq \frac{2\tau^2\cdot I^2B^4d_{\max}^2(K+1)^2(\cL^0 - \cL^\ast)}{\phi^2H\sigma^2T},
\]
or else we will achieve
\[
\E{\norm{\nabla\cL}_2^2} \leq \frac{2\sigma\sqrt{2(\cL^0 - \cL^\ast)H}}{(1-\phi)\sqrt T}
\]
Setting $\tau = T^{\frac14}\cdot\br{\frac{\sqrt{\sigma^3H\sqrt H}}{(\cL^0 - \cL^\ast)^{\frac14}IB^2d_{\max}(K+1)}}$ balances the two quantities in terms of their dependence on $T$ (module absolute constants such as $\phi$). Note that as expected, as the quantities $I, B, d_{\max}, K$ increase, the above limit on $\tau$ goes down i.e. we are able to perform fewer and fewer updates to the model parameters before a refresh is required. This concludes the proof of Theorem~\ref{thm:conv-restated} for the second case.

\subsection{Step 4 (ii): Convergence Guarantee (full-batch GD)}
In this case we similarly have either the relative gradient bias to be too large in which case we get
\[
\norm{\nabla\cL}_2 \leq \frac{\tau\eta\cdot IB^2d_{\max}(K+1)}\phi,
\]
or else we satisfy the relative bias bound required by Lemma~\ref{lem:conv} (part 1) with $\delta = \phi$. This shows that either Case 1 happens in which case we are done or else Case 2 keeps applying which means that Lemma~\ref{lem:conv} (part 1) keeps getting its prerequisites satisfied. If Case 1 does not happen for $T$ steps, then Lemma~\ref{lem:conv} (part 1) assures us that we will arrive at a point where
\[
\norm{\nabla\cL}_2^2 \leq \frac{2(\cL^0 - \cL^\ast)}{\eta(1-\phi)T}
\]
In this case, for a given value of $\tau$, setting $\eta = \br{\frac{2(\cL^0 - \cL^\ast)}{(1-\phi)\tau^2I^2B^4d_{\max}^2(K+1)^2T}}^{\frac13}$ gives us that within $T$ iterations, we must achieve
\[
\norm{\nabla\cL}_2^2 \leq \br{\frac{2\tau(\cL^0 - \cL^\ast)IB^2d_{\max}(K+1)}{(1-\phi)}}^{\frac23}\cdot\frac1{T^{\frac23}}
\]
In this case, it is prudent to set $\tau = \bigO1$ so as to not deteriorate the convergence rate. This concludes the proof of Theorem~\ref{thm:conv-restated} for the first case.

\subsection{Generic Convergence Results}
\begin{lemma}[First-order Stationarity with a Smooth Objective]
\label{lem:conv}
Let $f: \vTheta \rightarrow \bR$ be an $H$-smooth objective over model parameters $\theta \in \vTheta$ that is being optimized using a gradient oracle and the following update for step length $\eta$:
\[
\vtheta^{t+1} = \vtheta^t - \eta\cdot\vg^t
\]
Let $\vtheta^\ast$ be an optimal point i.e. $\vtheta^\ast \in \arg\min_{\vtheta \in \vTheta}\ f(\vtheta)$. Then, the following results hold depending on the nature of the gradient oracle:
\begin{enumerate}
	\item If a non-stochastic gradient oracle with bounded bias is used i.e. for some $\delta \in (0,1)$, for all $t$, we have $\vg^t = \nabla f(\vtheta^t) + \vDelta^t$ where $\norm{\vDelta^t}_2 \leq \delta\cdot\norm{\nabla f(\vtheta^t)}_2$ and if the step length satisfies $\eta \leq \frac{(1-\delta)}{2H(1+\delta^2)}$, then for any $T > 0$, for some $t \leq T$ we must have
	\[
	\norm{\nabla f(\vtheta^t)}_2^2 \leq \frac{2(f(\vtheta^0) - f(\vtheta^\ast))}{\eta(1-\delta)T}
	\]
	\item If a stochastic gradient oracle is used with bounded bias i.e. for some $\delta \in (0,1)$, for all $t$, we have $\E{\cond{\vg^t}\vtheta^t} = \nabla f(\vtheta^t) + \vDelta^t$ where $\norm{\vDelta^t}_2 \leq \delta\cdot\norm{\nabla f(\vtheta^t)}_2$, as well as bounded variance i.e. for all $t$, we have $\E{\cond{\norm{\vg^t - \nabla f(\vtheta^t) - \vDelta^t}_2^2}\vtheta^t} \leq \sigma^2$ and if the step length satisfies $\eta \leq \frac{(1-\delta)}{2H(1+\delta^2)}$, as well as $\eta = \frac1{C\sqrt T}$ for some $C > 0$, then for any $T > \frac{4H^2(1+\delta^2)^2}{C^2(1-\delta)^2}$, for some $t \leq T$ we must have
	\[
	\E{\norm{\nabla f(\vtheta^t)}_2^2} \leq \frac{2C(f(\vtheta^0) - f(\vtheta^\ast)) + H\sigma^2/C}{(1-\delta)\sqrt T}
	\]
\end{enumerate}
\end{lemma}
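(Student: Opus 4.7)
The plan is to prove both parts using the standard $H$-smoothness descent inequality as the workhorse, combined with careful bookkeeping to handle the biased gradient term. Starting from $H$-smoothness we have the one-step bound $f(\vtheta^{t+1}) \leq f(\vtheta^t) - \eta\langle\nabla f(\vtheta^t), \vg^t\rangle + \frac{H\eta^2}{2}\norm{\vg^t}_2^2$, and I would control the cross term and the squared-norm term using the relative bias assumption $\norm{\vDelta^t}_2 \leq \delta\norm{\nabla f(\vtheta^t)}_2$. Abbreviate $\vn^t := \nabla f(\vtheta^t)$.

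For Part 1 (deterministic), substitute $\vg^t = \vn^t + \vDelta^t$ and use Cauchy-Schwarz on the cross term to get $\langle\vn^t, \vg^t\rangle \geq (1-\delta)\norm{\vn^t}_2^2$, and use Young's inequality $\norm{\vg^t}_2^2 \leq 2\norm{\vn^t}_2^2 + 2\norm{\vDelta^t}_2^2 \leq 2(1+\delta^2)\norm{\vn^t}_2^2$. Plugging back, the descent inequality becomes $f(\vtheta^{t+1}) \leq f(\vtheta^t) - \eta\bigl[(1-\delta) - H\eta(1+\delta^2)\bigr]\norm{\vn^t}_2^2$. The step length restriction $\eta \leq \frac{1-\delta}{2H(1+\delta^2)}$ is precisely what is needed for the bracket to be at least $(1-\delta)/2$, yielding monotone decrease of at least $\frac{\eta(1-\delta)}{2}\norm{\vn^t}_2^2$. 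Telescoping from $0$ to $T-1$ and using $f(\vtheta^T) \geq f(\vtheta^\ast)$ gives $\sum_{t<T}\norm{\vn^t}_2^2 \leq \frac{2(f(\vtheta^0)-f(\vtheta^\ast))}{\eta(1-\delta)}$, and taking the minimum over $t$ closes Part 1.

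For Part 2 (stochastic), I would take conditional expectation $\E{\cdot \mid \vtheta^t}$ of the same descent inequality. The cross term gives $\E{\cond{\langle\vn^t,\vg^t\rangle}\vtheta^t} = \langle\vn^t, \vn^t + \vDelta^t\rangle \geq (1-\delta)\norm{\vn^t}_2^2$ by the same Cauchy-Schwarz argument. For the squared-norm term, use the bias-variance decomposition $\E{\cond{\norm{\vg^t}_2^2}\vtheta^t} = \norm{\vn^t+\vDelta^t}_2^2 + \E{\cond{\norm{\vg^t - \vn^t - \vDelta^t}_2^2}\vtheta^t} \leq 2(1+\delta^2)\norm{\vn^t}_2^2 + \sigma^2$. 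The same step length constraint then yields $\E{\cond{f(\vtheta^{t+1})}\vtheta^t} \leq f(\vtheta^t) - \frac{\eta(1-\delta)}{2}\norm{\vn^t}_2^2 + \frac{H\eta^2\sigma^2}{2}$. Taking tower expectations, telescoping, dividing by $T$ and taking the minimum over $t$, then substituting $\eta = 1/(C\sqrt T)$ produces $\frac{2C(f(\vtheta^0)-f(\vtheta^\ast)) + H\sigma^2/C}{(1-\delta)\sqrt T}$ as required; the threshold $T > 4H^2(1+\delta^2)^2/(C^2(1-\delta)^2)$ is exactly what ensures $1/(C\sqrt T) \leq (1-\delta)/(2H(1+\delta^2))$ so that the step length hypothesis is met.

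The only mildly delicate step is the tightness of constants: naively bounding $\norm{\vg^t}_2 \leq (1+\delta)\norm{\vn^t}_2$ and squaring gives $(1+\delta)^2$, which is worse than the $2(1+\delta^2)$ factor appearing in the step-length hypothesis. Routing through Young's inequality $\norm{a+b}_2^2 \leq 2\norm{a}_2^2 + 2\norm{b}_2^2$ instead is what matches the stated constant, so I would be careful to use that expansion throughout. Beyond this constant-matching, the argument is entirely standard smooth nonconvex descent analysis adapted to a biased oracle, and no other genuine obstacle is anticipated.
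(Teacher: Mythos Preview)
Your proposal is correct and follows essentially the same route as the paper: smoothness descent inequality, Cauchy--Schwarz on the cross term to get $(1-\delta)\norm{\nabla f(\vtheta^t)}_2^2$, the bound $\norm{\nabla f(\vtheta^t)+\vDelta^t}_2^2 \leq 2(1+\delta^2)\norm{\nabla f(\vtheta^t)}_2^2$, and in the stochastic case the bias--variance split before telescoping and averaging. One small remark: your aside that $(1+\delta)^2$ is ``worse'' than $2(1+\delta^2)$ has it backwards---$(1+\delta)^2 \leq 2(1+\delta^2)$ for all $\delta$, so the triangle-inequality route is actually tighter; you are simply choosing the looser Young form to match the constant appearing in the stated step-length hypothesis, which is fine.
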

\begin{proof}[Proof (of Lemma~\ref{lem:conv})]
To prove the first part, we notice that smoothness of the objective gives us
\[
f(\vtheta^{t+1}) \leq f(\vtheta^t) + \ip{\nabla f(\vtheta^t)}{\vtheta^{t+1} - \vtheta^t} + \frac H2\norm{\vtheta^{t+1} - \vtheta^t}_2^2
\]
Since we used the update $\vtheta^{t+1} = \vtheta^t - \eta\cdot\vg^t$ and we have $\vg^t = \nabla f(\vtheta^t) + \vDelta^t$, the above gives us
\begin{align*}
	f(\vtheta^{t+1}) &\leq f(\vtheta^t) - \eta\cdot\ip{\nabla f(\vtheta^t)}{\vg^t} + \frac{H\eta^2}2\norm{\vg^t}_2^2\\
	&= f(\vtheta^t) - \eta\cdot\ip{\nabla f(\vtheta^t)}{\nabla f(\vtheta^t) + \vDelta^t} + \frac{H\eta^2}2\br{\norm{\nabla f(\vtheta^t) + \vDelta^t}_2^2}\\
	&= f(\vtheta^t) - \eta\cdot\norm{\nabla f(\vtheta^t)}_2^2 - \eta\cdot\ip{\nabla f(\vtheta^t)}{\vDelta^t} + \frac{H\eta^2}2\br{\norm{\nabla f(\vtheta^t) + \vDelta^t}_2^2}
\end{align*}
Now, the Cauchy-Schwartz inequality along with the bound on the bias gives us $- \eta\cdot\ip{\nabla f(\vtheta^t)}{\vDelta^t} \leq \eta\delta\cdot\norm{\nabla f(\vtheta^t)}_2^2$ as well as $\norm{\nabla f(\vtheta^t) + \vDelta^t}_2^2 \leq 2(1+\delta^2)\cdot\norm{\nabla f(\vtheta^t)}_2^2$. Using these gives us
\begin{align*}
	f(\vtheta^{t+1}) &\leq f(\vtheta^t) - \eta(1 - \delta - \eta H(1+\delta^2))\cdot\norm{\nabla f(\vtheta^t)}_2^2\\
									 &\leq f(\vtheta^t) - \frac{\eta(1 - \delta)}2\cdot\norm{\nabla f(\vtheta^t)}_2^2,
\end{align*}
since we chose $\eta \leq \frac{(1-\delta)}{2H(1+\delta^2)}$. Reorganizing, taking a telescopic sum over all $t$, using $f(\vtheta^{T+1}) \geq f(\vtheta^\ast)$ and making an averaging argument tells us that since we set , for any $T > 0$, it must be the case that for some $t \leq T$, we have
\[
\norm{\nabla f(\vtheta^t)}_2^2 \leq \frac{2(f(\vtheta^0) - f(\vtheta^\ast))}{\eta(1-\delta)T}
\]
This proves the first part. For the second part, we yet again invoke smoothness to get
\[
f(\vtheta^{t+1}) \leq f(\vtheta^t) + \ip{\nabla f(\vtheta^t)}{\vtheta^{t+1} - \vtheta^t} + \frac H2\norm{\vtheta^{t+1} - \vtheta^t}_2^2
\]
Since we used the update $\vtheta^{t+1} = \vtheta^t - \eta\cdot\vg^t$, the above gives us
\begin{align*}
	f(\vtheta^{t+1}) &\leq f(\vtheta^t) - \eta\cdot\ip{\nabla f(\vtheta^t)}{\vg^t} + \frac{H\eta^2}2\norm{\vg^t}_2^2\\
	&= f(\vtheta^t) - \eta\cdot\ip{\nabla f(\vtheta^t)}{\vg^t} + \frac{H\eta^2}2\br{\norm{\nabla f(\vtheta^t) + \vDelta^t}_2^2 + \norm{\vg^t - \nabla f(\vtheta^t) - \vDelta^t}_2^2}\\
	&\quad -H\eta^2\ip{\nabla f(\vtheta^t) + \vDelta^t}{\vg^t - \nabla f(\vtheta^t) - \vDelta^t}
\end{align*}
Taking conditional expectations on both sides gives us
\[
\E{\cond{f(\vtheta^{t+1})}\vtheta^t} \leq f(\vtheta^t) - \eta\cdot\norm{\nabla f(\vtheta^t)}_2^2 - \eta\cdot\ip{\nabla f(\vtheta^t)}{\vDelta^t} + \frac{H\eta^2}2\br{\norm{\nabla f(\vtheta^t) + \vDelta^t}_2^2 + \sigma^2}
\]
Now, the Cauchy-Schwartz inequality along with the bound on the bias gives us $- \eta\cdot\ip{\nabla f(\vtheta^t)}{\vDelta^t} \leq \eta\delta\cdot\norm{\nabla f(\vtheta^t)}_2^2$ as well as $\norm{\nabla f(\vtheta^t) + \vDelta^t}_2^2 \leq 2(1+\delta^2)\cdot\norm{\nabla f(\vtheta^t)}_2^2$. Using these and applying a total expectation gives us
\begin{align*}
	\E{f(\vtheta^{t+1})} &\leq \E{f(\vtheta^t)} - \eta(1 - \delta - \eta H(1+\delta^2))\cdot\E{\norm{\nabla f(\vtheta^t)}_2^2} + \frac{H\eta^2}2\cdot\sigma^2\\
											 &\leq \E{f(\vtheta^t)} - \frac{\eta(1 - \delta)}2\cdot\E{\norm{\nabla f(\vtheta^t)}_2^2} + \frac{H\eta^2}2\cdot\sigma^2
\end{align*}
where the second step follows since we set $\eta \leq \frac{(1-\delta)}{2H(1+\delta^2)}$. Reorganizing, taking a telescopic sum over all $t$, using $f(\vtheta^{T+1}) \geq f(\vtheta^\ast)$ and making an averaging argument tells us that for any $T > 0$, it must be the case that for some $t \leq T$, we have
\[
\E{\norm{\nabla f(\vtheta^t)}_2^2} \leq \frac{2(f(\vtheta^0) - f(\vtheta^\ast))}{\eta(1-\delta)T} + \frac{H\eta\sigma^2}{1-\delta}
\]
However, since we also took care to set $\eta = \frac1{C\sqrt T}$, we get
\[
\E{\norm{\nabla f(\vtheta^t)}_2^2} \leq \frac{2C(f(\vtheta^0) - f(\vtheta^\ast)) + H\sigma^2/C}{(1-\delta)\sqrt T}
\]
which proves the second part and finishes the proof.
\end{proof}

\end{document}